\theoremstyle{plain}
\newtheorem{theorem}{Theorem}[section]
\newtheorem{lemma}[theorem]{Lemma}
\theoremstyle{definition}
\theoremstyle{remark}
\newtheorem{remark}[theorem]{Remark}
\newcommand{\mathify}[1]{\ensuremath{#1}\xspace}
\newcommand{\fs}{\mathify{\mathcal{X}}} 
\newcommand{\ls}{\mathify{\mathcal{Y}}} 
\newcommand{\dd}{\mathify{\mathcal{D}}} 
\newcommand{\feat}{\mathify{x}} 
\newcommand{\R}{\mathify{\mathbb{R}}} 
\newcommand{\eps}{\mathify{\epsilon}}
\newcommand{\Var}{\operatorname{Var}} 
\DeclareMathOperator*{\argmin}{arg\,min}
\newcommand{\wh}{\widehat h}
\newcommand{\whs}{\widehat h^\star}
\newcommand{\hs}{h^\star}
\newcommand{\talpha}{\widetilde{\alpha}}
\newcommand{\tE}{\widetilde{\E}}
\newcommand{\bh}{\mathcal{E}}
\newcommand{\regret}{\mathcal{R}}
\newcommand{\bag}{\mathcal{B}}
\newcommand{\cX}{\mathcal{X}}
\newcommand{\cY}{\mathcal{Y}}
\newcommand{\cZ}{\mathcal{Z}}
\newcommand{\popl}{\mathcal{L}}
\newcommand{\hyps}{\mathcal{H}}
\newcommand{\cD}{\mathcal{D}}
\newcommand{\Rset}{\mathbb{R}}
\newcommand{\full}[1]{\ifnum \FULL=1{#1}\fi}
\newcommand{\short}[1]{\ifnum \FULL=0{#1}\fi}
\DeclareMathOperator*{\E}{\mathbb{E}}
\DeclareMathOperator*{\PP}{\mathbb{P}}
\newcommand{\wt}{\widetilde}
\DeclarePairedDelimiter{\br}{(}{)}
\DeclarePairedDelimiter{\sbr}{[}{]}
\DeclarePairedDelimiter{\cbr}{\{}{\}}
\DeclarePairedDelimiter{\norm}{\|}{\|}
\icmltitlerunning{Nearly Optimal Sample Complexity for LLP}
\begin{document}

\twocolumn[
\icmltitle{Nearly Optimal Sample Complexity for Learning with Label Proportions}

\icmlsetsymbol{equal}{*}

\begin{icmlauthorlist}
\icmlauthor{Robert Busa-Fekete}{equal,yyy}
\icmlauthor{Travis Dick}{equal,yyy}
\icmlauthor{Claudio Gentile}{equal,yyy}
\icmlauthor{Haim Kaplan}{equal,yyy,comp}
\icmlauthor{Tomer Koren}{equal,yyy,comp}
\icmlauthor{Uri Stemmer}{equal,yyy,comp}
\end{icmlauthorlist}

\icmlaffiliation{yyy}{Google Research, New York, USA}
\icmlaffiliation{comp}{Tel Aviv University, Israel}

\icmlcorrespondingauthor{Claudio Gentile}{cgentile@google.com}
\icmlcorrespondingauthor{Travis Dick}{tdick@google.com}

\icmlkeywords{Machine Learning, ICML}

\vskip 0.3in
]



\printAffiliationsAndNotice{\icmlEqualContribution} 

\begin{abstract}
We investigate Learning from Label Proportions (LLP), a partial information setting where examples in a training set are grouped into bags, and only aggregate label values in each bag are available. Despite the partial observability, the goal is still to achieve small regret at the level of individual examples. We give results on the sample complexity of LLP under square loss, showing that our sample complexity is essentially optimal.
From an algorithmic viewpoint, we rely on carefully designed variants of Empirical Risk Minimization, and Stochastic Gradient Descent algorithms, combined with ad hoc variance reduction techniques. On one hand, our theoretical results improve in important ways on the existing literature on LLP, specifically in the way the sample complexity depends on the bag size. On the other hand, we validate our  algorithmic solutions on several datasets, demonstrating improved empirical performance (better accuracy for less samples)  against recent baselines. 
\end{abstract}

\section{Introduction}\label{s:intro}
Learning with Label Proportions (LLP) is a semi-supervised learning setting where the learning algorithm only observes label values of a training set at the resolution of groups of feature vectors. Specifically, the training set is partitioned into collections of unlabeled feature vectors, called {\em bags} in the literature, and only the proportion of positive labels within each bag are observed, instead of the individual labels. The motivation comes from a number of practical scenarios where the access to individual labels is either costly or impossible to achieve, or made available to a learner at an aggregate level because of privacy-preserving concerns.

Among these practical scenarios, it is worth mentioning e-commerce, medical databases \cite{pat+14}, high energy physics \cite{dery+18}, election prediction \cite{sun+17}, medical image analysis \cite{bor+18}, remote sensing \cite{ding+17}.

Though early contributions to LLP date back to the mid 2000s (e.g., \cite{dfk05,mco07,QuadriantoSCL08}), it is only more recently that we have seen a general resurgence of interest in this problem~(e.g., \cite{dulac2019deep,lu+19,ScottZ20,NEURIPS2021_33b3214d,52071,tl20,lu+21,z+22,10.5555/3666122.3666778,l+24,h+24}), often related to the desire of preserving the privacy of user data/activity in data-intensive online businesses. 

Relevant practical scenarios where LLP has started to play a pivotal role relate to the ad conversion reporting systems proposed by Apple (SKAN, \cite{appleskan}) and Google Chrome and Android (Privacy Sandbox \cite{chrome3pcd}), where user conversions (like user purchases) are only available at the aggregate level (e.g., by ad campaign), and training a conversion model is thus forced to leverage such aggregate signals as much as possible. Still, the models trained this way are required to perform well at the level of {\em individual} conversions.

This paper is inspired by two recently published works in the LLP literature \cite{10.5555/3666122.3666778,l+24}, where regret guarantees are given under standard statistical learning assumptions for randomly drawn (and non-overlapping) bags of a given size. In the first paper, the authors describe a general debiasing technique that applies to any bounded loss function (or loss gradient) to turn aggregate label signal into an individual label signal at the cost of an inflated variance of the resulting estimator. This allows the authors to seamlessly adapt their technique to standard learning methods like Empirical Risk Minimization (ERM) and Stochastic Gradient Descent (SGD). In particular, the authors give regret guarantees in general non-realizable scenarios showing that when the learner deals with bags of size $k$, the sample complexity of ERM and SGD with label proportions increases by a factor of roughly $k^2$. 

In \cite{l+24}, the authors improve upon \cite{10.5555/3666122.3666778} when the loss function is the square loss, in that their bounds yield fast rates of convergence in realizable settings (unlike \cite{10.5555/3666122.3666778}). Yet, this is achieved at the cost of a widely sub-optimal dependence on the bag size $k$.
In this paper, we leverage variance reduction techniques to achieve essentially optimal (up to log factors) sample complexity guarantees for LLP under square loss for both ERM and SGD. For instance, a quick comparison to \cite{10.5555/3666122.3666778,l+24} for square loss reveals the following. Call $\beta > 0$ the target regret guarantee. Then our sample complexity bounds are of the form $\frac{k}{\beta^2}$ in the non-realizable case, and $\frac{k}{\beta}$ in the realizable case. In contrast, \cite{10.5555/3666122.3666778} only contains non-realizable (i.e., slow rate) guarantees, which are of the  form $\frac{k^2}{\beta^2}$, while the realizable (fast rates) guarantees from \cite{l+24} are of the form $\frac{k^3}{\beta}$. We also show that in the realizable case the sample bound $\frac{k}{\beta}$ is the best one can hope for.\footnote
{
A similar statement can also be extracted from \cite{l+24}. Their lower bound has a wider scope than ours, but at the cost of extra $\log k$ factors.
}

Moreover, we perform a thorough set of experiments, where we compare our algorithm to the folklore {\em proportion matching} algorithm, which was reported to be a strong baseline in \cite{10.5555/3666122.3666778}, to the debiasing method from \cite{l+24}, and to a number of alternative methods available in the LLP literature. The general goal of these experiments was to validate our theory on improved sample complexity guarantees. On diverse combinations of datasets and learning models, our experiments do indeed reveal the superior performance of our variance reduction methods 
over the tested baselines. The empirical performance difference is particularly stunning when the training procedure is restricted to only a few training epochs, thereby showcasing a substantial enhancement in convergence speed.

\subsection{Related work}
Initial interest in LLP dates back to at least \cite{dfk05,mco07,QuadriantoSCL08, rue10,10.1007/978-3-642-23808-6_23,yu+13,pat+14}. 
In \cite{dfk05} the authors propose a hierarchical model generating labels according to the given label proportions, and formulates an MCMC-based inference scheme which, unfortunately, does not scale to large datasets. \citet{mco07} stress that standard supervised learning algorithms (like SVM and $k$-Nearest Neighbors) can be adapted to LLP by a reformulation of their objective functions, but no experiments are reported in that paper on classification tasks.
\citet{QuadriantoSCL08} give an algorithm to learn an exponential generative model, which was further generalized in
\cite{pat+14}. Both papers, however, make very strong modeling assumptions, like conditional exponential models, which are often inadequate for Deep Neural Network (DNN) and more modern machine learning architectures.
\citet{rue10,yu+13} propose an adaptation of SVMs to LLP. In particular, \citet{yu+13} propose an SVM algorithm for regression which optimizes the SVM loss by matching available label proportions (this general approach is called {\em proportion matching} in \cite{10.5555/3666122.3666778}). Yet, their approach turns out to be restricted to linear models in some feature space. Similar limitations are contained in other SVM-based papers on LLP, like \cite{qi+17,shi+19}. Being very natural, the proportion matching idea was extended to other classifiers. For instance, \citet{lt15} extends the formulation to CNNs with a generative model where the maximum likelihood estimator is computed via Expectation Maximization. Yet, this turns out not to be scalable even to moderately large DNN architectures. 
\citet{lwqts19} experimentally investigate multiclass LLP via Generative Adversarial Networks.  
Another very recent experimental paper is \cite{h+24}, where the authors adopt an iterative process with two phases, in which a Gibbs distribution is first defined over labels that factors in the feature vectors so as to force nearby vector to have similar labels, and then use Belief Propagation to obtain pseudo-labels that produce embedding refinements.

Among the more theoretically-oriented papers on LLP are  
\cite{NEURIPS2021_33b3214d,52071,NEURIPS2023_d1d3cdc9,lu+19,lu+21,ScottZ20,z+22}, as well as the already mentioned \cite{10.5555/3666122.3666778,l+24}. In \cite{NEURIPS2021_33b3214d,52071,NEURIPS2023_d1d3cdc9} the authors are essentially restricting to linear-threshold functions and in some cases rely on the fact that bags can be overlapping. In contrast, we are following here the learning setting of \citet{10.5555/3666122.3666778,l+24}, and working with non-overlapping i.i.d. bags with more general model classes. 

In \cite{lu+19,lu+21} the authors tackle the problem of learning from multiple unlabeled datasets, which is similar to LLP. The authors propose a debiasing method for the loss function, and prove consistency results (which are similar in spirit to those contained here, as well as those in \cite{10.5555/3666122.3666778,l+24}), but they do so by imposing further restrictions, like the separation of the class priors over bags. This can only be done by enabling access to the class conditional distributions. In contrast, the bags proposed in our setup are drawn i.i.d. from the same prior distribution, a setting where the algorithms proposed in \cite{lu+19,lu+21} would fail.

\citet{ScottZ20,z+22} reduce LLP to learning with label noise, in that bags are paired, and each pair is seen as a problem of learning with label noise, with label proportions being treated akin to label flipping probabilities. Yet, as in \cite{lu+19}, the bag pairing heavily relies on the statistical diversity of the bags, which we explicitly rule out. The kind of population loss they are interested in (balanced risk) is also a bit different from the one we deal with here. 

For the sake of fair comparison, in our experiments (Section \ref{s:experiments}) we followed prior experimental setups, like the one in \cite{10.5555/3666122.3666778}, and chose to compare only to methods with similar goals and settings as ours, like the folklore proportion matching method (e.g., \cite{yu+13}), the EasyLLP method by \citet{10.5555/3666122.3666778}, the debiased square loss method proposed by \citet{l+24}, 
and the label generation approach by \citet{dulac2019deep} (binary version thereof).

\section{Preliminaries}
For a natural number $n$, let $[n] = \{i \in \mathbb{N} \colon i \leq n\}$.
Let $\fs$ be a feature (or instance) space, and $\ls$ be a binary $\cY = \{0,1\}$ label space. Let $\dd$ be a joint distribution on $\fs \times \ls$, encoding the correlation between the input features and the labels. 
The marginal distribution over $\fs$ will be denoted by $\cD_{\fs}$, and the conditional distribution of $y$ given $x$ will be denoted by $\cD_{\ls|x}$. 
In particular, we denote by $\eta(\feat) = \PP_{y \sim \cD_{\ls|x}}(y = 1 | x ) = {\E}_{y \sim \cD_{\ls|x}}[y|x]$ the probability of drawing label 1 conditioned on feature vector $\feat$. When clear from the surrounding context, we shall remove the subscripts from probabilities and expectations and, e.g., abbreviate ${\E}_{y \sim \cD_{\ls|x}}[y|x]$ by ${\E}[y|x]$.
A dataset $S = (x_1,y_1),\ldots, (x_n,y_n)$ is a sequence of pairs $(x_i,y_i)$, each one drawn i.i.d. from $\dd$. 

In our LLP scenario, the dataset $S$ is partitioned uniformly at random into {\em bags} of a given size $k$, where only feature vectors and the fraction of positive labels in each bag are available for learning, that is,
\begin{equation}\label{e:dataset}
S = \underbrace{((x_{1,1},\ldots, x_{1,k}),\alpha_1)}_{(\bag_1,\alpha_1)},\ldots, \underbrace{((x_{m,1},\ldots, x_{m,k}),\alpha_m)}_{(\bag_m,\alpha_m)}~.
\end{equation}
with $n = mk$, where $\bag_j = \{x_{j,i} \colon i \in [k]\}$, $\alpha_j = \frac{1}{k} \sum_{i=1}^k y_{j,i}$ is the label proportion (fraction of labels ``1") in the $j$-th bag, and all the involved samples $(x_{j,i},y_{j,i})$ are drawn i.i.d.\ from $\dd$. Thus, the learning algorithm gets information about the $n$ labels $y_{j,i}$ of the $n$ instances $x_{j,i}$ from dataset $S$ only in the aggregate form determined by the $m$ label proportions $\alpha_j$. Yet, note that the feature vectors $x_{j,i}$ are individually observed. Also, note that, for a given sample size $n = mk$, the bigger $k$ the smaller the overall amount of label information the algorithm receives.

As is standard in statistical learning, we are given a hypothesis space $\hyps = \{h\,:\, \cX \rightarrow [0,1]\}$ of functions $h$ mapping $\cX$ to $[0,1]$, where $h(x)$ can be interpreted as the probability that $y=1$ given $x$ according to model $h$, and a loss function $\ell~\colon [0,1]~\times~\cY~\to~\Rset^+$. 
In LLP the learner has access to $S$ in the form (\ref{e:dataset}) above,
and tries to find a hypothesis $\widehat h \in \hyps$ with the smallest \emph{population loss} (or {\em risk})
\(
  \popl(h) = \E_{(x,y) \sim\cD}[\ell(h(x), y) ]
\)
with high probability over the random draw of $S$. 
In order not to clutter the paper with too many mathematical details, we shall assume henceforth that the hypothesis space $\hyps$ is finite. The theoretical analyses contained in the paper can be easily lifted to infinite hypothesis spaces via standard tools in empirical process theory (e.g., 
\cite{bbm05,blm12,ver18}).

Given $\hyps$ and $\dd$, the {\em excess risk} (or {\em regret}) $\regret(\widehat h)$ of $\wh$  is $\regret(\wh) = \popl(\wh) - \popl(\whs)$, where
$\whs = \argmin_{h \in \hyps} \popl(h)$ is the model in $\hyps$ having the smallest risk (sometimes called {\em best-in-class} model). We say that we are in a {\em realizable} setting when the Bayes-optimal predictor $\hs = \argmin_{h} \popl(h)$, the minimum being taken over all possible (measurable) functions, belongs to $\hyps$, and in a {\em non-realizable} setting, otherwise.
Clearly enough, in the realizable setting, $\whs = \hs$.

Our goal is to design and analyze algorithms that compute $\wh \in \hyps$ with small $\regret(\wh)$ in both realizable and non-realizable settings.
In particular, for a given bag size $k$,  our goal is to minimize the number of bags $m$ (the sample complexity is then $n=mk$) required for a particular regret guarantee. We will obtain tight bounds on the sample complexity specifically for the {\em square loss} case, $\ell(h(x),y) = (y-h(x))^2$.

In a sense, if we view label aggregation as a form of (label) privatization mechanism (which is not differentially private, though), this investigation aims at striking the best possible trade-offs between 
utility (that is, the regret $\regret(\wh)$ as a function of the sample size $n$) and privacy in the form of the label aggregation level $k$.

As in recent investigations on LLP \cite{10.5555/3666122.3666778,l+24}, we cover two kinds of standard learning algorithms: Empirical Risk Minimization (ERM), and Stochastic Gradient Descent (SGD), and show that we can improve in various ways on the results of both papers.

\section{Warmup: A realizable Setting with Two Functions}\label{s:warmup}
As a warmup, we consider the simplest possible scenario of LLP, a realizable setting where the function space $\hyps$ has only two models $\hyps =\{h_1,h_2\}$, with either $\eta(\cdot) = h_1(\cdot)$ or $\eta(\cdot) = h_2(\cdot)$. Though the argument we are about to present here clearly applies to more general loss functions, consider for definiteness the square loss. It can be easily seen that in this case the Bayes optimal predictor $\hs(\cdot)$ coincides with  $\eta(\cdot)$.

Suppose the two models are significantly different from one another, in the sense that $|\popl(h_1) - \popl(h_2)| = \beta > 0$. In this case, if we want achieve a regret smaller than $\beta$, we are forced to identify $\hs$ exactly. We will see momentarily that this simple problem already poses significant challenges if we want to prove optimal rates of convergence as a function of both the number of bags $m$, and the bag size $k$. 

Our algorithm for computing $\wh$ is presented as Algorithm \ref{a:warmup}.
For bag $(\bag_j,\alpha_j)$, and hypothesis $h_s$, for $s\in\{1,2\}$, set for brevity
\[
{\E}_{j,s} 
= \E[k\alpha_j\,|\, \bag_j]
= \sum_{x\in \bag_{j}} h_s(x)~.
\]
Denote by $\{\cdot\}$ the indicator of the predicate at argument, define $\mu_j = \frac{1}{2}({\E}_{j,1}+{\E}_{j,2})$, for $j \in [m]$, and consider the statistics
\[
A_1 = \sum_{j=1}^m (k\alpha_j - {\E}_{j,1})
\qquad 
A_2 = \sum_{j=1}^m (k\alpha_j - {\E}_{j,2})
\]
and
\begin{align*}
Q = \sum_{j=1}^m \Bigl(&\{{\E}_{j,1}\leq {\E}_{j,2}\} (k\alpha_j - \mu_j)\\ 
&+ \{{\E}_{j,1} > {\E}_{j,2}\} (\mu_j-k\alpha_j) \Bigl)~.
\end{align*}
Define the bias $\Delta = {\E}[h_1(x) - h_2(x)] \in [-1,1]$, and note that, since the underlying loss is the square loss, the separation parameter $\beta$ can also be written as $\beta = \E[(h_1(x) - h_2(x))^2]$. We assume here for simplicity that both $\Delta$ and $\beta$ are known or easy to estimate. In fact, both quantities can be estimated without labels, and we can do so with an accuracy which is higher than the one allowed by the $m$ aggregate label signals (Section \ref{a:erm_square_unknown} contains an example of these estimates applied to the more involved scenario of Section \ref{s:erm_square}).

Intuitively, when $|\Delta|$ is large, the models $h_1$ and $h_2$ are distinguishable based only on the total number of positive predictions they make.
In this case, the algorithm predicts based on $A_1$ and $A_2$, which compare the observed number of positive labels to the number expected under $h_1$ and $h_2$, respectively.
However, when $|\Delta|$ is small, the algorithm needs to check for agreement between $h_1$ and $h_2$ and the label proportion on each bag, which we do through $Q$.

The following theorem\footnote
{
Full proofs are given in the appendix.
} 
shows that a sample complexity $n = O(k/\beta)$ (without hidden log factors) is both necessary and sufficient. The result improves on \cite{l+24} by shaving off the $\log k$ factors appearing in both the upper and the lower bounds analysis therein (see Theorem 1 and Theorem 8 in \cite{l+24}).
\begin{theorem}\label{t:warmup}
Let $\hyps =\{h_1,h_2\}$ be a hypothesis class where $\hs$ is either $h_1$ or $h_2$, $\cZ = \cX \times \cY$ be the domain where a probability distribution $\cD$ is defined, 
$$
\beta = \E[(h_1(x)-h_2(x))^2]~,
$$
and 
$$
\Delta = \E[h_1(x)-h_2(x)] \in [-1,1]~.
$$
Let Algorithm \ref{a:warmup} be fed with an i.i.d. sample of size $n$ drawn according to $\cD$, with $m$ bags each of size $k$. Then, if $k = \Omega(1/\beta)$, with probability at least $1-\delta$ Algorithm \ref{a:warmup} outputs $\wh = \hs$, provided
\[
n = O\left(\min\left\{ \frac{1}{\Delta^2},\frac{k}{\beta}\right\}\,\log \frac{1}{\delta}\right)~.
\]
Moreover, when $\Delta = 0$ (which is the hardest case), a specific setting of $\hyps$ and $\cD$ exists for which no algorithm can be correct with probability at least $1-\delta$, with $\delta < 1/2$, unless  
\[
n = \Omega\left(\frac{k(1-\bh(\delta))}{\beta}\right)~,
\]
where $\bh(\cdot)$ is the binary entropy function $\bh(x) = -x\log_2 x - (1-x)\log_2(1-x)$.
In other words, when $\Delta = 0$ the sample size $n$ must be linear in $k$.
\end{theorem}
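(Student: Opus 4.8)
\textbf{Proof plan for \Cref{t:warmup}.}

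\emph{Upper bound.} Using the (known) values of $\Delta,\beta,k$, the algorithm relies on $A_1,A_2$ when $1/\Delta^2\le k/\beta$ and on $Q$ otherwise; I would analyse the two regimes separately. In each case I condition on the bags $\bag_1,\dots,\bag_m$ (equivalently, on all $n=mk$ feature vectors), so that each statistic splits into a ``signal'' -- its conditional expectation, a function of the features only -- and a mean-zero ``label-noise'' remainder. A short computation gives $\E[Q\mid\bag_{1:m}]=\pm\tfrac12\sum_{j=1}^m\bigl|\sum_{i=1}^k\bigl(h_1(x_{j,i})-h_2(x_{j,i})\bigr)\bigr|$, with sign $-$ iff $\hs=h_1$, so $\operatorname{sign}(Q)$ reveals $\hs$ as soon as the signal beats the noise; the noise $Q-\E[Q\mid\bag_{1:m}]=\sum_{j,i}\sigma_j(y_{j,i}-\eta(x_{j,i}))$ is a sum of $n$ independent $[-1,1]$ variables, hence $O(\sqrt{n\log(1/\delta)})$ by Hoeffding, while the signal concentrates (McDiarmid over the $n$ features, each of influence $O(1)$) around $\tfrac m2\,\E\bigl|\sum_{i=1}^k Z_i\bigr|$ with $Z:=h_1(x)-h_2(x)$. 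Likewise $\E[A_s\mid\bag_{1:m}]$ is $0$ for the index $s$ with $h_s=\hs$ and $\mp\sum_{j,i}Z_{j,i}$ for the other, which concentrates around $\mp n\Delta$; returning the hypothesis whose $A_s$ is closer to $0$ is correct once $n|\Delta|\gg\sqrt{n\log(1/\delta)}$, i.e.\ $n=\Omega(\log(1/\delta)/\Delta^2)$.

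The one non-routine ingredient is the anti-concentration bound $\E\bigl|\sum_{i=1}^k Z_i\bigr|=\Omega(\sqrt{k\beta})$, valid for \emph{every} $\Delta$ under the hypothesis $k=\Omega(1/\beta)$. I would prove it by a dichotomy: if $|\Delta|\gtrsim\sqrt{\beta/k}$ then $\E|\sum_i Z_i|\ge|\sum_i\E Z_i|=k|\Delta|\gtrsim\sqrt{k\beta}$ by Jensen; otherwise $\Var(Z)=\beta-\Delta^2\ge\beta/2$, and Paley--Zygmund applied to $\sum_i(Z_i-\Delta)$ -- whose fourth moment is $O\bigl((k\Var(Z))^2\bigr)$ because $|Z_i-\Delta|\le2$ and $k\Var(Z)=\Omega(1)$ -- gives $\E|\sum_i(Z_i-\Delta)|\gtrsim\sqrt{k\Var(Z)}\gtrsim\sqrt{k\beta}$, and the term $k|\Delta|$ is then too small to cancel it. Plugging this into the $Q$-analysis, the signal $\asymp m\sqrt{k\beta}$ dominates the $O(\sqrt{mk\log(1/\delta)})$ noise once $m=\Omega(\log(1/\delta)/\beta)$, i.e.\ $n=\Omega(k\log(1/\delta)/\beta)$; combining with the $A$-regime yields the stated $\min\{1/\Delta^2,k/\beta\}$.

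\emph{Lower bound.} Take $\cX=\{a,b\}$, $\cD_\fs$ uniform, $h_1(a)=\tfrac12+s$, $h_1(b)=\tfrac12-s$, $h_2$ the swap, with $s=\tfrac12\sqrt{\beta}$: then $\Delta=0$ and $\E[(h_1-h_2)^2]=\beta$. Regard the task as testing $\hs=h_1$ vs.\ $\hs=h_2$ under a uniform prior; Fano's inequality forces $I(\hs;S)\ge1-\bh(\delta)$ (in bits) for any learner with error $\le\delta<\tfrac12$. The bag features carry no information about $\hs$ and the bags are conditionally i.i.d., so by the chain rule and the standard mutual-information/KL bound for a two-point prior, $I(\hs;S)\le m\,\E_d\bigl[D\bigl(P_1^{(d)}\|P_2^{(d)}\bigr)\bigr]$, where $P_s^{(d)}$ is the law of $k\alpha$ given $\hs=h_s$ in a bag of $a$-vs-$b$ imbalance $d$ (a sum of $k$ Rademachers, $\E d^2=k$), and the two KL's agree by the reflection symmetry $P_2^{(d)}(n)=P_1^{(d)}(k-n)$. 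Now $P_1^{(d)}$ and $P_2^{(d)}$ are Poisson--binomial laws with equal variance $\asymp k$ and means differing by $2sd$, whence $D(P_1^{(d)}\|P_2^{(d)})=O(s^2d^2/k+1/k)=O(\beta d^2/k+1/k)$; averaging over $d$ and using $k\beta=\Omega(1)$ gives $\E_d[D(P_1^{(d)}\|P_2^{(d)})]=O(\beta)$. Hence $1-\bh(\delta)\le I(\hs;S)=O(m\beta)$, i.e.\ $m=\Omega((1-\bh(\delta))/\beta)$ and $n=mk=\Omega(k(1-\bh(\delta))/\beta)$.

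\emph{Main obstacle.} The crux is the per-bag estimate $D(P_1^{(d)}\|P_2^{(d)})=O(\beta d^2/k)$. The observed proportion $k\alpha$ is \emph{not} a sufficient statistic for the $h_1$-vs-$h_2$ test -- the likelihood ratio depends on the signed label counts within the $a$- and $b$-subsets, not on their sum -- so bounding through the individual labels (data processing) overshoots by a factor $k$; instead one must use the near-Gaussianity of the Poisson--binomial law of $k\alpha$ (a local CLT / Edgeworth estimate, controlling the non-Gaussian corrections uniformly in $d$) to see that the aggregate leaks only $O(\beta d^2/k)$ bits, with the crude per-label bound $O(k\beta)$ absorbing the exponentially rare bags of large imbalance. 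For the upper bound the only delicate step is the anti-concentration estimate above; everything else is Hoeffding/McDiarmid bookkeeping.
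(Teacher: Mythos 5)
Your proposal is correct in overall architecture and matches the paper's high-level strategy (split into the $A$-regime and the $Q$-regime, condition on the features to separate signal from label noise, and use Fano plus tensorization for the lower bound over the same two-point construction on $\cX=\{a,b\}$), but both of its technical cores follow genuinely different routes from the paper's. For the upper bound, the paper does not lower-bound $\E\bigl|\sum_{i=1}^k Z_i\bigr|$ directly; instead it defines \emph{discriminative} bags at a threshold $\theta = k|\Delta|+\sigma\sqrt{k}$, uses Berry--Esseen plus Gaussian anti-concentration to show a constant fraction of bags are discriminative, and applies Hoeffding separately to the discriminative and non-discriminative groups. Your route --- Jensen when $|\Delta|\gtrsim\sqrt{\beta/k}$, and otherwise Paley--Zygmund/H\"older interpolation $\E|W|\ge \E[W^2]^{3/2}/\E[W^4]^{1/2}$ with $\E[W^4]=O((k\Var Z)^2)$ under $k\beta=\Omega(1)$, followed by McDiarmid on the signal --- is more elementary and reaches the same $n=O(k\log(1/\delta)/\beta)$; just mind the constants so that $k|\Delta|$ cannot cancel the $\Omega(\sqrt{k\Var Z})$ term in the small-$|\Delta|$ branch. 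For the lower bound, the paper computes $I(\hs;(\bag_1,\alpha_1)) = H(\alpha_1\mid\bag_1)-H(\alpha_1\mid\bag_1,\hs)$ and invokes sharp entropy asymptotics for Binomial and Poisson--Binomial laws (Theorems \ref{t:binomial_entropy} and \ref{thm:sandwich}), arriving at $-\tfrac12\log_2(1-\beta)+\wt O(1/\sqrt{k})$; you instead bound the mutual information by the pairwise KL between the two conditional Poisson--Binomial laws and estimate $D(P_1^{(d)}\Vert P_2^{(d)})=O(\beta d^2/k+1/k)$ via a Gaussian approximation. The two computations buy the same $O(\beta)$ per-bag information, and you correctly identify why naive data processing through individual labels loses a factor $k$.

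The one real gap is that your central lower-bound lemma, the uniform estimate $D(P_1^{(d)}\Vert P_2^{(d)})=O(\beta d^2/k+1/k)$, is asserted rather than proved: a rigorous local CLT/Edgeworth argument for the KL between two lattice Poisson--Binomial laws, with control of the non-Gaussian corrections uniformly in $d$ and a tail argument for large $|d|$, is exactly the kind of delicate step that the paper's entropy-based route is designed to avoid by citing the known formulas of \cite{Adell_2010} and \cite{ha01}. As written, your plan would need either to carry out that local limit analysis in full or to substitute the paper's entropy sandwich at that point. Everything else in your proposal is sound bookkeeping.
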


\begin{algorithm}[t]
\textbf{Input:}
Sample $S$ made up of $m \geq 1$ bags, each of size $k \geq 1$, 
bias $\Delta \in [-1,1]$, separation parameter $\beta \in (0,1]$.
\begin{itemize}
%
\item {\bf If} $|\Delta| \geq \sqrt{\frac{\beta}{2k}}$ {\bf then:}
\begin{itemize}
\item {\bf If} $|A_1(S)| < |A_2(S)|$ {\bf then}: $\widehat h = h_1$
\item {\bf Else}: $\widehat h = h_2$
\end{itemize}
\item {\bf Else} 
\(
\widehat h =
\begin{cases}
h_2 &{\mbox{if $Q(S) \geq 0$}}\\
h_1 &{\mbox{if $Q(S) < 0$}}
\end{cases}
\)
\end{itemize}
{\bf Output:} $\widehat h $
\caption{Algorithm for the two function realizable case.}
\label{a:warmup}
\end{algorithm}

Algorithm \ref{a:warmup} can immediately be extended to the case of more than two hypotheses, just by implementing an elimination tournament via pairwise comparisons. We do not know, however, how to make it work in non-realizable settings. 

The reader should observe that, being the decision rule $Q$ a {\em pairwise} statistic, it cannot be expressed as the difference of two separate scoring functions (one for $h_1$ and the other for $h_2$). Hence Algorithm \ref{a:warmup}, though sample optimal, does not easily lend itself to extensions of practical relevance. We now turn our attention to more viable solutions.

\section{Empirical Risk Minimization}\label{s:erm_square}
Our proposed Empirical Risk Minimization (ERM) algorithm for square loss operates on a  bag level loss that can be seen as a robust (or clipped) bag-level square loss. Specifically, for bag $z_j=(\bag_j,\alpha_j)$ and function $h \in \hyps$, we set 
\begin{align*}
{\E}_{j}(h) 
= \sum_{i=1}^k h(x_{j,i})~,\quad {\tE}_{j}(h)
= {\E}_{j}(h) - k{\E}[h(x)]~.
\end{align*}
Then we define the {\em clipped} bag-level square loss
\[
\ell^c(h,z_j) = \frac{1}{k}\left(k\talpha_{j} - {\tE}_{j}(h)  \right)^2 G_j(h) + \Bigl(\E[h(x)]- p\Bigl)^2~,
\]
where\,\, $\talpha_j = \alpha_j-p$,\,\, $p = \E[\hs(x)]$, and
\[
G_j(h) = \left\{ |k\talpha_{j} - {\tE}_{j}(h)| \leq  \sqrt{8k\log \frac{2}{\theta} } \right\}~,
\]
for a suitable value of parameter $\theta > 0$.
(Recall that $\{\cdot\}$ denotes the indicator function for the predicate at argument.)

Again, for simplicity of presentation, we assume in the above that $\E[h(x)]$, for $h \in \hyps$, and $p$ are known. In fact, $\E[h(x)]$ can be easily estimated from {\em unlabeled} data, and $p$ can be easily estimated from label proportions $\alpha_j$, the main point being that both quantities can be estimated at a higher resolution than the one allowed by the aggregate label signals. 
Appendix \ref{a:erm_square_unknown} contains a version of the algorithm where $\E[h(x)]$ and $p$ are themselves estimated from data. This is similar in spirit to the corresponding argument in \cite{l+24}, but the analysis here is somewhat more involved, since special care has to be taken to ensure that fast rates are preserved even after clipping the square loss.

Then we define the ERM estimator
\[
\wh (S) = \arg\min_{h \in \hyps}\, \frac{1}{m}\sum_{j=1}^m \ell^c(h,z_j)~.
\]
We have the following result.
\begin{theorem}\label{t:erm_square}
Let $\hyps$ be a finite hypothesis space, $\cZ = \cX \times \cY$ be the domain where a probability distribution $\cD$ is defined. Let the ERM estimator $\wh$ defined above be fed with a sample of size $n = mk$, with $m$ bags of size $k$, drawn i.i.d. according to $\cD$, and let 
$$
\gamma(\whs,\hs) = \E[(\whs(x)-\hs(x))^2]~.
$$
Then, for all $\beta >0$, if $\theta = \frac{\beta}{16 k^2}$ in the clipping condition defining $G_j(h)$, and
\[
m = O\left(\frac{\bigl(\gamma(\whs,\hs) + \beta \bigl) \log \frac{k}{\beta}}{\beta^2}\,\log\frac{|\hyps_\beta|}{\delta}\right)~,
\]
we have
\[
\regret(\wh) = \popl(\wh) - \popl(\whs) < \beta
\]
holding with probability at least $1-\delta$. In the above, 
$$
\hyps_\beta = \Bigl\{ h \in \hyps\,:\, \regret(h) \geq \beta \Bigl\} \subseteq \hyps~.
$$
In particular, in the realizable case ($\gamma(\whs,\hs) = 0$), the number of bags $m$ that suffices is 
\[
m = O\left(\frac{\log \frac{k}{\beta}}{\beta}\,\log\frac{|\hyps_\beta|}{\delta}\right)~.
\]
Moreover, a similar algorithm exists, where the quantities $\E[h(x)]$ and $p$ are replaced by empirical estimates, which enjoys the same sample complexity guarantees as above.
\end{theorem}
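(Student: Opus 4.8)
The argument is a localized ERM analysis. Write $L^c(h)=\E_z[\ell^c(h,z)]$ and $\widehat L^c(h)=\frac1m\sum_{j=1}^m\ell^c(h,z_j)$, so $\wh=\argmin_{h\in\hyps}\widehat L^c(h)$. Since $\regret(\wh)\ge\beta$ forces $\wh\in\hyps_\beta$, it suffices to show that with probability $1-\delta$ one has $\widehat L^c(h)>\widehat L^c(\whs)$ for every $h\in\hyps_\beta$ simultaneously. Three facts drive this: (i) $L^c(h)=\popl(h)-B_{\mathrm{clip}}(h)$ where $0\le B_{\mathrm{clip}}(h)$ is negligibly small for the stated $\theta$, and $\ell^c(h,\cdot)\le B=O(\log(1/\theta))=O(\log(k/\beta))$; (ii) $\Var_z\big(\ell^c(h,z)-\ell^c(\whs,z)\big)=O\big(\log(1/\theta)\,\gamma(h,\whs)\big)$, where $\gamma(h,\whs)=\E[(h(x)-\whs(x))^2]$; and (iii) $\gamma(h,\whs)\le 2\,\regret(h)+4\,\gamma(\whs,\hs)$. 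Given (i)--(iii), Bernstein's inequality plus a union bound over $\hyps_\beta$ finishes.

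For (i), note that since the Bayes predictor for square loss is the regression function, $\hs=\eta$, hence $p=\E[\hs(x)]=\E[y]$. Then $D_j(h):=k\talpha_j-\tE_j(h)=\sum_{i=1}^k\big[(y_{j,i}-h(x_{j,i}))-\E[y-h(x)]\big]$ is a centered sum of $k$ i.i.d.\ terms in $[-2,2]$, so $\E[D_j(h)]=0$ and $\frac1k\E[D_j(h)^2]=\Var(y-h(x))=\popl(h)-(p-\E[h(x)])^2$; adding the deterministic term $(\E[h(x)]-p)^2$ and subtracting the truncated tail gives $L^c(h)=\popl(h)-\frac1k\E[D_j(h)^2(1-G_j(h))]$. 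Since the clipping threshold $\sqrt{8k\log(2/\theta)}$ is several standard deviations of $D_j(h)$, Hoeffding yields $\PP(G_j(h)=0)\le\theta^4/8$, and because $D_j(h)^2/k$ is sub-exponential with $O(1)$ parameter, $B_{\mathrm{clip}}(h)$ is a tiny polynomial in $\theta$, $\log(1/\theta)$, $k$, hence $o(\beta)$ for $\theta=\beta/(16k^2)$. On $\{G_j(h)=1\}$ we have $D_j(h)^2/k\le 8\log(2/\theta)$, giving $\ell^c(h,\cdot)\le B$.

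For (ii), condition on $\{G_j(h)=G_j(\whs)=1\}$, whose complement has probability $O(\theta^4)$. On this event $\ell^c(h,z_j)-\ell^c(\whs,z_j)=\frac1k\big(D_j(h)-D_j(\whs)\big)\big(D_j(h)+D_j(\whs)\big)+\big[(\E[h(x)]-p)^2-(\E[\whs(x)]-p)^2\big]$; the bracket is deterministic, and $|D_j(h)+D_j(\whs)|\le 2\sqrt{8k\log(2/\theta)}$, so the increment is $O\big(\sqrt{\log(1/\theta)/k}\big)\,|D_j(h)-D_j(\whs)|$ in magnitude. Since $D_j(h)-D_j(\whs)=\sum_{i=1}^k\big[(\whs-h)(x_{j,i})-\E[\whs-h]\big]$ is centered with second moment $k\,\Var(\whs(x)-h(x))\le k\,\gamma(h,\whs)$, the second moment of the increment is $O(\log(1/\theta)\,\gamma(h,\whs))$, and the $O(\theta^4)$-probability complement contributes only $O(\theta^4 B^2)$. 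Fact (iii) is Cauchy--Schwarz, $\gamma(h,\whs)\le 2\gamma(h,\hs)+2\gamma(\whs,\hs)$, together with $\hs=\eta$, which gives $\gamma(h,\hs)=\popl(h)-\popl(\hs)=\regret(h)+\gamma(\whs,\hs)$.

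Finally, applying Bernstein to $\frac1m\sum_j[\ell^c(h,z_j)-\ell^c(\whs,z_j)]$ with variance proxy $O(\log(1/\theta)(\regret(h)+\gamma(\whs,\hs)))$ (from (ii)+(iii)) and range $B$, and union-bounding over $h\in\hyps_\beta$: with probability $1-\delta$, for every such $h$,
\[
\widehat L^c(h)-\widehat L^c(\whs)\ \ge\ \big(L^c(h)-L^c(\whs)\big)-O\!\Big(\sqrt{\tfrac{\log(1/\theta)(\regret(h)+\gamma(\whs,\hs))\log(|\hyps_\beta|/\delta)}{m}}+\tfrac{\log(1/\theta)\log(|\hyps_\beta|/\delta)}{m}\Big).
\]
By (i), $L^c(h)-L^c(\whs)\ge\regret(h)-B_{\mathrm{clip}}(h)\ge(1-o(1))\regret(h)\ge(1-o(1))\beta$; since $r\mapsto(r+\gamma(\whs,\hs))/r^2$ and $r\mapsto 1/r$ are decreasing, the worst case is $\regret(h)=\beta$, and the stated bound on $m$ (using $\log(1/\theta)=\Theta(\log(k/\beta))$) makes the right-hand side strictly positive for all $h\in\hyps_\beta$; hence $\wh\notin\hyps_\beta$, i.e.\ $\regret(\wh)<\beta$. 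The realizable case is the specialization $\gamma(\whs,\hs)=0$. For the last sentence, the same proof goes through with $\E[h(x)]$ replaced by its empirical mean over the $n=mk$ observed feature vectors and $p$ by $\frac1m\sum_j\alpha_j$: both estimation errors are $O(1/\sqrt n)$, lower order than the $O(1/\sqrt m)$ scale driving the bound, and can be propagated through (i)--(ii) as perturbations (Appendix~\ref{a:erm_square_unknown}). I expect the delicate step to be carrying out (i) and (ii) jointly — choosing $\theta$ small enough that $B_{\mathrm{clip}}=o(\beta)$ yet large enough that $B=O(\log(1/\theta))$ does not swamp the fast-rate term — and, in the estimated version, verifying that the induced perturbation of the clipping threshold does not destroy the $1/\beta$ rate.
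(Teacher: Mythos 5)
Your main argument is essentially the paper's own proof: the same decomposition into (a) a clipping bias of order $k\theta$, (b) a second moment for the loss difference of order $\gamma(h,\whs)\log(1/\theta)+\mathrm{poly}(k)\theta$, (c) the localization $\gamma(h,\whs)\le 2\,\regret(h)+4\,\gamma(\whs,\hs)$, followed by Bernstein plus a union bound over $\hyps_\beta$ and monotonicity in $\regret(h)$; your route to (c) via $(a+b)^2\le 2a^2+2b^2$ is marginally more direct than the paper's Cauchy--Schwarz-and-solve-the-quadratic, and your clipping-probability exponent ($\theta^4/8$ where Hoeffding gives $\theta$) is a harmless slip since the bias $O(k\theta)=O(\beta/k)$ is what matters.

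The one substantive weakness is the final addendum (unknown $p$ and $\E[h(x)]$): the claim that the estimation errors are "$O(1/\sqrt n)$, lower order than the $O(1/\sqrt m)$ scale driving the bound" does not suffice in the realizable regime, where the driving scale is $O(1/m)$ and a flat additive perturbation of size $1/\sqrt{mk}$ to the loss differences would force $n\gtrsim 1/\beta^2$, destroying the $k/\beta$ fast rate whenever $k\ll 1/\beta$. The paper's Appendix~\ref{a:erm_square_unknown} avoids this by sample-splitting and, crucially, by showing the perturbation to both the bias and the second moment is itself \emph{localized}, of order $\sqrt{\gamma(h_1,h_2)/m_2}$ rather than $1/\sqrt{m_2 k}$ uniformly, so that it is absorbed into the same $\mu^2/(\gamma^*+\mu)$ trade-off. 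You correctly flag this as the delicate step, but the sketch as written would not close it.
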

\begin{proof}[Proof sketch]
Define the {\em non-clipped} version of the loss at the bag level:
\begin{equation}\label{e:nonclipped_loss}
\ell(h,z_j) = \frac{1}{k}\left(k\talpha_{j} - {\tE}_{j}(h)  \right)^2 + \Bigl(\E[h(x)]- p\Bigl)^2~,
\end{equation}
and let $\popl_\bag(h)$ denote
the expected value of $\ell(h,z_j)$ over the random draw of bag $z_j$. The first observation we make is that this expectation is the same as the expectation of the original square loss $(y-h(x))^2$, when $(x,y)$ is randomly drawn according to $\cD$, i.e.,
\begin{equation}\label{e:unbiasedness}
\popl_\bag(h) = \popl(h)~.
\end{equation}
Then the proof essentially proceeds as a bias-variance trade-off analysis.
We show that the clipping operation that turns $\ell(h,z_j)$ into $\ell^c(h,z_j)$ only introduces a small bias while, at the same time, helping us reduce the variance substantially. 

In fact, as is customary in fast rate analyses (e.g., \cite{ma00,me02,bbm05}) the above bias-variance trade-off does not refer to the loss itself, but to the {\em difference} of losses $\ell^c(h,z_j) - \ell^c(\whs,z_j)$, for which we can prove a bias bound of the form 
$k\theta$, a variance bound of the form 
$$
\gamma(h,\whs)\log\frac{1}{\theta} + k^2\theta~,
$$ 
and a range bound of the form $\log\frac{1}{\theta}$.
In turn, for square loss,
\[
\gamma(h,\whs)  
= O\bigl(\gamma(\widehat h^\star,h^\star) + \popl(h) - \popl(\whs)\bigl)~,
\]
where we recall that
$\gamma(h,h') = \E[(h(x)-h'(x))^2]$.
Setting $\theta = \frac{\beta}{k^2}$ in the above makes the bias smaller than the desired regret bound $\beta$, and yields a variance bound of the form
\begin{equation}\label{e:variance_bound_sketch}
\bigl(\gamma(\widehat h^\star,h^\star) + \popl(h) - \popl(\whs)\bigl)\,\log \frac{k}{\beta}
\end{equation}
and a range bound of the form $\log \frac{k}{\beta}$.

In broad strokes, our goal in the ERM analysis is to make sure that with high probability we are able to separate any $h$ from $\whs$, whenever $h \notin \hyps_\beta$. This separation effort is made easy as the loss difference $\popl(h) - \popl(\whs)$ increases. Yet, at the same time, the bigger $\popl(h) - \popl(\whs)$ the higher the variance (\ref{e:variance_bound_sketch}). 
Solving this trade-off within standard concentration inequalities, and taking a union bound over $h \in \hyps_\beta$ gives the claimed sample complexity bound.
\end{proof}
Theorem \ref{t:erm_square} improves in a number of ways over the existing literature. For instance, comparing to \cite{10.5555/3666122.3666778}, we have sharper sample complexity guarantees: our bound reads $n = \widetilde O\big(k\bigl(\gamma(\whs,\hs) + \beta \bigl) / \beta^2\big)$, which becomes $n = \widetilde O(k/\beta)$ in the realizable case, while the regret guarantees in \cite{10.5555/3666122.3666778} are only of the form $n = \widetilde O(k^2/\beta^2)$ (that is, slow rates only and, in addition, a quadratic dependence on $k$, instead of linear). On the other hand, the results in \cite{10.5555/3666122.3666778} apply to all bounded losses, not just to square loss. 

The more recent paper \cite{l+24} also covers the square loss case, but the results contained there are widely sub-optimal when it comes to the dependence on $k$. For instance, the authors show fast rates for a debiased square loss algorithm, but the sample complexity therein reads (in our notation)  $n = \widetilde O(k^3/\beta)$.
Besides, the authors consider only a more restricted notion of realizability where $\popl(\hs) = 0$ and the functions in the hypothesis space $\hyps$ have binary output, thereby ruling out any potential noise in the labels. Needless to say, the assumption about deterministic labels is particularly problematic when we want to associate any notion of privacy with the label aggregation mechanism.

Finally, compared to the optimal bounds in Theorem \ref{t:warmup}, those in Theorem \ref{t:erm_square} are looser only by log factors, but they clearly apply to wider settings.

\section{Stochastic Gradient Descent}\label{s:sgd_square}

In this section, we show that a similar variance reduction technique as in Section \ref{s:erm_square} leads to improved results for Stochastic Gradient Descent methods applied to LLP tasks.

For the sake of this section, the loss is still the square loss $\ell(h(x),y) = (y-h(x))^2$, but the hypothesis space is that of norm-bounded $d$-dimensional linear predictors,
$$
    \hyps = \{ h_w : x \rightarrow w \cdot x \mid w \in \R^d, \norm{w} \leq \rho_w\}~,
$$ 
with known $\rho_w > 0$. 
We can think of the labels $y \in \cY$ as binary, as before, but there is no real need for this restriction. So, in this section, we look at the problem more as a regression problem, and assume the labels are real-valued and bounded: $\cY = \{ y \in \R\,:\, |y| \leq \rho_y\}$, for some known $\rho_y > 0$. The label aggregation at each bag $\bag_j$ is still done via averaging: $\alpha_j = \frac{1}{k}\sum_{i=1}^k y_{j,i}$.
The population loss can now be conveniently expressed as a function of $w$ directly:
$$
\popl(w) 
= {\E}_{(x,y) \sim \cD}[\ell(h_w(x), y) ]
= {\E}_{(x,y) \sim \cD}[ (y - w \cdot x)^2 ]
$$
Further, as is customary in SGD analyses, we assume that $\dd$ is such that $\norm{x} \leq \rho_x$ almost surely, and that the constant $\rho_x > 0$ is given to us. For simplicity 
we shall also assume that the following quantities are known (exactly):
$$
\mu_x = \E[x], 
\quad
\mu_y = \E[y]~.
$$
These are the marginal expectations under $\dd$, that can be approximated from (either individual or bagged) data in a straightforward and sample-efficient manner. 
We emphasize that the method we are about to present can be immediately modified to have $\mu_x$ and $\mu_y$ estimated from data instead (the modification would be similar in spirit to the one contained in Section \ref{a:erm_square_unknown}).
Observe that, due to our assumptions, we have $\norm{\mu_x} \leq \rho_x$ and $\norm{\mu_y} \leq \rho_y$.

\begin{algorithm}[t]
\caption{SGD-based algorithm\label{alg:sgd}}
\begin{algorithmic}
\STATE 
{\bfseries Input:} stepsize $\eta$, clipping threshold $\gamma>0$
\STATE Initialize at $w_1 = 0$
\FOR {bags $j=1,2,\ldots,m$} 
    \STATE Compute centroid $\bar x_j = \frac1k \sum_{i=1}^{k} x_{j,i}$~;
    \IF {$\norm{\bar x_j - \mu_x} \leq \gamma \rho_x$}
        \STATE update
        $$
            w_{j+1} = \Pi[w_j - \eta \nabla \ell(w_j,z_j)]
        $$
        where $\ell(w,z_j)$ is as in the main text.
    \ELSE 
        \STATE skip update and proceed to the next step
    \ENDIF
\ENDFOR
\STATE
Output $\bar w_m = \frac1m \sum_{j=1}^m w_j$. 
\end{algorithmic}
\end{algorithm}

The algorithm we propose and analyze for this setting in presented in \cref{alg:sgd}. The algorithm operates with the bag-level square loss
\[
\ell(w,z_j)
        =
        k \, \br[\big]{ w \cdot (\bar x_j - \mu_x) - (\alpha_j-\mu_y ) }^2 + \br{w \cdot \mu_x - \mu_y}^2
\]
which can be easily verified to coincide with the non-clipped bag level loss (\ref{e:nonclipped_loss}), upon setting $h(x) = w\cdot x$ and $p = {\E}[y]$.
Essentially, the algorithm constructs a debiased version $\ell(\cdot,z_j)$ of the square loss $(y-w\cdot x)^2$ based on each bag $z_j = (\bag_j,\alpha_j)$, and performs SGD updates on a truncated (variance-reduced) version of this loss function, defined as
$$
    \wt\ell(w,z_j) 
    = \ell(w,z_j)\,\{\norm{\bar x_j - \mu_x} \leq \theta \rho_x\}~,
$$
for some $\theta > 0$.
The updates are projected back to the feasible set of norm-bounded models $\{w \in \R^d\,:\, \norm{w} \leq \rho_w\}$; here $\Pi$ denotes the Euclidean projection operator onto the latter set.

Our main result in this section is the following population excess risk bound for \cref{alg:sgd}:
\begin{theorem} \label{thm:sgd-main}
Let $\cX, \cY$, $\cD$, $\hyps$,
$\mu_x, \mu_y, \rho_w, \rho_x, \rho_y$ be as described earlier in this section. Let a sample $S$ of size $n = mk$, with $m$ bags of size $k$, be drawn i.i.d. according to $\cD$. Set 
$$
\theta = \sqrt{\frac{8}{k}\log\left(\frac{9km (\rho_w \rho_x + \rho_y)^2}{\rho_w^2 \rho_x^2}\right) }
$$
and
$$
\zeta = (k \theta^2 + 1) \rho_x^2~.
$$ 
Then, for any $L^\star > 0$, the hypothesis $\bar w_m$ output by \cref{alg:sgd} with step size 
$$
\eta = \min\Bigl\{\frac{\rho_w}{\sqrt{2\zeta m L^\star}}, \frac{1}{2\zeta}\Bigl\}
$$ 
and clipping threshold $\theta$ satisfies, for all $w^*$ such that $\norm{w^*} \leq \rho_w$ and $\popl(w^*) \leq L^\star$, and any $\beta > 0$
$$
    \E\sbr*{ \popl(\bar w_m) } - \popl(w^*)
    \leq \beta~,
$$
whenever
\[
m = \wt O\left( \frac{\rho_x^2\rho_w^2}{\beta} + \frac{L^\star \rho_x^2\rho_w^2}{\beta^2} \right)~.
\]
In the above, the expectation is over the generation of $S$, and the asymptotic notation hides logarithmic factors of the form $\log(km)$ and $\log\Bigl(1 + \frac{\rho_y}{\rho_w \rho_x}\Bigl)$. 
\end{theorem}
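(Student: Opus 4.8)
The plan is to prove \cref{thm:sgd-main} by combining the textbook projected-SGD regret bound with an online-to-batch (Jensen) conversion, where the truncation is used to obtain a \emph{self-bounding} gradient inequality — the source of the near-optimal dependence on $k$. \textbf{Step 1 (unbiasedness and truncation bias).} First I would confirm, exactly as in \eqref{e:unbiasedness}, that $\E_{z_j}[\ell(w,z_j)] = \popl(w)$: writing $y - w\cdot x = (y-\mu_y) - w\cdot(x-\mu_x) + (\mu_y - w\cdot\mu_x)$ and squaring, the cross term vanishes in expectation, while the centered part $\alpha_j - \mu_y - w\cdot(\bar x_j - \mu_x)$ is an average of $k$ i.i.d.\ mean-zero terms, so its second moment shrinks by a factor $k$ — which is precisely what the leading factor $k$ in $\ell(w,z_j)$ compensates. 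Next, the truncation bias $b(w) := \popl(w) - \E_{z_j}[\wt\ell(w,z_j)] = \E_{z_j}\sbr{\ell(w,z_j)\,\{\norm{\bar x_j - \mu_x} > \theta\rho_x\}}$ is nonnegative, and I would bound it uniformly over $\norm w \le \rho_w$: on the feasible set $\ell(w,z_j) = O\br{k(\rho_w\rho_x + \rho_y)^2}$ deterministically, while $\bar x_j - \mu_x$ is an average of $k$ i.i.d.\ zero-mean vectors of norm at most $2\rho_x$, so a vector concentration bound gives $\PP(\norm{\bar x_j - \mu_x} > \theta\rho_x) \le 2e^{-\Omega(k\theta^2)}$. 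The prescribed $\theta$ is calibrated so that this probability is $\le \rho_w^2\rho_x^2 / \big(\Theta(km(\rho_w\rho_x+\rho_y)^2)\big)$, hence $b(w) \le b_{\max} = O(\rho_w^2\rho_x^2/m)$, negligible once $m = \Omega(\rho_w^2\rho_x^2/\beta)$.

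\textbf{Step 2 (self-bounding gradients).} This is the heart of the argument. On the event $E_j = \{\norm{\bar x_j - \mu_x} \le \theta\rho_x\}$, write $\ell(w,z_j) = k r_j^2 + s^2$ with $r_j = w\cdot(\bar x_j-\mu_x) - (\alpha_j-\mu_y)$ and $s = w\cdot\mu_x-\mu_y$, so that $\nabla_w \ell(w,z_j) = 2k r_j(\bar x_j-\mu_x) + 2s\,\mu_x$. Using $\norm{\bar x_j-\mu_x}^2 \le \theta^2\rho_x^2$ and $\norm{\mu_x}^2\le\rho_x^2$ gives $\norm{\nabla_w\ell(w,z_j)}^2 \le 8\rho_x^2\br{k\theta^2\cdot k r_j^2 + s^2} \le 8\rho_x^2(k\theta^2+1)\,\ell(w,z_j) = O(\zeta)\cdot\wt\ell(w,z_j)$; off $E_j$ the stochastic gradient is simply $0$. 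This is exactly the variance-reduction gain: without truncation one could only use $\norm{\bar x_j - \mu_x}^2 \le 4\rho_x^2$, replacing $\zeta = \wt O(\rho_x^2)$ by $\Theta(k\rho_x^2)$ and reintroducing a spurious factor of $k$ in the final rate — the same pathology that afflicts the prior bounds.

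\textbf{Step 3 (SGD regret, online-to-batch, and closing the fixed-point inequality).} Since each $\wt\ell(\cdot,z_j)$ is convex (a nonnegative multiple of a convex quadratic, or identically $0$), the standard projected-SGD bound with $w_1 = 0$ yields, pathwise, $\sum_{j=1}^m\br{\wt\ell(w_j,z_j) - \wt\ell(w^*,z_j)} \le \tfrac{\rho_w^2}{2\eta} + \tfrac{\eta}{2}\sum_j \norm{g_j}^2$. Taking expectations and using that $w_j$ is independent of $z_j$, so $\E[\wt\ell(w_j,z_j)] = \E[\popl(w_j)] - \E[b(w_j)] \ge \E[\popl(w_j)] - b_{\max}$ while $\E[\wt\ell(w^*,z_j)] \le \popl(w^*) \le L^\star$, and substituting Step 2's bound $\E[\sum_j\norm{g_j}^2] \le O(\zeta)\,F$ with $F := \E[\sum_j \wt\ell(w_j,z_j)]$, I obtain a self-referential inequality $F \lesssim mL^\star + \rho_w^2/\eta + \eta\zeta F$; the step-size cap $\eta \le \tfrac{1}{2\zeta}$ makes the coefficient of $F$ on the right at most a constant below $1$, so it can be absorbed, giving $F = O(mL^\star + \rho_w^2/\eta)$ and $\tfrac1m\E[\sum_j\norm{g_j}^2] = O(\zeta L^\star + \zeta\rho_w^2/(\eta m))$. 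Convexity of $\popl$ then gives $\E[\popl(\bar w_m)] \le \tfrac1m\sum_j\E[\popl(w_j)] \le b_{\max} + \tfrac1m F$, and assembling the pieces, $\E[\popl(\bar w_m)] - \popl(w^*) = O\!\big(b_{\max} + \tfrac{\rho_w^2}{\eta m} + \eta\zeta L^\star\big)$. The stated $\eta = \min\{\rho_w/\sqrt{2\zeta m L^\star},\,1/(2\zeta)\}$ balances the last two terms at $O(\rho_w\sqrt{\zeta L^\star/m})$ when $L^\star$ dominates and $O(\rho_w^2\zeta/m)$ otherwise; since $\zeta = \wt O(\rho_x^2)$ and $b_{\max} = \wt O(\rho_w^2\rho_x^2/m)$, requiring the whole bound to be $\le\beta$ gives $m = \wt O(\rho_w^2\rho_x^2/\beta + \rho_w^2\rho_x^2 L^\star/\beta^2)$, with the hidden logs being exactly the $\log(km)$ and $\log(1+\rho_y/(\rho_w\rho_x))$ of Step 1's $\theta$.

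\textbf{Main obstacle.} The delicate point is the fixed-point step in Step 3: because Step 2 bounds the stochastic gradient norms in terms of the \emph{random, iterate-dependent} loss values $\wt\ell(w_j,z_j)$ rather than a deterministic constant, the usual bounded-gradient SGD analysis does not apply verbatim, and one must verify that the induced inequality for $F$ (equivalently, for the expected regret) can be closed — which is precisely why the step size must be capped at the $O(1/\zeta)$ level that appears in the theorem, and why a clean definition of $\zeta$ is needed. A secondary but unavoidable technical point is obtaining the vector-concentration bound of Step 1 with good enough constants that the prescribed $\theta$ genuinely drives $b_{\max}$ down to the level of the SGD "range" term; this is routine but must be handled with care because $\ell(w,z_j)$ itself scales linearly with $k$.
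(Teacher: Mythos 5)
Your proposal is correct and follows essentially the same route as the paper's proof: truncation bias controlled via a vector Hoeffding bound, a self-bounding gradient inequality enabled by the truncation, the standard online-gradient-descent regret bound with the self-referential term absorbed by the step-size cap $\eta \leq 1/(2\zeta)$, and an online-to-batch conversion via Jensen. The only (immaterial) difference is that you derive $\norm{\nabla \wt\ell_j(w)}^2 = O(\zeta)\,\wt\ell_j(w)$ by direct computation of the gradient, whereas the paper obtains it by bounding the Hessian to establish $\zeta$-smoothness and then invoking the descent lemma for nonnegative smooth functions.
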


In particular, in the nearly realizable case when $L^\star \approx 0$, the theorem gives a fast $O(1/m)$ rate of convergence, as opposed to the $O(1/\sqrt{m})$ rate obtained in the non-realizable case (where $L^\star \gg 0$).

The sample complexity above reads $n= \wt O(k/\beta^2)$ in the non-realizable case, and $n= \wt O(k/\beta)$ in the realizable case.
It is instructive to compare to the SGD result from \citet{10.5555/3666122.3666778} (Thm 6.1 therein), whose sample bound is of the form $n = \wt O(k^2/\beta^2)$. Thus, in addition to obtaining fast rates, we shave a factor $k$ from the sample complexity. On the other hand, it is fair to say that the result in \cite{10.5555/3666122.3666778} applies to more general convex losses.
\begin{proof}[Proof sketch.]
Our basic approach to \cref{alg:sgd} and establishing \cref{thm:sgd-main} is to observe that the algorithm performs online projected gradient updates with respect to the loss sequence $\wt\ell(\cdot,z_1),\ldots,\wt\ell(\cdot,z_m)$, and to appeal to an online-to-batch argument for obtaining a population-level excess risk bound for this algorithm. Crucially, our analysis leverages the fact that the non-truncated bag-level loss $\ell(w,z_j)$ is an unbiased estimator of the population square loss at the individual example level (recall Equation (\ref{e:unbiasedness})), as well as the \emph{smoothness} of the (truncated) squared loss functions to obtain fast rates in (nearly) realizable scenarios.
\end{proof}

\section{Experiments}\label{s:experiments}
In this section we compare our proposed LLP loss to baseline losses on several datasets and models following closely the setup of \citet{10.5555/3666122.3666778}.

\subsection{Experimental Setup}
We prepare each training dataset by shuffling the data, partitioning it into consecutive bags of size $k$, and replacing the labels within each bag by their average.
Then we train models on the LLP data as follows:
On each training epoch (a single pass through the processed training data), we shuffle the order of the bags and group them into batches containing a fixed number $N$ of examples.
The examples within each bag are the same on each epoch, only the order of the bags is permuted.
On all datasets we choose a batch size of $N = 1024$ and use bag sizes $k = 2^i$ for $i = 0, \ldots, 9$.
For each batch, we compute the gradient of the aggregate loss function and use Adam \citep{kingma2015adam} to update the model parameters.
After completing $E$ training epochs, we evaluate the model's  test accuracy at the level of individual examples $(x,y)$.

We repeat the above training procedure 10 times for each loss function, dataset, bag size, and Adam learning rate in the set $\{10^{-6}, 5\cdot 10^{-6}, 10^{-5}, 10^{-4}, 5\cdot 10^{-4}, 10^{-3}, 5\cdot 10^{-3}, 10^{-2}\}$.
Each repetition uses a different partition of the training data into bags and different random model initialization.
For each bag size and aggregate loss, we report the average test accuracy at the end of $E$ training epochs (that is, cycles through the data) achieved by the best performing learning rate.

\subsection{Datasets and Models}
We conduct our experiments on the following datasets and models.
We use versions of MNIST \citep{lecun2010mnist} and CIFAR-10 \citep{Krizhevsky09Cifar} with binary labels, together with the Higgs \citep{Baldi2014Higgs} and UCI Adult \citep{UCIAdult} datasets, which are already binary tasks.
Following \citet{10.5555/3666122.3666778}, we binarize the labels of MNIST based on whether they are even or odd and, for CIFAR-10, we replace the original labels by whether the image depicts an animal (bird, cat, deer, dog, frog, horse) or machine (airplane, automobile, ship, truck).
On MNIST and CIFAR-10 we train CNN models, while on Higgs and UCI Adult we train fully connected networks.
Full details of the datasets and models are given in \Cref{a:experiments}.

\begin{figure*}[t!]
    \centering
    \includegraphics[width=0.32\textwidth]{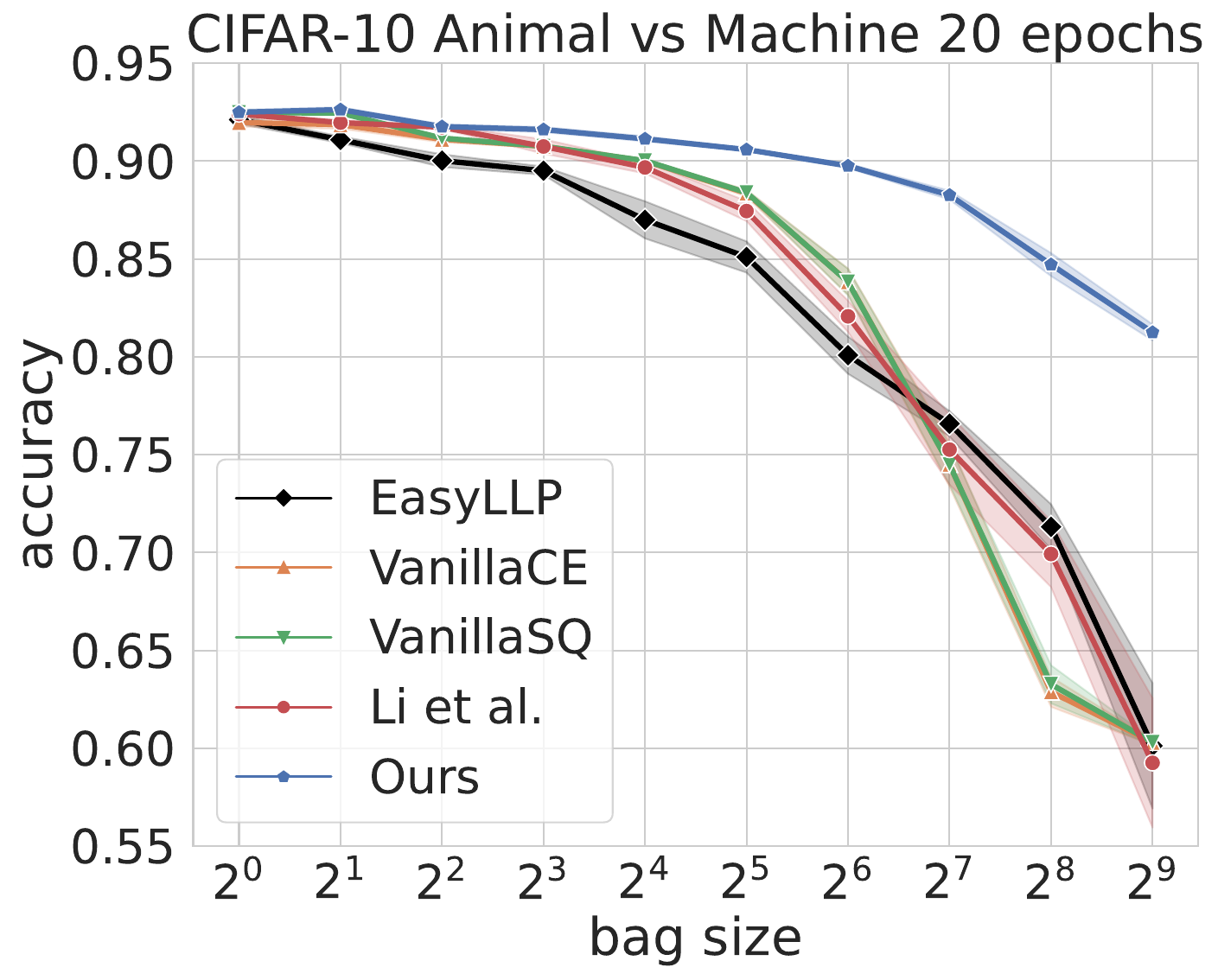}
    \includegraphics[width=0.32\textwidth]{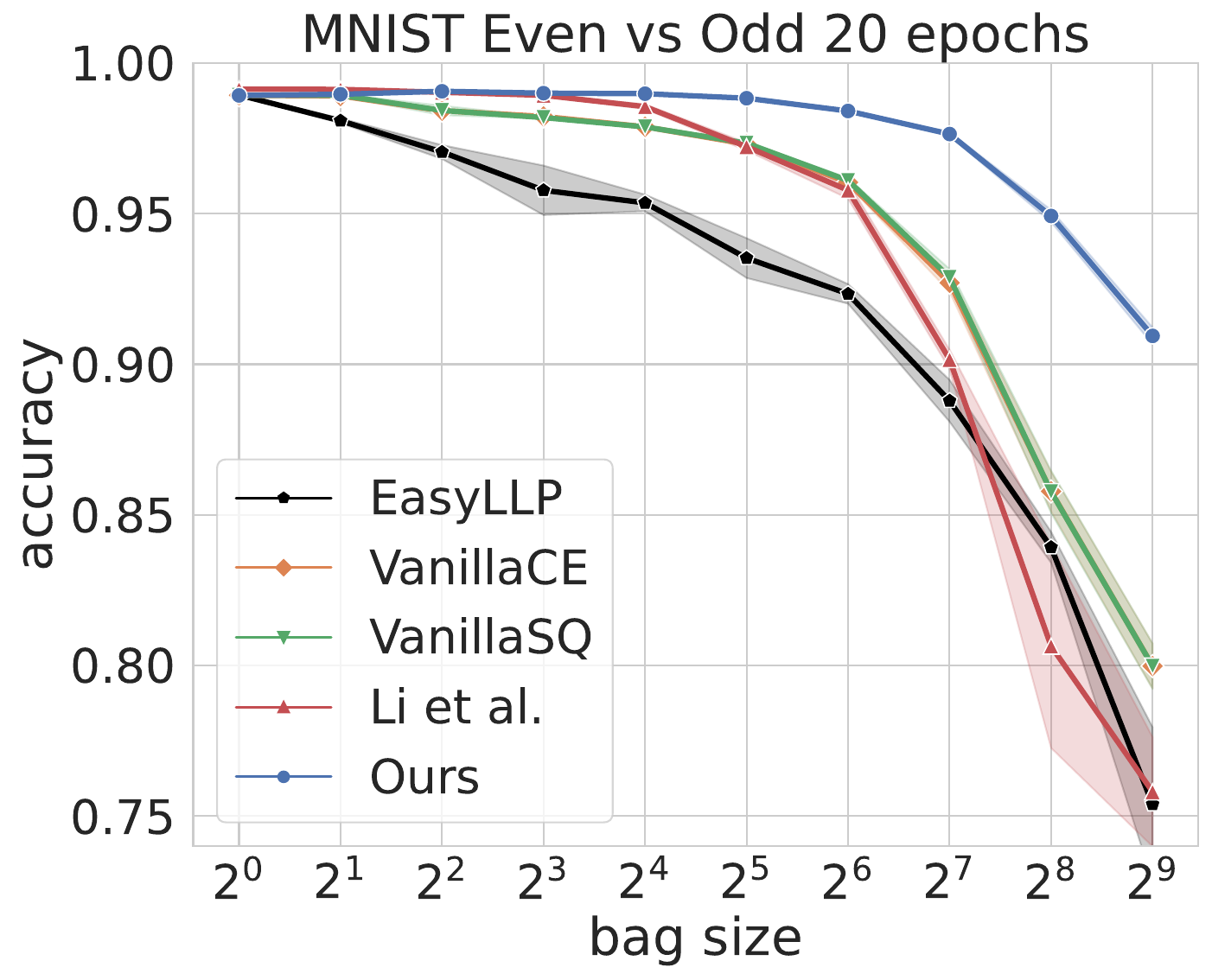}    
    \includegraphics[width=0.32\textwidth]{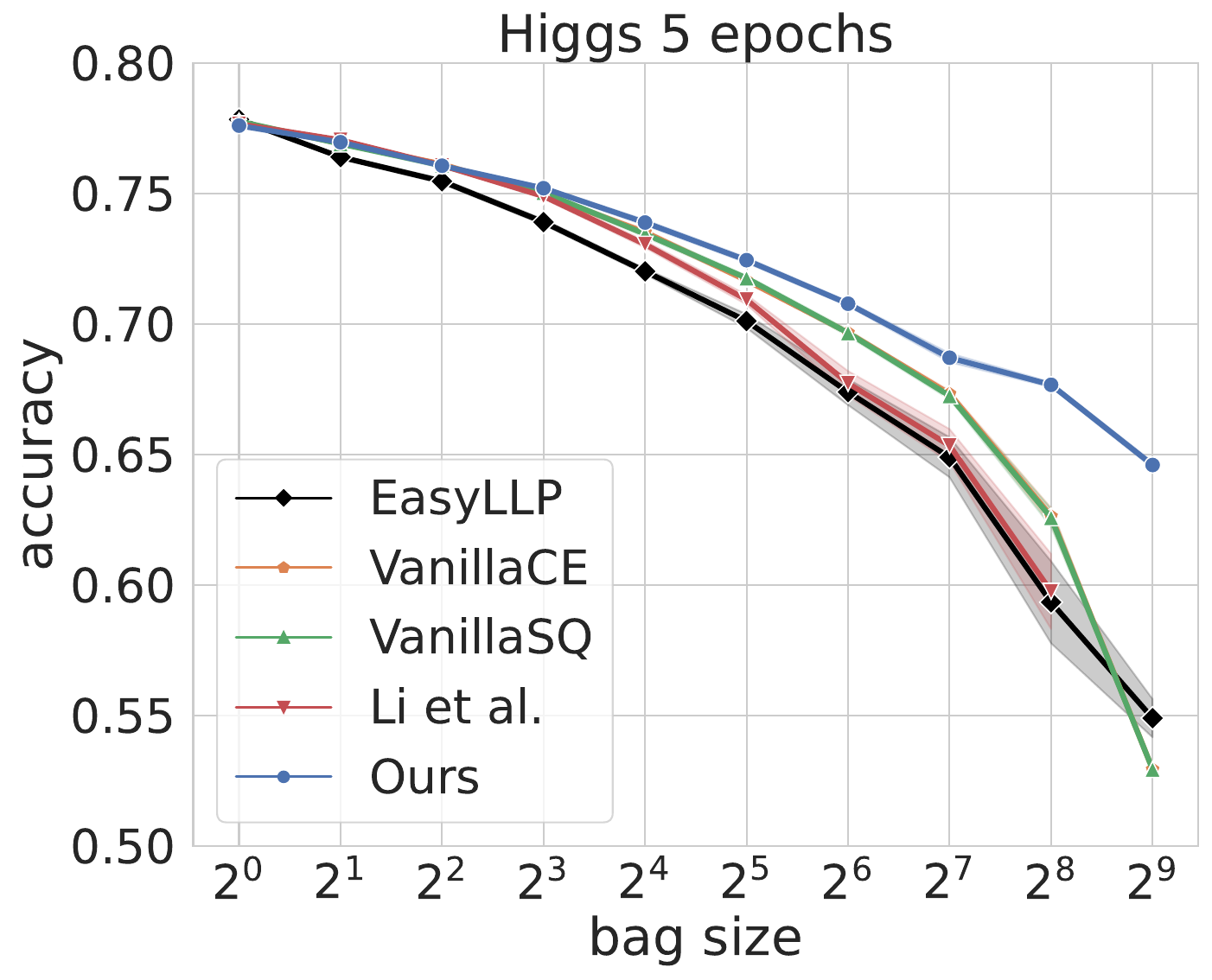}
    \\
    \includegraphics[width=0.32\textwidth]{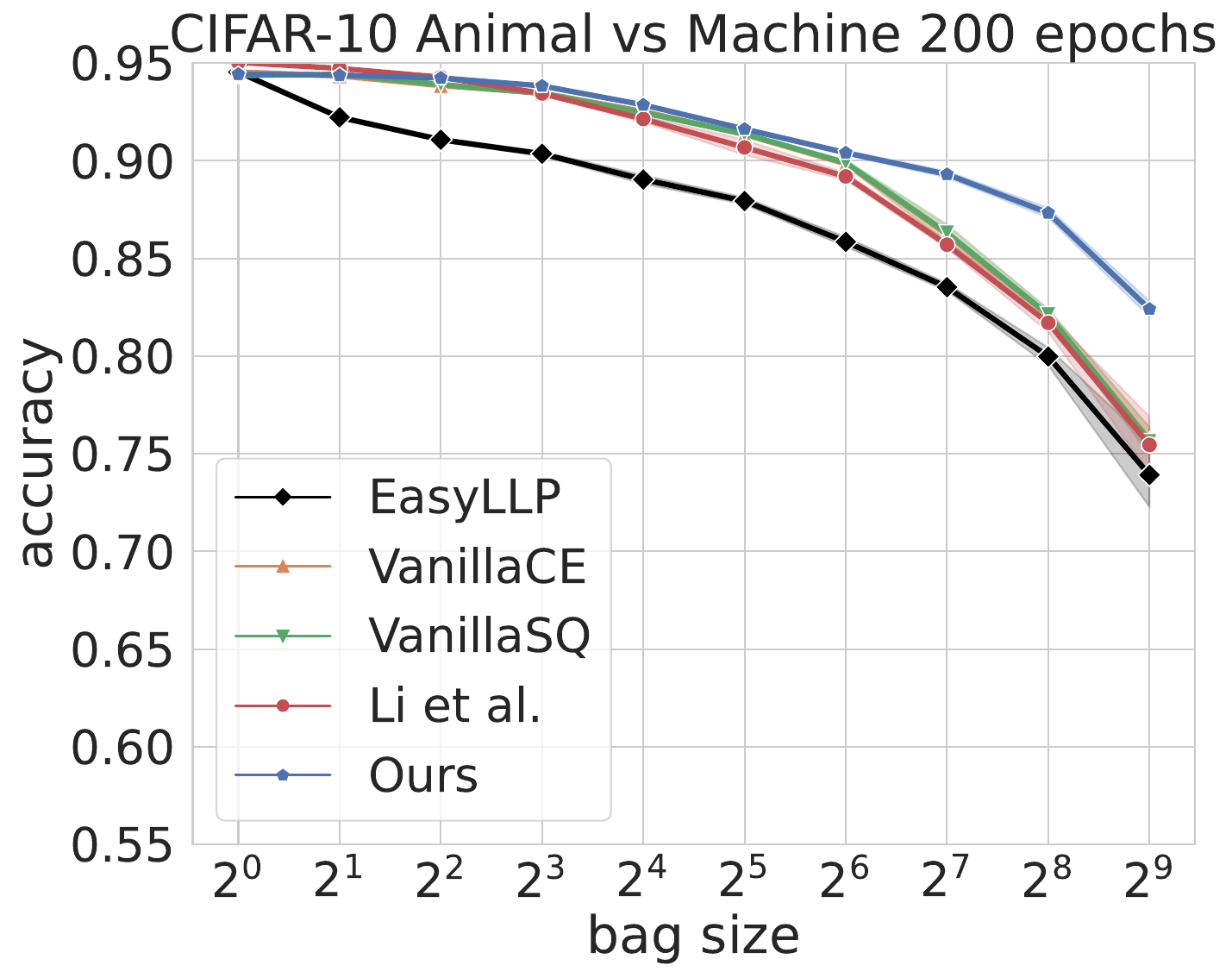}
    \includegraphics[width=0.32\textwidth]{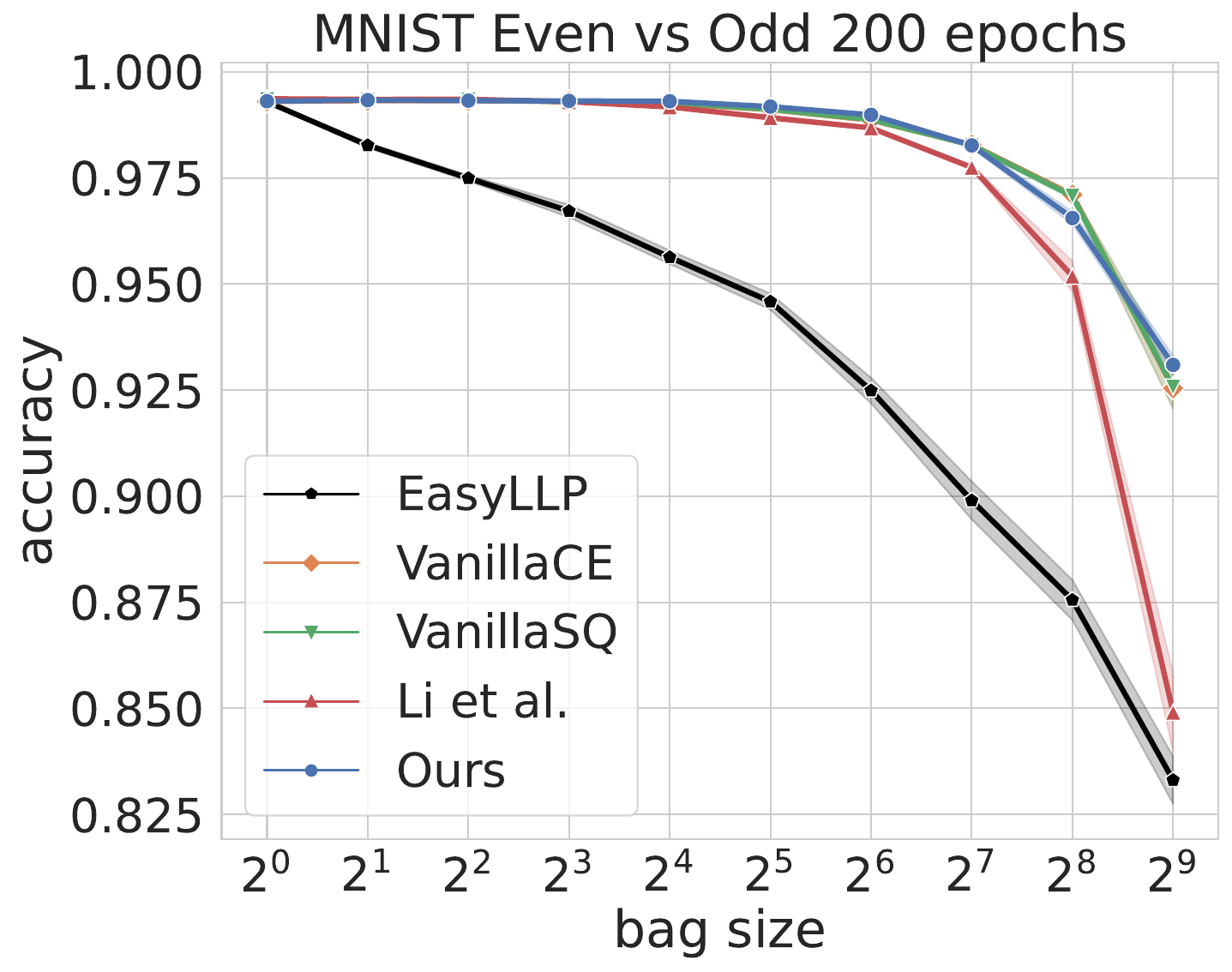}
    \includegraphics[width=0.32\textwidth]{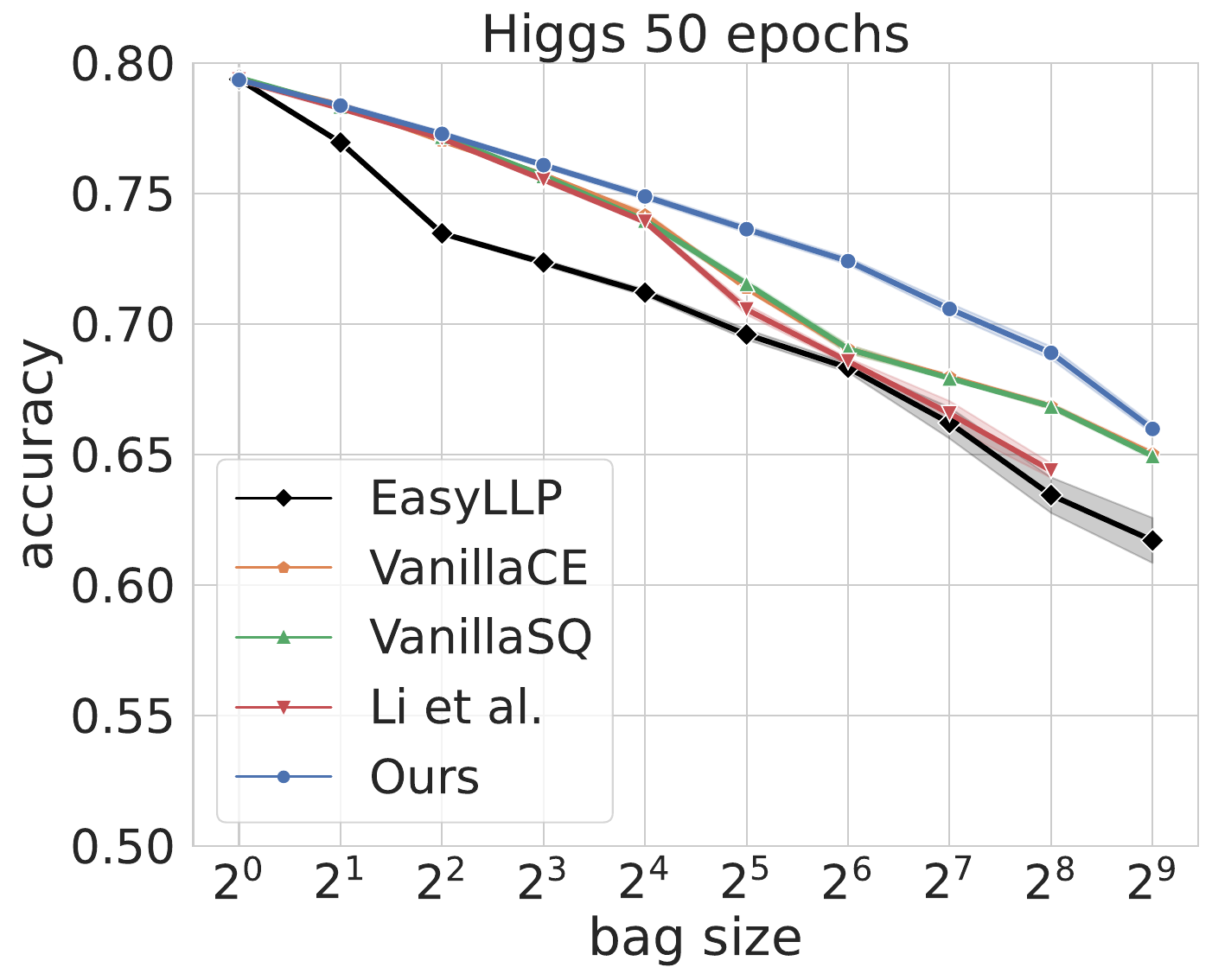}
    \caption{Plots showing the final test accuracy of models trained on selected datasets as a function of the LLP bag size and number of training epochs. Error bars show one standard error in the average over 10 repetitions of each run.
    Each data point represents the accuracy achieved by a learning rate tuned for that bag size and loss.
    Some of the curves are not visible since they are overlapping. This is the case, in particular, for \textsc{VanillaCE}, which is often overalapped with \textsc{VanillaSQ}. 
    }
    \label{fig:mainBodyResults}
\end{figure*}

\subsection{Aggregate Losses}
We train models using five aggregate losses.
\textsc{Ours} refers to the unclipped\footnote
{
Our choice of using the unclipped loss rather than the clipped one was mainly practical. On one hand, the clipped loss introduces one more parameter to tune (the clipping constant $\theta$). On the other, we observed in  preliminary experiments that our method works well even without clipping. 
} 
version of our proposed loss, given in \eqref{e:nonclipped_loss}.
\textsc{Li et al.} is the aggregate loss proposed by \citet{l+24}, which is also an unbiased estimate of the squared loss at individual examples, but suffers from higher variance.
\textsc{VanillaSQ} and \textsc{VanillaCE} refer to the proportion matching losses where we compare the model's average prediction to the label proportion using squared loss and binary cross-entropy, respectively.
Finally, \textsc{EasyLLP} refers to the unbiased loss estimate of \citet{10.5555/3666122.3666778} applied with the binary cross-entropy loss.
Full details of the aggregate losses are given in \Cref{a:experiments}, including the way we implemented the estimation of the unknown quantities $\E[h(x)]$ and $p$ for \textsc{Ours}.

\subsection{Results}
\Cref{fig:mainBodyResults} shows the final test accuracy of each aggregate loss at various bag sizes and number of training epochs (some of the curves are not visible since they are overlapping to one another).
The aggregate losses \textsc{Ours}, \textsc{Li et al.}, and \textsc{EasyLLP} are all unbiased estimates of the population loss at the level of individual examples.
However, as we will see in \Cref{sec:variance}, the variance of \textsc{Ours} grows significantly slower with the bag size $k$ than that of \textsc{Li et al.} and \textsc{EasyLLP}.
In comparison, the \textsc{Vanilla} baselines are not unbiased.
At every bag size and training duration, our proposed method achieves the highest accuracy.
In comparison, even though the other methods perform reasonably well at low bag sizes, once $k$ is sufficiently large, the high variance of these method eventually causes them to perform far worse worse.\footnote
{
In this respect, we need to point out that our original implementation of the method from \cite{l+24} did contain a measurement bug that made it perform better than reported here.
This bug was contained in the original submission of this paper, and has now been fixed.
} 
We also note that \textsc{Ours}
tends to widely outperform the other baselines especially when the number of training epochs is lower, suggesting that these losses lead to more computationally efficient training.

As for comparison to the method proposed by \citet{dulac2019deep}, we note that their method has already been shown to underperform in comparable experimental setups. For instance, on CIFAR-10, \citet{10.5555/3666122.3666778} report a test set accuracy of around 0.65 with bag size $k = 2^6$ (see Figure 4 therein), while all the methods we tested are above 0.8 on a similar CNN architecture. On Higgs, with the very same NN architecture (the one originating from \cite{Baldi2014Higgs}), all the methods we tested have test set accuracy which is above by at least 5\% across all large enough bag sizes. E.g., for $k = 2^6$, \citet{10.5555/3666122.3666778} report a test accuracy below $0.6$ (see Figure 6 therein), while all our methods are above $0.65$.

\subsection{Variance comparison}\label{sec:variance}
\begin{figure}
    \centering
    \includegraphics[width=0.7\columnwidth]{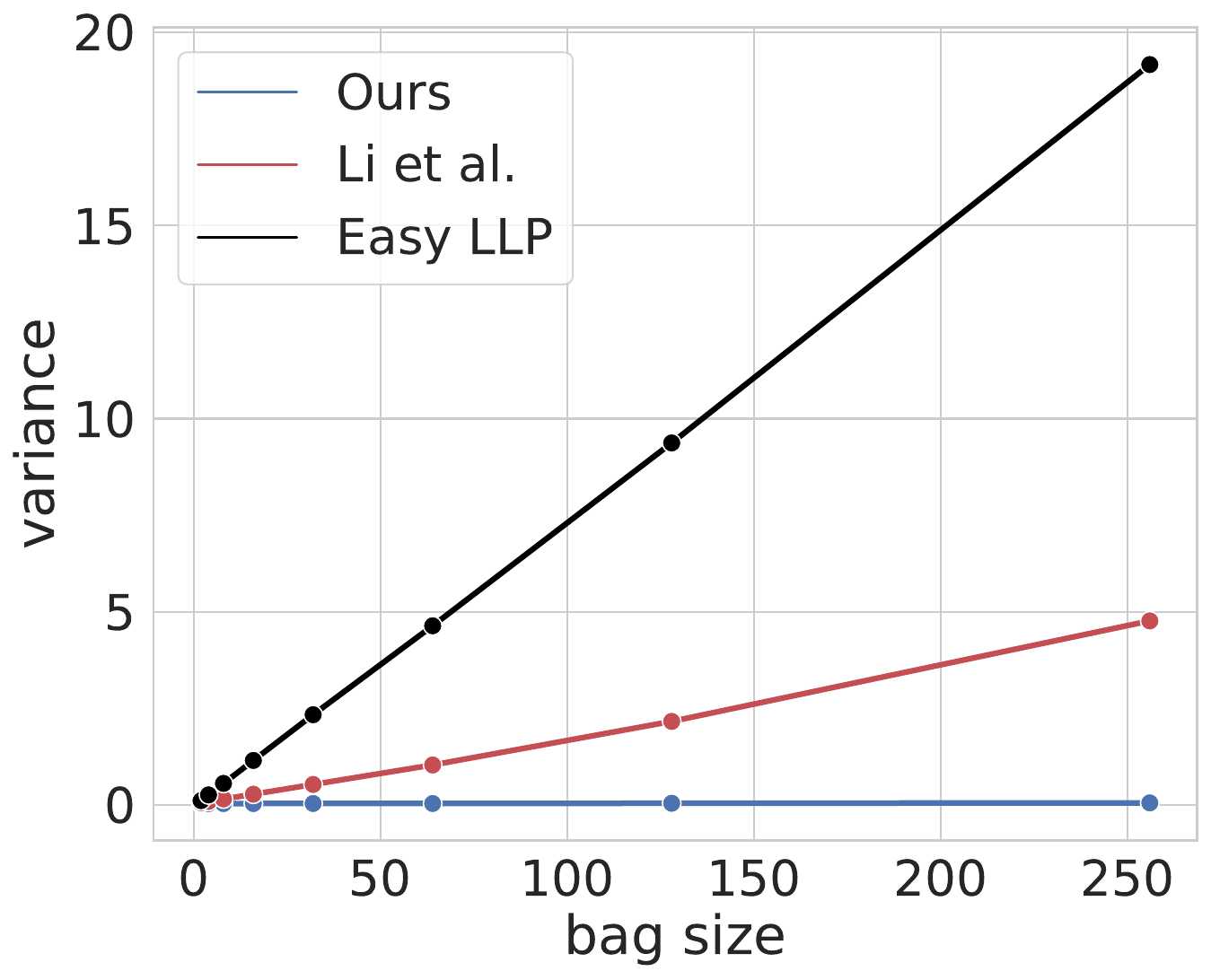}
    \caption{Plot showing how the variance of the loss estimate of \textsc{Ours, Li et al} and \textsc{EasyLLP}, grows with the bag size in the simple setting described in \Cref{sec:variance}. Both \textsc{Li et al.} and \textsc{EasyLLP} have variacne that grows linearly with the bag size, while \textsc{Ours} has constant variance, close to zero.}
    \label{fig:variance}
\end{figure}

In this section we compare the variance of the loss estimates produced by our method, \textsc{Li et al.}, and \textsc{EasyLLP} for a simple synthetic data distribution and model $\hat h$ defined below.
Recall that all three of these methods produce unbiased estimates of the squared loss on individual examples, so the variance characterizes how reliable those estimates are.
We showed that the variance of the loss estimate computed from a single bag using our method does not grow with $k$, while the variance of all prior methods does.

We consider the simple setting where $x$ is uniform in $[0,1]$, and the distribution of $y$ is $\Pr(y=1 \mid x) := h^*(x) = x^2$. We estimate the loss of the model $\widehat h(x) = x$.
Each method is used to estimate $\E[(y - \widehat h(x))^2]$ based on LLP data.
We compute the label marginal $p$ exactly, and estimate $\E[\widehat{h}(x)]$ in the same way as our learning experiments.
That is, we break that data into batches containing 1024 examples (regardless of bag size) and for each bag within the batch, we estimate $\E[\widehat h(x)]$ as the sample average of $\widehat h(x)$ over the batch excluding that bag.

In \Cref{fig:variance} we plot the empirical variance of the loss estimates computed from $n = 2^{20}$ examples worth of bags for bags of size $k = 2, 4, 8, 16, 32, 64, 128, 256$.
We see that both \textsc{Li et al.} and \textsc{EasyLLP} have empirical variance that grows linearly with the bag size, while \textsc{Ours} is essentially constantly equal to $0.034$.
We remark that the variance of \textsc{Ours} does grow slightly to $0.055$ at bag size $k=256$, but this is because, as the bag size grows, there are fewer remaining examples within the batch to estimate $\E[\widehat{h}(x)]$.

\section{Conclusions, Limitations, and Ongoing Research}\label{s:conclusions}
We have studied LLP in the relevant case where bags are drawn i.i.d. according to a fixed but unknown distribution over $\cX\times\cY$. Via suitable variance reduction techniques, we have proven tight bounds on the sample complexity in realizable and non-realizable settings, with substantial improvements over prior art in comparable statistical learning settings. 

We have empirically contrasted our variance reduction methods to available LLP baselines on a variety of datasets and underlying model architectures, showing superior test set performance at the individual example level. The performance gap is especially remarkable for big bag sizes, the regime that matters the most for LLP applications.

As for current limitations, this paper could be extended along a number of directions.
\begin{itemize}
    \item We would like to achieve regret guarantees also for  practically relevant loss functions beyond square loss, like the log loss for ERM and more general convex losses than square loss for SGD. The extension to log loss, for instance, is nontrivial, as the log loss is itself an {\em unbounded} loss function. This is a loss function commonly adopted in DNN training.
    \item As currently presented, our analysis only applies to binary classification. A suitable extension to multiclass classification would be important in practice.
    \item Another practically important extension is the case when the examples within the bags or the bags themselves come from distributions that drifts over time. This is another practically relevant scenario where a low variance estimator is expected to have an edge, thanks to its higher adaptivity.
    \item There are also technical limitations related to some of the formal statements we provided. For instance, Theorem \ref{t:warmup} applies only when $k$ is big enough ($k = \Omega(1/\beta)$), not for smaller $k$.
\end{itemize}

\newpage

\section*{Impact Statement}

This paper presents work whose goal is to advance the field of 
Machine Learning. There are many potential societal consequences 
of our work, none which we feel must be specifically highlighted here.

\newpage
\appendix
\onecolumn

\section{Proofs for Section \ref{s:warmup}}
This appendix contains the proofs for the theorems contained in the warmup section.

\subsection{Proof of Theorem \ref{t:warmup}}
In order to quantify the probability of error of Algorithm \ref{a:warmup}, suppose $|\Delta| \geq \sqrt{\frac{\beta}{2k}}$, with $\hs = h_1$. Note that we can rewrite
\[
A_1 = \sum_{j=1}^m \sum_{i=1}^k (y_{j,i} - h_1(x_{j,i}))
\qquad
A_2 = \sum_{j=1}^m \sum_{i=1}^k (y_{j,i} - h_2(x_{j,i}))~.
\]
Also, ${\E}_{(x,y)} [y - h_1(x)] = 0$ while ${\E}_{(x,y)}  [y  - h_2(x)] = \Delta$. 
Set $C_{\delta}(m,k) = \sqrt{\frac{mk}{2}\log \frac{4}{\delta}}$. Then, from a standard concentration inequality (Hoeffding) we have, with probability at least $1-\delta$ over the random draw of $S$,
\begin{align*}
|A_1(S)| 
&\leq 
C_{\delta}(m,k) \qquad {\mbox{and}} \qquad
|A_2(S) - mk\Delta| 
\leq 
C_{\delta}(m,k)
\end{align*}
so that, with the same probability,
\[
|A_1(S)| 
\leq 
C_{\delta}(m,k) \qquad {\mbox{and}} \qquad
|A_2(S)|
\geq mk|\Delta| - C_{\delta}(m,k) > C_{\delta}(m,k)~,
\]
the last inequality holding provided $mk > \frac{2}{\Delta^2}\,\log \frac{4}{\delta}$.

A symmetric argument holds if $\hs = h_2$. 
We conclude that if $|\Delta| \geq \sqrt{\frac{\beta}{2k}}$ the total sample size $n=mk$ needed to insure $\PP(\widehat h \neq \hs) \leq \delta$ is 
\[
n = O\left( \frac{1}{\Delta^2}\,\log \frac{1}{\delta}\right)~.
\]

On the other hand, in the sequel we also show that when $|\Delta| < \sqrt{\frac{\beta}{2k}}$, the sample size $n=mk$ also satisfies
\[
n = O\left( \frac{k}{\beta}\,\log \frac{1}{\delta}\right)~.
\]
Since $\frac{1}{\Delta^2} < \frac{2k}{\beta}$ (that is, the first bound is better than the second) if and only if $|\Delta| \geq \sqrt{\frac{\beta}{2k}}$, this will show that the overall sample size indeed satisfies
\[
n = O\left(\min\left\{ \frac{1}{\Delta^2},\frac{k}{\beta}\right\}\,\log \frac{1}{\delta}\right)~.
\]

We now continue by analyzing the probability of error when Algorithm \ref{a:warmup} relies on the sign of $Q(S)$, under the condition $|\Delta| < \sqrt{\frac{\beta}{2k}}$.

Denote for brevity by $\PP'(\cdot)$ the conditional probability $\PP(\cdot\,|\,\bag_1,\ldots, \bag_m)$, that is, the probability on the labels conditioned on the values of the $x_{j,i}$ on all bags $\bag_{j} =\{x_{j,1},\ldots,x_{j,k}\}$ in $S$.

Suppose $\Delta^2 \leq \beta/2$ (which is implied by the condition $|\Delta| < \sqrt{\frac{\beta}{2k}}$) and $k \geq \frac{2}{\Phi^2(-1)\beta}$, where $\Phi(t)$ is the cumulative distribution function of the standard Gaussian distribution.

We say that $\bag_j$ is {\em discriminative} for $\{h_1,h_2\}$ at threshold $\theta$ if
\[
{\E}_{j,1} \geq {\E}_{j,2} +\theta \qquad {\mbox{or}}\qquad 
{\E}_{j,2} \geq {\E}_{j,1} +\theta~,
\]
for a suitable $\theta = \theta(\beta,k, \delta) > 0$ that we set later on. Let us rewrite $Q$ so as to single out discriminative and nondiscriminative bags:
\begin{align*}
Q 
&= 
\sum_{j=1}^m \{{\E}_{j,1}\leq {\E}_{j,2} -\theta\} (k\alpha_j - \mu_j) 
+ 
\sum_{j=1}^m\{{\E}_{j,2} -\theta<{\E}_{j,1}\leq {\E}_{j,2}\} (k\alpha_j - \mu_j)\\
&\quad+
\sum_{j=1}^m\{{\E}_{j,1} > {\E}_{j,2}+\theta\} (\mu_j-k\alpha_j)
+
\sum_{j=1}^m\{{\E}_{j,2}+\theta \geq {\E}_{j,1} > {\E}_{j,2}\} (\mu_j-k\alpha_j)~.
\end{align*}

Assume $\hs = h_1$. We want then to bound the probability that $Q \geq 0$ (that is, the probability that we make the wrong decision).

Now,
\begin{eqnarray*}
\{{\E}_{j,1}\leq {\E}_{j,2} -\theta\} (k\alpha_j - \mu_j) 
&\leq& 
\{{\E}_{j,1}\leq {\E}_{j,2} -\theta\} (k\alpha_j - {\E}_{j,1} - \theta/2)\\ 
\{{\E}_{j,2} -\theta<{\E}_{j,1}\leq {\E}_{j,2}\} (k\alpha_j - \mu_j) 
&\leq& 
\{{\E}_{j,2} -\theta < {\E}_{j,1}\leq {\E}_{j,2}\} (k\alpha_j - {\E}_{j,1})\\
\{{\E}_{j,1}> {\E}_{j,2} +\theta\} (\mu_j - k\alpha_j) 
&\leq& 
\{{\E}_{j,1}> {\E}_{j,2} +\theta\} ({\E}_{j,1}-\theta/2- k\alpha_j) \\
\{{\E}_{j,2}+\theta \geq {\E}_{j,1} > {\E}_{j,2}\} (\mu_j-k\alpha_j) 
&\leq& 
\{{\E}_{j,2}+\theta \geq {\E}_{j,1} > {\E}_{j,2}\} ({\E}_{j,1}-k\alpha_j)~,
\end{eqnarray*}

so that
\[
{\PP}'(Q \geq 0) \leq {\PP}'(Q_1 \geq 0)~,
\]
where
\begin{align*}
Q_1 
&= 
\sum_{j=1}^m \{{\E}_{j,1}\leq {\E}_{j,2} -\theta\} (\Sigma_{j,1} - \theta/2) 
+ 
\sum_{j=1}^m \{{\E}_{j,2} -\theta < {\E}_{j,1}\leq {\E}_{j,2}\} \Sigma_{j,1}\\
&\quad+
\sum_{j=1}^m \{{\E}_{j,1}> {\E}_{j,2} +\theta\} (-\Sigma_{j,1}-\theta/2) 
+
\sum_{j=1}^m \{{\E}_{j,2}+\theta \geq {\E}_{j,1} > {\E}_{j,2}\} (-\Sigma_{j,1})
\end{align*}
being $\Sigma_j = k\alpha_j - {\E}_{j,1}$ a (conditionally) zero-mean random variable. Introduce the short-hands
\begin{align*}
m_1 &= \sum_{j=1}^m \{{\E}_{j,1}\leq {\E}_{j,2} -\theta\} + \sum_{j=1}^m \{{\E}_{j,1}> {\E}_{j,2} +\theta\}\\
m_2 &= \sum_{j=1}^m \{{\E}_{j,2} -\theta < {\E}_{j,1}\leq {\E}_{j,2}\} + \sum_{j=1}^m \{{\E}_{j,2}+\theta \geq {\E}_{j,1} > {\E}_{j,2}\}~,
\end{align*}
and note that the above quantities are fully determined by $\bag_1,\ldots, \bag_m$. Also, $m_1+m_2 = m$ for all $\theta \geq 0$.

Consider the four sums defining $Q_1$ as aggregated in pairs according to the definition of $m_1$ and $m_2$ above.
Observe that the random variables $y_{j,1}, y_{j,2},\ldots,y_{j,k}$, for all $j$ such that ${\E}_{j,1}\leq {\E}_{j,2} -\theta$ or such that
${\E}_{j,1} > {\E}_{j,2} +\theta$ are $m_1 k$-many conditionally independent Bernoulli random variables, each with its own bias. Hence we can apply standard concentration bounds (Hoeffding's inequality) to conclude that
\begin{align*}
{\PP}'&\left( \sum_{j=1}^m \{{\E}_{j,1}\leq {\E}_{j,2} -\theta\} (\Sigma_{j,1} - \theta/2) + \sum_{j=1}^m \{{\E}_{j,1}> {\E}_{j,2} +\theta\} (-\Sigma_{j,1}-\theta/2)> t_1 \right) \\
&=
{\PP}'\left( \sum_{j=1}^m \{{\E}_{j,1}\leq {\E}_{j,2} -\theta\} \Sigma_{j,1} - \sum_{j=1}^m \{{\E}_{j,1}> {\E}_{j,2} +\theta\} \Sigma_{j,1} > m_1\theta/2 + t_1 \right) \\
&\leq
\exp\left(-\frac{2(m_1\theta/2+t_1)^2}{m_1 k}\right)
\end{align*}
for all $t_1 > -m_1\theta/2$.
Similarly,
\[
{\PP}'\left( \sum_{j=1}^m \{{\E}_{j,2} -\theta < {\E}_{j,1}\leq {\E}_{j,2}\} \Sigma_{j,1}  - \sum_{j=1}^m \{{\E}_{j,2}+\theta \geq {\E}_{j,1} > {\E}_{j,2}\} \Sigma_{j,1} > t_2 \right) 
\leq
\exp\left(-\frac{2t^2_2}{m_2 k}\right)
\]
for all $t_2 > 0$.

Therefore, in order to bound ${\PP}'(Q_1 \geq 0)$, it suffices to select $t_1$ and $t_2$ in the above in such a way that $t_1+t_2 = 0$, with the constraints $t_1 > -m_1\theta/2$, and $t_2>0$, allowing us to conclude that
\[
{\PP}'(Q \geq 0) \leq \exp\left(-\frac{2(m_1\theta/2+t_1)^2}{m_1 k}\right) + \exp\left(-\frac{2t^2_2}{m_2 k}\right)~.
\]
The above constraints are easily satisfied if $m_1$ and $m_2$ are constant multiples of one another. For instance, suppose there exist positive constants $c_1$ and $c_2$ such that $m_i = c_i m$, and $c_1+c_2 = 1$. Then we may set
\[
t_1 = -\frac{c_1 m\theta}{4},\quad t_2 = \frac{c_1 m\theta}{4}
\]
obtaining
\begin{align}\label{e:bound}
{\PP}'(Q \geq 0) 
\leq 
\exp\left(-\frac{c_1\theta^2 m }{8k}\right) 
+ 
\exp\left(-\frac{c_1^2 \theta^2 m}{8c_2 k}\right) 
\end{align}
which is of the form $\exp\left(-\frac{\theta^2 m }{k}\right)$, provided $c_1$ and $c_2$ are constants independent of $m$ and $\theta$.

\bigskip

We expect to be able to set $\theta$ to be of the form $\theta =\beta\sqrt{k}$ via an anti-concentration argument.

Now we lower bound via anti-concentration the probability of drawing a bag $\bag_j$ that is
discriminative for $\{h_1,h_2\}$. Recall that $\Delta = \E[h_1(x)] - \E[h_2(x)]$. We can write
\begin{align*}
\{{\E}_{j,1} \geq {\E}_{j,2} + \theta\} + \{{\E}_{j,1} < {\E}_{j,2} - \theta\}
&=
\left\{\sum_{x\in B_{j}} (h_1(x) - h_2(x)) \geq \theta\right\} + \left\{\sum_{x\in B_{j}} (h_2(x) - h_1(x)) \geq \theta\right\}~.
\end{align*}
Hence, we are left to {\em lower} bound the expectation of the indicators above.
Now, the variables 
\[
Z_{j,i} = 
h_1(x_{j,i}) - h_2(x_{j,i}),\,\,\, i = 1,\ldots, k~,
\]
are i.i.d. random variables with range $[-1,1]$ and expectation $\Delta \in [-1,1]$. 

Thus we are compelled to leverage anti-concentration inequalities for sum of $[-1,1]$-valued i.i.d. random variables.

Since we are working under the assumption $\Delta^2 \leq \beta/2$ and $k \geq \frac{2}{\Phi^2(-1)\beta}$, an easy route is to go through Berry-Esseen's finite sample version of the Central Limit Theorem, and then rely on the anti-concentration of standard normal variables. One version of Berry-Esseen's theorem claims that for $\Sigma = \sum_{i=1}^k X_i$, where $X_1,\ldots, X_k$ are i.i.d. with zero mean, variance $\sigma^2 = \E[X^2]$, and finite third moment $\rho = \E[|X|^3]$ we have
\[
\sup_t |\PP(\Sigma \leq t\sigma\sqrt{k}) - \Phi(t)| \leq \frac{\rho}{2\sigma^3\sqrt{k}}~,
\]
where $\Phi(t)$ is the cumulative distribution function of the standard Gaussian. Note that by convexity $\rho \geq \sigma^3$. Moreover, $\rho \leq \sigma^2$ since in our case $|X|\leq 1$.

We have
\begin{align}
\E\left[\{{\E}_{j,1} \geq {\E}_{j,2} + \theta\}\right] + \E\left[\{{\E}_{j,1} <{\E}_{j,2} - \theta\}\right]
&=
\PP\left(\sum_{i=1}^k Z_{j,i} \geq \theta\right) + \PP\left(\sum_{i=1}^k Z_{j,i} < -\theta\right)\notag\\
&=
\PP\left(\sum_{i=1}^k (Z_{j,i}-\Delta) \geq \theta-k\Delta\right) +\PP\left(\sum_{i=1}^k (Z_{j,i}-\Delta) < -\theta-k\Delta\right)\notag\\
&\geq 
\Phi\left(-\frac{\theta-k\Delta}{\sigma\sqrt{k}}\right) + \Phi\left(-\frac{\theta+k\Delta}{\sigma\sqrt{k}}\right) - \frac{2\rho}{2\sigma^3\sqrt{k}}\notag\\
&\geq 
\Phi\left(-\frac{\theta-k\Delta}{\sigma\sqrt{k}}\right) + \Phi\left(-\frac{\theta+k\Delta}{\sigma\sqrt{k}}\right) - \frac{1}{\sqrt{\sigma^2\, k}}\notag\\
&{\mbox{(since $\rho \leq \sigma^2$)}}~,\notag\\
&\geq 
\Phi\left(-\frac{\theta-k\Delta}{\sigma\sqrt{k}}\right) + \Phi\left(-\frac{\theta+k\Delta}{\sigma\sqrt{k}}\right) - \sqrt{\frac{2}{\beta\, k}}\label{e:normal_diff}\\
&{\mbox{(since $\sigma^2 = \beta -\Delta^2  \geq \beta/2$, by the assumption $\Delta^2 \leq \beta/2$)}}~,\notag
\end{align}
where
\[
\sigma^2 = \E[(Z_{j,i}-\Delta)^2],\qquad \rho = \E[|Z_{j,i}-\Delta|^3]~.
\]
We set 
$\theta = k|\Delta|+\sigma\sqrt{k}$.
With this setting of $\theta$, if $\Delta \geq 0$ then $\Phi\left(-\frac{\theta-k\Delta}{\sigma\sqrt{k}}\right) = \Phi(-1)$. On the other hand, if $\Delta < 0$, then $\Phi\left(-\frac{\theta+k\Delta}{\sigma\sqrt{k}}\right) = \Phi(-1)$. In both cases 
$$
(\ref{e:normal_diff}) \geq \Phi(-1) - \sqrt{\frac{2}{\beta\, k}} \geq \Phi(-1)/2~,
$$
the last inequality using $k \geq \frac{2}{\Phi^2(-1)\beta}$.
This yields 
\[
\E\left[\{{\E}_{j,1} \geq {\E}_{j,2} + \theta\}\right] + \E\left[\{{\E}_{j,1} <{\E}_{j,2} - \theta\}\right]
\geq 
\Phi\left(-1\right)/2~.
\]
Thus, on a sample $S$ made up of $m$ i.i.d. bags, on average, a fraction of at least $m\Phi\left(-1\right)/2$ bags will be discriminative at threshold $\theta = k|\Delta|+\sigma\sqrt{k}$, that is,
\[
{\E}_{\bag_1,\ldots,\bag_m \sim \cD^{mk}_{\cX}}[m_1]\geq m\Phi\left(-1\right)/2~.
\]

Since the $m$ bags are i.i.d., the random variables $\{{\E}_{j,1} \geq {\E}_{j,2} + \theta\} + \{{\E}_{j,1} < {\E}_{j,2} - \theta\}$ are i.i.d. Bernoulli variables (w.r.t. the random draws of $\bag_1,\ldots, \bag_m$). From standard multiplicative Chernoff bounds, this implies that
\[
{\PP}_{\bag_1,\ldots,\bag_m \sim \cD^{mk}_{\cX}}\left(m_1 \leq \frac{m}{4}\,\Phi\left(-1\right)\right) \leq \exp\left(-\frac{m\,\Phi(-1)}{16}\right)~.
\]
In order to conclude, we get back to (\ref{e:bound}). Combining with the above yields
\begin{align*}
\E[\{Q \geq 0\}] 
&\leq 
\E\left[\{Q \geq 0, m_1 > \frac{m}{4}\,\Phi(-1)\}\right] + \E\left[\{m_1 \leq \frac{m}{4}\,\Phi(-1)\}\right]\\
&\leq
e^{-\Theta(\theta^2m/k)} + e^{-\Theta(m)}~.
\end{align*}
We want to make the above smaller than $\delta$. It suffices to pick
\[ 
m = \left(1+\frac{k}{\theta^2}\right)\,\log\frac{2}{\delta}~. 
\]
Now, observe how $\theta$ depends on the separation parameter $\beta$. Note that $\beta = \E[Z^2_{j,i}]$, and recall that $\sigma^2= \beta - \Delta^2$.
We have
\begin{align*}
\frac{\theta^2}{k} 
&= k\Delta^2+2|\Delta|\sigma\sqrt{k}+\sigma^2\\
&= (k-1)\Delta^2+2|\Delta|\sigma\sqrt{k}+\beta\\
&\geq (k-1)\Delta^2+\beta\\
&\geq \beta~,
\end{align*}
independent of the bias $\Delta$ (and in fact, we have equality to $\beta$ if $\Delta =0$). 
Thus 
the sample complexity $n=mk$ becomes
\[
n = O\left(\frac{k}{(k-1)\Delta^2+\beta}\,\log \frac{1}{\delta}\right)
=
O\left(\min\left\{ \frac{1}{\Delta^2},\frac{k}{\beta}\right\}\,\log \frac{1}{\delta}\right)~.
\]

The case $Q <0$ with $\hs = h_2$ is treated similarly, since it is completely symmetric. This proves the first part of Theorem \ref{t:warmup}

In order to prove the second half, consider the following simple problem. Let the input space $\cX$ be binary, $\cX = \{0,1\}$,  and the hypothesis space $\hyps$ be $\hyps =\{h_1,h_2\}$, where $h_1,h_2\,:\, \cX \rightarrow [0,1]$. Let the marginal distribution of $x$ be Bernoulli(1/2). Moreover, given parameter $\beta \in [0,1]$, the conditional distribution $\hs(x) = \eta(x) = \PP(y=1\,|\, x)$ is
$$
\begin{cases}
\PP(y=1\,|\, x = 1) = 1/2+\sqrt{\beta}/2
\qquad
\PP(y=1\,|\, x = 0) = 1/2-\sqrt{\beta}/2
\qquad 
&{\mbox{if $\hs = h_1$}}\\
\PP(y=1\,|\, x = 1) = 1/2-\sqrt{\beta}/2
\qquad
\PP(y=1\,|\, x = 0) = 1/2+\sqrt{\beta}/2
\qquad
&{\mbox{if $\hs = h_2$}}~.
\end{cases}
$$ 
Notice that this implies
\[
\PP(y=1) = \frac{1}{2}\PP(y=1 \,|\, x = 1) + \frac{1}{2}\PP(y=1 \,|\, x = 0) = \frac{1}{2}
\]
for both $\hs = h_1$ and $\hs = h_2$.

Moreover, for square loss,
\[
|\popl(h_1) - \popl(h_2)| = \E[(h_1(x)-h_2(x))^2] = \frac{1}{2}(h_1(1)-h_2(1))^2 + \frac{1}{2}(h_1(0)-h_2(0))^2 = \beta~.
\]

Consider an i.i.d. sample of $m$ bags of size $k$:
\[
S = \underbrace{((x_{1,1},x_{1,2},\ldots, x_{1,k}),\alpha_1)}_{(\bag_1,\alpha_1)},\ldots, \underbrace{((x_{m,1},x_{m,2},\ldots, x_{m,k}),\alpha_m)}_{(\bag_m,\alpha_m)}~,
\]
and any function $\widehat h\,:\, \{S\} \rightarrow \hyps$
that maps any such sample to $\hyps$. We consider the amount of information needed to perform the inference of separating $h_1$ from $h_2$ out of $S$ through the standard tools of (Shannon) information theory. Let then equip $\hs$ with a prior distribution
$$
\hs = 
\begin{cases}
h_1 &{\mbox{w.p. 1/2}}\\
h_2 &{\mbox{w.p. 1/2}}
\end{cases}
$$
and view $\hs$ itself as a random variable. Consider the mutual information $I(\hs,\widehat h)$. By the so-called data-processing inequality and the chain rule of mutual information we have
\[
I(\hs,\widehat h) \leq I(\hs,S) = \sum_{i=1}^m I(\hs; (\bag_j,\alpha_i)\,|\, (\bag_1,\alpha_1),\ldots, (B_{i-1},\alpha_{i-1}))~.
\]
Now, since $(\bag_j,\alpha_i)$ is conditionally independent of $(\bag_1,\alpha_1),\ldots, (B_{j-1},\alpha_{j-1})$ given\footnote
{
This is sometimes phrased by saying that these three variables form a Markov chain 
$$
(\bag_1,\alpha_1),\ldots, (\bag_{j-1},\alpha_{j-1}) \rightarrow \hs \rightarrow (\bag_j,\alpha_j)~,
$$
see, e.g., Chapter 2 in \cite{ctbook}.
} 
$\hs$, we also have, for all $j$,
\[
I(\hs; (\bag_j,\alpha_i)\,|\, (\bag_1,\alpha_1),\ldots, (B_{j-1},\alpha_{j-1})) \leq I(\hs; (\bag_j,\alpha_i))
\]
so that from the above we can write
\[
I(\hs,\widehat h) \leq m\,I(\hs; (\bag_1,\alpha_1))~.
\]
Moreover, by Fano's inequality,
\[
I(\hs,\widehat h) \geq 1-\bh({\PP}_e)~,
\]
where $\bh(\cdot)$ is the binary entropy function
\[
\bh(x) = - x\log_2 x - (1-x)\log_2(1-x)~, 
\]
and ${\PP}_e$ is the ``probability of error" ${\PP}_e = \PP(\widehat h \neq \hs)$. If we stipulate that we want the target excess risk $\beta$ to be achieved with probability $\geq 1-\delta$, with $\delta < 1/2$, this implies ${\PP}_e \leq \delta$ and $I(\hs,\widehat h) \geq 1-\bh(\delta)$.

Putting together
\[
1-\bh(\delta) \leq m I(\hs,(\bag_1,\alpha_1))
\]
which immediately yields the lower bound
\begin{equation}\label{e:lowerbound}
m \geq \frac{1-\bh(\delta) }{I(\hs; (\bag_1,\alpha_1))}~.
\end{equation}
We are then left with the problem of finding an upper bound on $I(\hs,(\bag_1,\alpha_1))$. Using the chain rule of mutual information, together with $I(X;Y) = H(Y) - H(Y|X)$, where $H$ denotes the Shannon entropy, we can write
\begin{align*}
I(\hs ; (\bag_1,\alpha_1)) 
&= \underbrace{ I(\hs; \bag_1)}_{=0} + I(\hs; \alpha_1\,|\, \bag_1)\\
&=  H(\alpha_1\,|\, \bag_1) - H(\alpha_1\,|\, \bag_1, \hs)~.
\end{align*}
Now, since
\begin{align*}
\PP(y = 1\,|\, x) 
&= \PP(y = 1\,\hs=h_1\,|\, x) + \PP(y = 1\,\hs=h_2\,|\, x)\\ 
&= \frac{1}{2}\PP(y = 1\,|\,\hs=h_1,\, x ) + \frac{1}{2}\PP(y = 1\,|\,\hs=h_2,\, x )\\ 
&= \frac{1}{2}
\end{align*}
independent of $x \in \{0,1\}$, the distribution of $\alpha_1$ given $\bag_1$ will be the distribution of a (scaled) $Binomial(k,1/2)$, and its entropy $H(\alpha_1\,|\, \bag_1)$ will be the same as the entropy of a $Binomial(k,1/2)$.

The following theorem from \cite{Adell_2010} provides useful bounds on it.
\begin{theorem}\label{t:binomial_entropy}
Let $X \sim Binomial(k,p)$, and $q = 1-p$. Then, for $p \in (0,1)$,\footnote
{
\cite{Adell_2010} also contains non-asymptotic upper and lower bounds, which are not reported here for brevity.
}
\[
H(X) = \frac{1}{2}\log_2(2\pi e k p q) + O\left(\frac{1}{k}\right)~.
\]
\end{theorem}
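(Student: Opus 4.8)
The plan is to sidestep the binomial PMF and exploit the combinatorial link between $H(X)$ and the entropy of the underlying Bernoulli trials. Write $X = \sum_{i=1}^k B_i$ with $B_1,\dots,B_k$ i.i.d.\ $\text{Bernoulli}(p)$. By independence $H(B_1,\dots,B_k) = k\,\bh(p)$, whereas conditioned on $\{X=j\}$ the string $(B_1,\dots,B_k)$ is uniform over the $\binom{k}{j}$ arrangements with $j$ ones, so $H(B_1,\dots,B_k \mid X) = \E\big[\log_2\binom{k}{X}\big]$. The chain rule $H(B_1,\dots,B_k) = H(X) + H(B_1,\dots,B_k\mid X)$ then gives the exact identity
\[
H(X) \;=\; k\,\bh(p) \;-\; \E\Big[\log_2\binom{k}{X}\Big],
\]
so the theorem reduces to showing $\E[\log_2\binom{k}{X}] = k\,\bh(p) - \tfrac12\log_2(2\pi e\, k p q) + O(1/k)$.

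For this I would insert the Stirling expansion $\log_2\binom{k}{j} = k\,\bh(j/k) - \tfrac12\log_2\!\big(2\pi k\,(j/k)(1-j/k)\big) + O(1/k)$, valid uniformly for $j/k$ in a fixed compact subinterval of $(0,1)$, and take expectations. This produces two pieces. For $k\big(\E[\bh(X/k)] - \bh(p)\big)$, Taylor-expand $\bh$ about $p$: the linear term vanishes since $\E[X/k]=p$, and the quadratic term gives $\tfrac{k}{2}\,\bh''(p)\,\Var(X/k) = \tfrac{pq}{2}\,\bh''(p) = -\tfrac12\log_2 e$ because $\bh''(p) = -1/(pq\ln 2)$; the higher-order terms are $O(1/k)$ using $\E[(X/k-p)^r] = O(k^{-\lceil r/2\rceil})$. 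For $\tfrac12\E\big[\log_2(2\pi k (X/k)(1-X/k))\big]$, since $t\mapsto\log_2(2\pi k\, t(1-t))$ is smooth near $p$, a first-order expansion about $t=p$ yields $\tfrac12\log_2(2\pi k p q) + O(1/k)$. Adding the pieces gives $\E[\log_2\binom{k}{X}] = k\,\bh(p) - \tfrac12\log_2 e - \tfrac12\log_2(2\pi k p q) + O(1/k)$, and substituting into the identity above produces $H(X) = \tfrac12\log_2(2\pi e\, k p q) + O(1/k)$.

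The step that genuinely needs care --- the main obstacle --- is passing from expansions that are valid only ``in the bulk'' to honest expectations, i.e.\ controlling the atypical values of $X$ near $0$ and near $k$, where both the Stirling formula and the logarithm $\log_2((j/k)(1-j/k))$ degenerate. I would fix a small constant $\varepsilon>0$ with $[p-\varepsilon,p+\varepsilon]\subset(0,1)$ and split the expectation on the event $G = \{|X/k - p|\le\varepsilon\}$ and its complement. On $G$ the displays above apply uniformly; on $G^{c}$ a multiplicative Chernoff bound gives $\PP(G^{c}) \le e^{-c k}$ for some $c=c(p,\varepsilon)>0$, while $0\le\log_2\binom{k}{X}\le k$ (the $X\in\{0,k\}$ atoms being handled directly, where $\log_2\binom{k}{X}=0$), so the $G^{c}$ contribution --- and the $G^{c}$ remainders of $k\,\bh(X/k)$ and $\log_2(2\pi k(X/k)(1-X/k))$, which are $O(1)$ and $O(\log k)$ respectively on $1\le X\le k-1$ --- is $e^{-ck}\,\mathrm{poly}(k) = O(1/k)$. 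It is precisely here that one uses that $p$ is a fixed constant in $(0,1)$, not scaling with $k$. An essentially equivalent alternative would be to start from a local central limit theorem with an Edgeworth correction for $\PP(X=j)$ and evaluate $-\sum_j \PP(X=j)\log_2\PP(X=j)$ directly; the bulk/tail bookkeeping is the same.
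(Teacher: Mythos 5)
Your proposal is correct, but it is worth noting that the paper does not actually prove this statement: Theorem~\ref{t:binomial_entropy} is quoted verbatim from \cite{Adell_2010}, which obtains it (together with sharp non-asymptotic versions) by analytic estimates on the entropy functional of the binomial and Poisson laws. Your argument is instead the classical elementary derivation, and it is sound. The exact identity $H(X) = k\,\bh(p) - \E\bigl[\log_2\binom{k}{X}\bigr]$ via the chain rule and the exchangeability of the Bernoulli string is correct, the Stirling expansion of $\log_2\binom{k}{j}$ is the right one (the $O(1/j + 1/(k-j))$ remainder is uniformly $O(1/k)$ on the bulk event $G$), and the two Taylor expansions produce exactly the constants needed: $\tfrac{pq}{2}\bh''(p) = -\tfrac12\log_2 e$ supplies the factor $e$ inside the logarithm. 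Crucially, you correctly identify and handle the one step where such derivations usually go wrong, namely the contribution of the tails, and your Chernoff-plus-polynomial-bound argument ($e^{-ck}\,\mathrm{poly}(k) = O(1/k)$ for fixed $p \in (0,1)$) closes it. One small slip: on $G^c$ with $1 \le X \le k-1$ the quantity $k\,\bh(X/k)$ is $O(k)$, not $O(1)$ as you write, but this is absorbed by your own $e^{-ck}\,\mathrm{poly}(k)$ estimate and does not affect the conclusion. What your route buys is a self-contained, fully elementary proof of exactly the asymptotic the paper needs; what the citation to \cite{Adell_2010} buys is explicit non-asymptotic two-sided bounds, which the paper's footnote acknowledges but does not use.
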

Thus,
\[
H(\alpha_1\,|\, \bag_1) = \frac{1}{2}\log_2(\pi e k/2)  + O\left(\frac{1}{k}\right).
\]

Also, the following upper and lower bounds from \cite{ha01} (Theorem 6 and Theorem 7 therein) on the entropy of a Poisson-Binomial distribution\footnote
{
A Poisson-Binomial random variable with parameters $p_1,\ldots, p_k$ is the sum of $k$ independent Bernoulli random variables, where the $i$-th Bernoulli variable has bias $p_i$.  
}
$PBin$ in terms of the entropy of related binomial distributions will be useful.
\begin{theorem}\label{thm:sandwich}
Let $X \sim PBin(p_1,\ldots,p_k)$, set $\mu = \sum_{i=1}^k p_i$, and $\bar k = \lfloor \frac{\mu}{p_{\max}}\rfloor$, where $p_{\max} = \max_i p_i$. Let $\bar B \sim Binomial(\bar k, \mu/{\bar k})$, and $B \sim Binomial (k,\mu/k)$ 
Then
\[
\frac{1}{2}\log_2(2\pi e \mu(1-\mu/{k})) + O\left(\frac{1}{k}\right) = H(B) \geq H(X) \geq H(\bar B) = \frac{1}{2}\log_2(2\pi e \mu(1-\mu/{\bar k})) + O\left(\frac{1}{\bar k}\right)~,
\]
the equalities coming from Theorem \ref{t:binomial_entropy}.
\end{theorem}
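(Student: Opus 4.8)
The plan is to obtain Theorem~\ref{thm:sandwich} by combining two maximum-entropy characterizations of the binomial law (Theorems~6 and~7 of \cite{ha01}) with the asymptotic expansion of the binomial entropy recorded in Theorem~\ref{t:binomial_entropy}. The two equalities on the outer sides are immediate from Theorem~\ref{t:binomial_entropy} applied with $(n,p) = (k,\mu/k)$ and with $(n,p) = (\bar k,\mu/\bar k)$, since for a Binomial$(n,p)$ law one has $n p (1-p) = \mu(1-\mu/n)$ in both cases; what genuinely needs arguing are the sandwiching inequalities $H(B) \ge H(X) \ge H(\bar B)$, which are purely information-theoretic facts about sums of independent Bernoulli variables.

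For the upper bound $H(X) \le H(B)$, I would invoke the result that, among all laws of sums of exactly $k$ independent Bernoulli variables with prescribed total mean $\mu = \sum_i p_i$, the entropy is maximized by the sum with all biases equal, i.e.\ by the Binomial$(k,\mu/k)$ law. The mechanism behind this is the Schur-concavity of $(p_1,\dots,p_k) \mapsto H\bigl(\sum_i \mathrm{Ber}(p_i)\bigr)$ on the mean-$\mu$ slice of the cube $[0,1]^k$. Since our $X \sim PBin(p_1,\dots,p_k)$ is exactly such a sum, the inequality applies once one checks the hypotheses match.

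For the lower bound $H(X) \ge H(\bar B)$, the operative constraint is $p_i \le p_{\max}$ for every $i$. The idea is that, subject to $\sum_i p_i = \mu$ and $p_i \le p_{\max}$, the entropy of the Bernoulli sum is pushed down by making the biases as unequal as the box constraint allows: concentrating the mass onto roughly $\bar k = \lfloor \mu/p_{\max}\rfloor$ summands near the ceiling $p_{\max}$ (plus a residual summand) yields the extremal configuration, and a final equalization step replaces it by the Binomial$(\bar k,\mu/\bar k)$ law. I would state the corresponding result of \cite{ha01} precisely, verify that $X$ meets its hypothesis ($\max_i p_i = p_{\max}$), conclude $H(X) \ge H(\bar B)$, and then read off the closed forms of $H(B)$ and $H(\bar B)$ from Theorem~\ref{t:binomial_entropy} with the stated $O(1/k)$ and $O(1/\bar k)$ remainders.

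The main obstacle lies in the lower-bound direction. Unlike the upper bound, where Schur-concavity at a \emph{fixed} number of summands suffices, here one must compare Bernoulli sums with \emph{different} numbers of non-degenerate summands (passing from the $k$ terms of $X$ down to the effective $\bar k$ terms), which additionally requires the monotonicity of $n \mapsto H(\mathrm{Binomial}(n,\mu/n))$ alongside the majorization argument. Since \cite{ha01} already establishes exactly these facts, the residual work in our setting is bookkeeping: aligning the hypotheses of Theorems~6 and~7 there with our Poisson-Binomial $X$, and substituting into Theorem~\ref{t:binomial_entropy}.
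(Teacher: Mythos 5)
Your proposal matches the paper exactly: the paper offers no proof of this theorem at all, importing the two sandwiching inequalities verbatim from Theorems 6 and 7 of \cite{ha01} and the closed-form expressions from Theorem \ref{t:binomial_entropy}, which is precisely the route you describe. The extra detail you sketch about the mechanisms behind the Harremo\"es results (Schur-concavity for the upper bound, the box-constrained extremal configuration for the lower bound) goes beyond what the paper records --- though note that your description of the final ``equalization'' step in the lower-bound argument would \emph{increase} entropy and hence points the inequality the wrong way; in any case, both you and the paper ultimately defer to the cited reference for these facts.
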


As for $H(\alpha_1\,|\, \bag_1, \hs)$, consider the following argument about the conditional distribution of $\alpha_1$ given $\bag_1$ and $\hs$. Let $\bag_1$ be a Boolean vector containing $s$ ones and $k-s$ zeroes. The position of these zeros and ones will be immaterial, so for definiteness, let us visualize $\bag_1$ as follows:
\begin{equation}\label{e:bag}
\bag_1 = [\underbrace{11\ldots 1}_s \underbrace{00\ldots 0}_{k-s}]~~.
\end{equation}
Suppose $\hs = h_1$, and let $Y_1^{(s)}$ and $Y_2^{(s)}$ be the random variables counting the number of $y_i = 1$ in bag $\bag_1$ among its first $s$ components, and among its last $k-s$ components, respectively. Since $\hs = h_1$, we clearly have 
$$
Y_1^{(s)} \sim Binomial(s,1/2+\beta)\qquad {\mbox{and}}\qquad Y_2^{(s)} \sim Binomial(k-s,1/2-\beta)~,
$$
the two variables being independent. Moreover, $k\alpha = \sum_{i=1}^k y_i = Y_1^{(s)} + Y_2^{(s)}$ has distribution 
$$
PBin(\underbrace{1/2+\sqrt{\beta}/2,\ldots,1/2+\sqrt{\beta}/2}_{s},\underbrace{1/2-\sqrt{\beta}/2,\ldots,1/2-\sqrt{\beta}/2}_{k-s})~.
$$
When $\hs = h_2$ it is easy to see that we simply have to swap $\sqrt{\beta}/2$ with $-\sqrt{\beta}/2$ in the two binomials, so that $k\alpha \sim PBin(\underbrace{1/2-\sqrt{\beta}/2,\ldots,1/2-\sqrt{\beta}/2}_{s},\underbrace{1/2+\sqrt{\beta}/2,\ldots,1/2+\sqrt{\beta}/2}_{k-s})$.

Consider the situation when $s=\sum_i x_i$ is such that $s \in A$, where
\begin{equation}\label{e:largedev}
A = 
\left\{s\,:\, k/2-\sqrt{k\log k}/2 \leq s \leq k/2+\sqrt{k\log k}/2\right\}~,
\end{equation}
so that when $k \rightarrow \infty$ both $s$ and $k-s$ diverge.

By Theorem \ref{thm:sandwich} (lower bound side), when $\hs=h_1$ and $s \in A$ we have
\begin{align*}
H(\alpha_1\,|\, s, h_1) 
&= H\left(\alpha\,\Bigl|\,\sum_i x_i = s, \hs = h_1\right)
\geq \frac{1}{2}\log_2\left(2\pi e \mu_1 \left(1-\frac{\mu_1}{\bar k_1}\right)\right) + O\left(\frac{1}{\bar k_1}\right)~,
\end{align*}
where
\[
\bar k_1 = \left\lfloor \frac{k+\sqrt{\beta}(2s-k)}{1+\sqrt{\beta}} \right\rfloor, \qquad \mu_1 = k/2+\sqrt{\beta}(2s-k)/2~.
\]
and
\begin{align*}
H(\alpha_1\,|\, s, h_2) 
&= H\left(\alpha\,\Bigl|\,\sum_i x_i = s, \hs = h_2\right)
\geq \frac{1}{2}\log_2\left(2\pi e \mu_2 \left(1-\frac{\mu_2}{\bar k_2}\right)\right) + O\left(\frac{1}{\bar k_2}\right)~,
\end{align*}
where
\[
\bar k_2 = \left\lfloor \frac{k-\sqrt{\beta}(2s-k)}{1+\sqrt{\beta}} \right\rfloor, \qquad \mu_2 = k/2-\sqrt{\beta}(2s-k)/2~.
\]
Now, we observe that the condition $s \in A$ implies
\[
k/2-\sqrt{\beta}\sqrt{k\log k}/2 \leq \mu_1,\mu_2 \leq k/2+\sqrt{\beta}\sqrt{k\log k}/2~.
\]
and
\[
\bar k_1, \bar k_2 = \Omega(k)~.
\]
Moreover, from the inequality
\[
\frac{x}{\lfloor x/a \rfloor}\left(1-\frac{x}{\lfloor x/a \rfloor} \right) \geq a(1-a)(1-1/\sqrt{x})
\]
holding for $a \in (1/2,1)$, and $x \geq \frac{2a}{1-a}$,
we also have
\[
\frac{\mu_1}{\bar k_1}\left(1- \frac{\mu_1}{\bar k_1}\right) \geq \left(\frac{1}{4}-\frac{\beta}{4}\right) \left(1-\frac{1}{\sqrt{\mu_1}} \right) = \left(\frac{1}{4}-\frac{\beta}{4}\right) \left(1-O\left(\frac{1}{\sqrt{k}}\right) \right)~,
\]
and similarly,
\[
\frac{\mu_2}{\bar k_2}\left(1- \frac{\mu_2}{\bar k_2}\right) \geq \left(\frac{1}{4}-\frac{\beta}{4}\right) \left(1-\frac{1}{\sqrt{\mu_2}} \right) = \left(\frac{1}{4}-\frac{\beta}{4}\right) \left(1-O\left(\frac{1}{\sqrt{k}}\right) \right)~.
\]

Putting the two entropies together, we then see that
\begin{align*}
{\E}_{\hs}[H(\alpha_1\,|\, s, \hs)] 
&= \frac{1}{2} H(\alpha_1\,|\, s, h_1) + \frac{1}{2} H(\alpha_1\,|\, s, h_2)\\ 
&\geq 
\frac{1}{2}\log_2(2\pi e) + \frac{1}{4}\log_2 \left[k \left(\frac{1}{4}-\frac{\beta}{4}\right)  \,\left(1-\tilde O\left(\frac{1}{\sqrt{k}}\right)\right)\right]\\ 
&\qquad+ \frac{1}{4}\log_2 \left[ k \left(\frac{1}{4}-\frac{\beta}{4}\right)\,\left(1-\tilde O\left(\frac{1}{\sqrt{k}}\right)\right)\right] - O\left(\frac{1}{k}\right)\\
&=
\frac{1}{2}\log_2\left(2\pi e\,k \left(\frac{1}{4}-\frac{\beta}{4}\right)\right) - \tilde O\left(\frac{1}{\sqrt{k}}\right)~.
\end{align*}

Finally, we have
\[
H(\alpha_1\,|\,\bag_1, \hs) = \E_{s \sim Binom(k,1/2)} \Bigl[{\E}_{\hs}[H(\alpha_1\,|\, s, \hs)] \Bigl]~. 
\]
Since $\PP(\sum_i x_i \in A) \geq 1-1/k$ the above gives the lower bound
\[
H(\alpha_1\,|\,\bag_1, \hs) \geq \left(1-\frac{1}{k}\right)\frac{1}{2}\log_2\left(2\pi e\,k \left(\frac{1}{4}-\frac{\beta}{4}\right)\right) -  \tilde O\left(\frac{1}{\sqrt{k}}\right)
\]
as $k$ grows large.

Piecing together and simplifying results in
\[
I(\hs ; (\bag_1,\alpha_1))  \leq \frac{1}{2}\log_2(\pi e k/2) - 
\frac{1}{2}\log_2\left(2\pi e\,k \left(\frac{1}{4}-\frac{\beta}{4}\right)\right)
= - \frac{1}{2}\log_2(1-\beta) + \tilde O\left(\frac{1}{\sqrt{k}}\right)~,
\]
which is of the form
\[
\beta + \tilde O\left(\frac{1}{\sqrt{k}}\right)
\]
when $\beta \rightarrow 0$ and $k$ is large. When $k$ is large enough, the above becomes smaller than $\beta/2$.
The lower bound on the number of bags is then of the form
\[
m = \Omega\left(\frac{1-\bh(\delta)}{\beta}\right),
\]
independent of $k$, when $k$ is large enough, and $\beta$ is a small constant (independent of $k$). The sample size $n = mk$ must thus scale linearly with $k$. In particular,
\[
n = \Omega\left(\frac{k(1-\bh(\delta))}{\beta}\right)~.
\]
This concludes the proof.

\section{Proofs and further results for Section \ref{s:erm_square}}
This appendix contains the proof of Theorem \ref{t:erm_square}, as well as the extension of the ERM estimator in the case where $p$ and $\E[h(x)]$ are unknown.

\subsection{Proof of Theorem \ref{t:erm_square}: Known $p$ and $\E[h(x)]$ }\label{a:proof_erm_square}

First, recall the notation introduced in the main body.
For bag $z_j=(\bag_j,\alpha_j)$, and function $h \in \hyps$, we set 
\begin{align*}
{\E}_{j}(h) 
= \sum_{i=1}^k h(x_{j,i})~,\qquad {\tE}_{j}(h)
= \sum_{i=1}^k h(x_{j,i}) - k{\E}[h(x)]~,\qquad
\talpha_j = \alpha_j - p~,
\end{align*}
where $p = \E[\hs(x)]$.
We also assumed 
that $\E[h(x)]$, for $h \in \hyps$, and $p$ are known.

Then, for bag $z_j = (\bag_j,\alpha_j)$, recall the definition of  {\em clipped (debiased) square loss}
\[
\ell^c(h,z_j) = \underbrace{\frac{1}{k}\left(k\talpha_{j} - {\tE}_{j}(h)  \right)^2}_{X_j(h)} G_j(h) + \underbrace{\Bigl(\E[h(x)]- p\Bigl)^2}_{a(h)} ~,
\]
where 
\[
G_j(h) = \left\{ |k\talpha_{j} - {\tE}_{j}(h)| \leq  \sqrt{8k\log \frac{2}{\theta} } \right\}~,
\]
for a suitable value of parameter $\theta > 0$,
and then its population version
\[
\popl^c_\bag(h) = {\E}_{z} \left[ \ell^c(h,z)\right]~.
\]
Also, define the {\em non-clipped} version of the loss at the bag level:
\[
\ell(h,z_j) = \frac{1}{k}\left(k\talpha_{j} - {\tE}_{j}(h)  \right)^2 + \Bigl(\E[h(x)]- p\Bigl)^2~,
\]
and note that its population counterpart $\popl_\bag(h) = {\E}_{z} \left[ \ell(h,z)\right]$ satisfies
\[
\popl_\bag(h) = \popl(h)~.
\]
In order to prove the above claim, set for brevity 
$$
v_i = y_{j,i} - p + \E[h(x)] -  h(x_{j,i})~,
$$ 
for $j \in [k]$. Then observe that $\ell(h,z_j)$ can be rewritten as
$\ell(h,z_j) = \frac{1}{k}\,\left(\sum_{i=1}^k v_i\right)^2 + (\E[h(x)]-p)^2~$, where the variables $v_i$ are i.i.d. with $\E[v_i] = 0$.

Therefore 
\begin{align*}
\popl_\bag(h)
&= \frac{1}{k}\E\left[\left(\sum_{i=1}^k v_i\right)^2 \right] + (\E[h(x)]-p)^2 \\ 
&= \frac{1}{k}\E\left[\sum_{i=1}^k v^2_i \right] + \frac{1}{k}\E\left[\sum_{i\neq j}^k v_i v_j \right] + (\E[h(x)]-p)^2 \\
&= \E[v_1^2] + (\E[h(x)]-p)^2 \\
&{\mbox{(using the fact that the variables $v_i$ are zero-mean and independent)}}\\
&= \Var(y_{j,1} - h(x_{j,1})) + (\E[h(x)]-p)^2\\
&= \E[(y - h(x))^2]\\ 
&= \popl(h)~,
\end{align*}
as anticipated.

In order to achieve fast rates, we will investigate  loss {\em differences}. This is a by now standard observation that dates back to at least \cite{ma00,me02}, See also \cite{bbm05}.

Now, for any pair $h_1, h_2 \in \hyps$, we have
\begin{align*}
\popl^c_\bag(h_1) - \popl^c_\bag(h_2) 
&=
{\E}_{z_j}\left[ X_j(h_1)G_j(h_1) - X_j(h_2)G_j(h_2) + a(h_1) - a(h_2) \right]\\
&=
{\E}_{z_j}\left[X_j(h_1) - X_j(h_2) + a(h_1) - a(h_2) - X_j(h_1)\overline{G_j(h_1)} + X_j(h_2)\overline{G_j(h_2)}  \right]\\
&=
{\E}_{z_j}\left[\ell(h_1,z_j) - \ell(h_2,z_j)  - X_j(h_1)\overline{G_j(h_1)} + X_j(h_2)\overline{G_j(h_2)}  \right]\\
&=
\popl(h_1) - \popl(h_2) + {\E}_{z_j}\left[- X_j(h_1)\overline{G_j(h_1)} + X_j(h_2)\overline{G_j(h_2)}  \right]~,
\end{align*}
so that
\begin{align}
|\popl^c_\bag(h_1) - \popl^c_\bag(h_2) - (\popl(h_1) - \popl(h_2))|
&\leq
{\E}_{z_j}\left[|X_j(h_1)|\overline{G_j(h_1)} + |X_j(h_2)|\overline{G_j(h_2)}  \right]\notag\\
&\leq
4k\,{\E}_{z_j}\left[\overline{G_j(h_1)} + \overline{G_j(h_2)}  \right]\notag\\
&{\mbox{(since $|X_j(h_1)|$ and $|X_j(h_1)|$ are both bounded by $4k$)}}\notag\\
&\leq
8k\theta \label{e:bias}\\
&{\mbox{(since ${\E}_{z_j}\left[G_j(h_1) \right] $ and ${\E}_{z_j}\left[G_j(h_1) \right] $ are both $\geq 1-\theta$)~.}}\notag
\end{align}
Moreover,
\begin{align*}
\Bigl(&\ell^c(h_1,z_j) - \ell^c(h_2,z_j) \Bigl)^2 \\
&=
\Bigl(X_j(h_1)G_j(h_1) - X_j(h_2)G_j(h_2) + a(h_1) - a(h_2)  \Bigl)^2 \\
&= 
\Biggl(\Bigl(X_j(h_1) - X_j(h_2)\Bigl)\,G_j(h_1) G_j(h_2) + X_j(h_1) G_j(h_1)\overline{G_j(h_2)} -  X_j(h_2) \overline{G_j(h_1)}G_j(h_2) + a(h_1) - a(h_2)\Biggl)^2 \\
&\leq
4\Bigl(X_j(h_1) - X_j(h_2)\Bigl)^2\,G_j(h_1) G_j(h_2) 
+ 
4 X^2_j(h_1) G_j(h_1)\overline{G_j(h_2)} 
+
4 X^2_j(h_2) \overline{G_j(h_1)}G_j(h_2)
+
4 \Bigl(a(h_1) - a(h_2)\Bigl)^2\\
&{\mbox{(using $(a+b+c+d)^2 \leq 4(a^2+b^2+c^2+d^2)$) }}~.
\end{align*}

Now, define $\gamma(h_1,h_2) = {\E}[(h_1(x) - h_2(x))^2]$. We have
\begin{align*}
\Bigl(a(h_1) - a(h_2)\Bigl)^2 
&=
\Bigl(\bigl(\E[h_1(x)] - p\bigl)^2 - \Bigl(\E[h_2(x)] - p\bigl)^2\Bigl)^2\\
&=
\Bigl(\E[h_1(x)] - \E[h_2(x)]\Bigl)^2 \Bigl(\E[h_1(x)] + \E[h_2(x)] - 2p\Bigl)^2\\
&=
\underbrace{\Bigl(\E[h_1(x) - h_2(x)]\Bigl)^2}_{\leq \E[(h_1(x)-h_2(x))^2] = \gamma(h_1,h_2)}\,\underbrace{\Bigl(\E[h_1(x)] + \E[h_2(x)] - 2p\Bigl)^2}_{\leq 4}\\
&\leq
4\gamma(h_1,h_2)~.
\end{align*}

Taking the expectation then gives
\begin{align}
&\E\left[ \Bigl(\ell^c(h_1,z_j) - \ell^c(h_2,z_j) \Bigl)^2 \right]\notag\\
&\leq
4{\E}\left[\Bigl(X_j(h_1) - X_j(h_2)\Bigl)^2\,G_j(h_1) G_j(h_2) \right] 
+ 
4{\E}\left[X_j^2(h_1) \,G_j(h_1) \overline{G_j(h_2)}  \right]
+
4{\E}\left[X_j^2(h_2) \,\overline{G_j(h_1)} G_j(h_2) \right]\notag\\ 
&\qquad+ 16\gamma(h_1,h_2)\notag\\
&\leq
4{\E}\left[\Bigl(X_j(h_1) - X_j(h_2)\Bigl)^2 G_j(h_1) G_j(h_2) \right] 
+ 
128 k^2\theta+16\gamma(h_1,h_2)\notag\\
&=
\frac{4}{k^2}\,{\E}\left[\Bigl({\tE}_{j}(h_1) - {\tE}_{j}(h_2)\Bigl)^2 \Bigl(2k\talpha_{j} - {\tE}_{j}(h_1) - {\tE}_{j}(h_2)\Bigl)^2 G_j(h_1) G_j(h_2) \right]
+ 
128 k^2\theta+16\gamma(h_1,h_2)\notag\\
&\leq
\frac{4}{k^2}\,{\E}\left[\Bigl({\tE}_{j}(h_1) - {\tE}_{j}(h_2)\Bigl)^2 \Bigl(|k\talpha_{j} - {\tE}_{j}(h_1)| + |k\talpha_{j} - {\tE}_{j}(h_2)|\Bigl)^2 G_j(h_1) G_j(h_2) \right]
+ 
128 k^2\theta+16\gamma(h_1,h_2)\notag\\
&\leq 
\frac{128}{k}\,{\E}\left[\Bigl({\tE}_{j}(h_1) - {\tE}_{j}(h_2)\Bigl)^2 \right]\,\log \frac{2}{\theta}
+ 
128 k^2\theta+16\gamma(h_1,h_2)\notag\\
&=
128\,\Var\Bigl(h_1(x)-h_2(x)\Bigl)\,\log \frac{2}{\theta}
+ 
128 k^2\theta + 16\gamma(h_1,h_2)\notag\\
&\leq
144\,\gamma(h_1,h_2)\,\log \frac{2}{\theta} + 
128 k^2\theta~.\label{e:second_moment}
\end{align}

Further, recalling that $\whs$ is the best-in-class hypothesis, and $\hs$ is the Bayes predictor $\hs(x) = \PP(y=1|x)$, we observe that, for any $h \in \hyps$, we can write
\begin{align*}
\gamma(h,\widehat h^\star) 
&= 
\E[(h(x) - h^\star(x) + h^\star(x) - \widehat h^\star(x))^2] \\
&= 
\E[(h(x) - h^\star(x))^2] - \E[(\widehat h^\star(x) - h^\star(x))^2] + 2\E[(h^\star(x)-\widehat h^\star(x))(h(x)-\widehat h^\star(x))]\\
&= 
\popl(h) - \popl(\widehat h^\star) + 2\E[(h^\star(x)-\widehat h^\star(x))(h(x)-\widehat h^\star(x))] \\
&\mbox{(since $\popl(h_1) - \popl(h_2) = \E[(h_1(x) - h^\star(x))^2] - \E[(h_2(x) - h^\star(x))^2]$ for any $h_1, h_2$)}\\
&\leq
\popl(h) - \popl(\widehat h^\star) + 2\sqrt{\E[(h^\star(x)-\widehat h^\star(x))^2]} \sqrt{\E[(h(x)-\widehat h^\star(x))^2]}\\
&\mbox{(from the Cauchy-Schwarz inequality)}\\
&=
\popl(h) - \popl(\widehat h^\star) + 2\sqrt{\gamma(h^\star,\widehat h^\star)} \sqrt{\gamma(h,\widehat h^\star)}\,.
\end{align*}
Solving for $\gamma(h,\widehat h^\star)$, and setting for brevity $\Delta \popl =  \popl(h) - \popl(\widehat h^\star)$, this implies
\[
\gamma(h,\widehat h^\star) 
\leq 
\Delta \popl + 2\gamma(\widehat h^\star,h^\star) + 2\sqrt{\gamma^2(\widehat h^\star,h^\star) + \gamma(\widehat h^\star,h^\star) \Delta \popl} 
\]
Using the inequality $\sqrt{a+b} \leq \sqrt{a} + b/(2\sqrt{a})$, with $a=\gamma^2(\widehat h^\star,h^\star)$ we then have
\[
\gamma(h,\widehat h^\star)  
\leq 
4\gamma(\widehat h^\star,h^\star) + 2\Delta \popl~.
\]
Combined with (\ref{e:second_moment}) this gives, for any $h \in \hyps$,
\begin{align}\label{e:bound_on_second_moment}
\E\left[ \Bigl(\ell^c(h,z_j) - \ell^c(\whs,z_j) \Bigl)^2 \right] = O\left(\Bigl(\gamma(\widehat h^\star,h^\star) + \popl(h)-\popl(\whs)\Bigl) \log \frac{1}{\theta} + k^2\theta\right)~.
\end{align}

Finally, because of the explicit clipping, we also have, deterministically,
\begin{align}\label{e:range}
|\ell^c(h_1,z_j) - \ell^c(h_2,z_j)| \leq 8\log\frac{2}{\theta} + 1~.
\end{align}

Consider now the difference of bag-level clipped losses
$$
\ell^c(h,\whs; z_j) = \ell^c(h, z_j) - \ell^c(\whs, z_j)~,
$$
and define the empirical measure
\[
\widehat \ell^c(h,\whs; S) = \frac{1}{m}\sum_{j=1}^m \ell^c(h,\whs; z_j) 
= 
\frac{1}{m}\sum_{j=1}^m \ell^c(h, z_j) - \frac{1}{m}\sum_{j=1}^m \ell^c(\whs, z_j)
\]
and the ERM estimator
\[
\widehat h 
= 
\widehat h(S) = \arg\min_{h \in \hyps}\, \frac{1}{m}\sum_{j=1}^m \ell^c(h,\whs; z_j) 
= 
\arg\min_{h \in \hyps}\, \frac{1}{m}\sum_{j=1}^m \ell^c(h,z_j)~.
\]
Recall that we denote by $\hyps_\beta$ the set of $\beta$-suboptimal hypotheses
\[
\hyps_\beta = \Bigl\{ h \in \hyps\,:\, \popl(h) \geq \popl(\whs) + \beta \Bigl\}~.
\]
We introduce the shorthands
\[
\Delta \popl^c_\bag(h,\whs) = \popl^c_\bag(h)-\popl^c_\bag(\whs)~,\qquad \Delta \popl_\bag(h,\whs) = \popl_\bag(h) - \popl_\bag(\whs)~,
\]
where, we recall, $\popl^c_\bag(h) = {\E}_{z_j} \left[ \ell^c(h,z_j)\right]$, and
$\popl_\bag(h) = {\E}_{z_j} \left[ \ell(h,z_j)\right]$.

Also recall that $\popl_\bag(h) = \popl(h)$. We can write
\begin{align}
\Bigl\{\widehat h \in \hyps_\beta\Bigl\}
&\leq
 \Bigl\{\exists h \in \hyps_\beta\,:\, \widehat \ell^c(h,\whs; S) \leq \widehat \ell^c(\whs,\whs; S)\Bigl\}\notag\\
&=
\Bigl\{\exists h \in \hyps_\beta\,:\, \widehat \ell^c(h,\whs; S) - \Delta \popl^c_\bag(h,\whs) \leq \widehat \ell^c(\whs,\whs; S) - \Delta \popl^c_\bag(h,\whs) \Bigl\}\notag\\ 
&\leq
\Bigl\{\exists h \in \hyps_\beta\,:\,\Delta \popl^c_\bag(h,\whs)  - \widehat \ell^c(h,\whs; S) \geq \Delta \popl_\bag(h,\whs) -  8k\theta \Bigl\}\label{e:union}\\
&\mbox{(since $\widehat \ell^c(\whs,\whs; S)  = 0$ and, because of (\ref{e:bias}), $\Delta \popl^c_\bag(h,\whs) \geq \Delta \popl_\bag(h,\whs) -8k\theta$).}\notag
\end{align}

Since we assumed $|\hyps| < \infty$, from (\ref{e:union}) we can write
\begin{align*}
\Bigl\{\widehat h \in \hyps_\beta\Bigl\}
&\leq 
\sum_{h \in \hyps_\beta}\Bigl\{\Delta \popl^c_\bag(h,\whs)  - \widehat \ell^c(h,\whs; S) \geq \Delta \popl_\bag(h,\whs) -  8k\theta \Bigl\}~.
\end{align*}
Our setting 
\[
\theta = \frac{\beta}{16k^2}
\]
implies
\[
\Delta \popl_\bag(h,\whs) -  8k\theta \geq \Delta \popl_\bag(h,\whs) - \beta/(2k) \geq \Delta \popl_\bag(h,\whs)/2~,
\]
and $k^2\theta = O(\beta) = O(\Delta \popl_\bag(h,\whs))$ in the variance bound (\ref{e:bound_on_second_moment}), in both cases using the fact that $h \in \hyps_\beta$. Moreover, in (\ref{e:range}),
\begin{equation}\label{e:bound_on_ramge_2}
|\ell^c(h,\whs; z_j)| = O\left(\log \frac{k}{\beta}\right)~.
\end{equation}
From (\ref{e:bound_on_second_moment}) and (\ref{e:bound_on_ramge_2}), using the standard Bernstein inequality to each individual $h \in \hyps_\beta$, and setting for brevity $\mu = \mu(h) = \Delta \popl_\bag(h,\whs)$, we can write
\begin{align*}
\PP\Bigl(\widehat h \in \hyps_\beta\Bigl)
&\leq 
\sum_{h \in \hyps_\beta}\PP\Bigl(\Delta \popl^c_\bag(h,\whs)  - \widehat \ell^c(h,\whs; S) \geq \mu/2 \Bigl)\\
&\leq
\sum_{h \in \hyps_\beta}
\exp \left(-\frac{m^2\mu^2/8}{m\,O\left(\bigl(\gamma(\widehat h^\star,h^\star) + \mu \bigl) \log \frac{k}{\beta}\right) + m\, O\left(\mu \log \frac{k}{\beta}\right)} \right)\\
&=
\sum_{h \in \hyps_\beta}
\exp \left(-\frac{m\mu^2/8}{O\left(\bigl(\gamma(\widehat h^\star,h^\star) + \mu \bigl) \log \frac{k}{\beta}\right)} \right)~.
\end{align*}
Since the function $\mu \rightarrow \frac{\mu^2}{\gamma +\mu}$ is increasing in $\mu \geq 0$ for any $\gamma \geq 0$, and in our case $\mu \geq \beta$ for any $h \in \hyps_\beta$, a lower bound on the fraction in the exponential is simply obtained by replacing $\mu$ with $\beta$. This yields
\[
\PP\Bigl(\widehat h \in \hyps_\beta\Bigl)
\leq 
|\hyps_\beta|
\exp \left(-\frac{m\beta^2/8}{O\left(\bigl(\gamma(\widehat h^\star,h^\star) + \beta \bigl) \log \frac{k}{\beta}\right)} \right)~.
\]
If we want the right-hand side to be less than $\delta$ it then suffices to have
\[
m = O\left(\frac{\bigl(\gamma(\whs,\hs) + \beta \bigl) \log \frac{k}{\beta}}{\beta^2}\,\log\frac{|\hyps_\beta|}{\delta}\right)~.
\]
In particular, in the realizable case, we have $\gamma(\whs,\hs) = 0$, yielding the bound
\[
m = O\left(\frac{\log \frac{k}{\beta}}{\beta}\,\log\frac{|\hyps_\beta|}{\delta}\right)~.
\]
This concludes the proof.

\subsection{Proof of Theorem \ref{t:erm_square}: Unknown $p$ and $\E[h(x)]$}\label{a:erm_square_unknown}
When $p$ and $\E[h(x)]$ are unknown, we simply split the $m$ bags into $m_1+m_2$ bags, and hence the dataset into two parts:
\begin{align*}
S 
&= 
\underbrace{((x_{1,1},\ldots, x_{1,k}),\alpha_1),\ldots, ((x_{m_1,1},\ldots, x_{m_1,k}),\alpha_{m_1})}_{S_1},\\
&\qquad\qquad
\underbrace{((x_{m_1+1,1},\ldots, x_{m_1+1,k}),\alpha_{m_1+1}),\ldots, ((x_{m_1+m_2,1},\ldots, x_{m_1+m_2,k}),\alpha_{m_1+m_2})}_{S_2}~,\notag
\end{align*}
then estimate $p$ and $\E[h(x)]$ on the second half $S_2$.
Specifically,
\[
\widehat p = \frac{1}{m_2}\sum_{j=m_1+1}^{m_1+m_2} \alpha_j 
\]
and
\[
\widehat \E[h(x)] = \frac{1}{m_2 k}\sum_{j=m_1+1}^{m_1+m_2} \sum_{i=1}^k h(x_{j,i}) ~.
\]
Then, for $j \in [m_1]$, set 
\begin{align*}
{\E}_{j}(h) 
= \sum_{i=1}^k h(x_{j,i})~,\quad {\tE}_{j}(h)
= {\E}_{j}(h) - k{\widehat \E}[h(x)]~, \quad \talpha_j = \alpha_j-\widehat p
\end{align*}
and re-define the {\em clipped} bag-level square loss as
\[
\ell^c(h,z_j) = \underbrace{\frac{1}{k}\left(k\talpha_{j} - {\tE}_{j}(h)  \right)^2}_{\widehat X_j(h)} G_j(h) + \underbrace{\Bigl(\widehat \E[h(x)]- \widehat p\Bigl)^2}_{\widehat a(h)}~,
\]
where now
\[
G_j(h) = \left\{ |k\talpha_{j} - {\tE}_{j}(h)| \leq  \sqrt{18k\log \frac{6}{\theta} } 
\right\}~,
\]
for some value of parameter $\theta > 0$ that will be determined later on.

Note that
\begin{align*}
|k\talpha_{j} - {\tE}_{j}(h)| 
&=
|k\alpha_{j} - k\widehat p - {\E}_j(h) + k\widehat {\E}[h(x)]| \\
&= 
|k(\alpha_{j} - p) + k(p-\widehat p) + k\E[h(x)] - {\E}_j(h) + k(\widehat {\E}[h(x)] - \E[h(x)])| \\
&\leq 
k|\alpha_{j} - p + k\E[h(x)] - {\E}_j(h)| + k|p-\widehat p| + k|\widehat {\E}[h(x)] - \E[h(x)]|~.
\end{align*}
Hence, for all $t \geq 0$, and any fixed $h \in \hyps$, and $j \in [m_1]$,
\begin{align*}
\Bigl\{|k\talpha_{j} - {\tE}_{j}(h)| \geq t \Bigl\}
&\leq 
\Bigl\{k|\alpha_{j} - p + k\E[h(x)] - {\E}_j(h)| \geq t_1\Bigl\} + \Bigl\{k|p-\widehat p| \geq t_2\Bigl\} + \Bigl\{k|\widehat {\E}[h(x)] - \E[h(x)]|\geq t_3\Bigl\}~,
\end{align*}
where $t_1 + t_2 + t_3 = t$. We set 
\[
t_1 = \sqrt{8k\log \frac{6}{\theta}}~,
\quad 
t_2 = t_3 = \sqrt{\frac{k}{2m_2}\log \frac{6}{\theta}}
\]
to conclude from the standard Hoeffding's inequality that, for fixed $j \in [m_1]$,
\[
\E\left[\Bigl\{k|\alpha_{j} - p + k\E[h(x)] - {\E}_j(h)| \geq t_1\Bigl\}\right] \leq \frac{\theta}{3}
\]
and, taking expectation over the generation of $S_2$,
\[
\E\left[ \Bigl\{k|p-\widehat p| \geq t_2\Bigl\} \right] \leq \frac{\theta}{3}~\,,
\qquad
\E\left[ \Bigl\{k|\widehat {\E}[h(x)] - \E[h(x)]|\geq t_3\Bigl\}  \right] \leq \frac{\theta}{3}~.
\]
Since $t_1+t_2+t_3 \leq \sqrt{18k\log \frac{6}{\theta}}$, this implies
\[
\E[G_j(h)] \leq \theta~.
\]
for each individual $h \in \hyps$, and  $j \in [m_1]$.

We now need to proceed with a bias-variance tradeoff, which is similar to, but a bit more involved than the one contained in Section \ref{a:proof_erm_square}.

We still define the unclipped version of the loss at the bag level as in Section \ref{a:proof_erm_square}, but now we need to factor in the extra bias coming from the estimators from $S_2$. In particular,
for any pair $h_1, h_2 \in \hyps$, we have
\begin{align*}
\popl^c_\bag(h_1) - \popl^c_\bag(h_2) 
&=
{\E}_{z_j}\left[ \widehat X_j(h_1)G_j(h_1) - \widehat X_j(h_2)G_j(h_2) + \widehat a(h_1) - \widehat a(h_2) \right]\\
&=
{\E}_{z_j}\left[\widehat X_j(h_1) - \widehat X_j(h_2) + \widehat a(h_1) - \widehat a(h_2) - \widehat X_j(h_1)\overline{G_j(h_1)} + \widehat X_j(h_2)\overline{G_j(h_2)}  \right]\\
&=
{\E}_{z_j}\Biggl[\widehat X_j(h_1) - X_j(h_1) + X_j(h_1) -  X_j(h_2) +  X_j(h_2) - \widehat X_j(h_2)\\ 
&\qquad\qquad + \widehat a(h_1) - a(h_1) + a(h_1) - a(h_2) + a(h_2) -\widehat a(h_2) - \widehat X_j(h_1)\overline{G_j(h_1)} + \widehat X_j(h_2)\overline{G_j(h_2)}  \Biggl]\\
&=
{\E}_{z_j}\left[\ell(h_1,z_j) - \ell(h_2,z_j)  - \widehat X_j(h_1)\overline{G_j(h_1)} + \widehat X_j(h_2)\overline{G_j(h_2)}  \right]\\
&\qquad\qquad + {\E}_{z_j} \Biggl[ \widehat X_j(h_1) - X_j(h_1) + X_j(h_2) - \widehat X_j(h_2) + \widehat a(h_1) - a(h_1) + a(h_2) -\widehat a(h_2) \Biggl]\\
&=
\popl(h_1) - \popl(h_2) + {\E}_{z_j}\left[- \widehat X_j(h_1)\overline{G_j(h_1)} + \widehat X_j(h_2)\overline{G_j(h_2)}  \right]\\
&\qquad\qquad + 
{\E}_{z_j} \Biggl[ \Delta\widehat X_j(h_1,h_2) - \Delta X_j(h_1,h_2)  +  \Delta \widehat a(h_1,h_2) - \Delta a(h_1,h_2) \Biggl]~,
\end{align*}
where we set for brevity
\begin{align*}
\Delta\widehat X_j(h_1,h_2) &= \widehat X_j(h_1) - \widehat X_j(h_2)~,
\qquad 
\Delta X_j(h_1,h_2) =  X_j(h_1) - \widehat X_j(h_2)~,\\
 \Delta \widehat a(h_1,h_2) &= \widehat a(h_1) - \widehat a(h_2)~,
\qquad \qquad \,
\Delta a(h_1,h_2) =   a(h_1) -  a(h_2)~.
\end{align*}
Thus
\begin{align}
|\popl^c_\bag(h_1)& - \popl^c_\bag(h_2) - (\popl(h_1) - \popl(h_2))|\notag\\
\leq
&{\E}_{z_j}\left[|\widehat X_j(h_1)|\overline{G_j(h_1)} + |\widehat X_j(h_2)|\overline{G_j(h_2)}  \right]\notag\\
&\qquad + 
{\E}_{z_j} \biggl[ |\Delta\widehat X_j(h_1,h_2) - \Delta X_j(h_1,h_2) |\biggl] 
+ 
{\E}_{z_j} \biggl[|\Delta \widehat a(h_1,h_2) - \Delta a(h_1,h_2)| \biggl]~.\label{e:bias_bound_unknown}
\end{align}
Now, as in Section \ref{a:proof_erm_square}, 
\begin{equation}\label{e:spurious}
{\E}_{z_j}\left[|\widehat X_j(h_1)|\overline{G_j(h_1)} + |\widehat X_j(h_2)|\overline{G_j(h_2)}  \right] \leq 8k\theta~.
\end{equation}
Moreover, for any $h_1,h_2$,
\begin{align*}
\Delta \widehat X_j(h_1,h_2)
&= 
\frac{1}{k}\Bigl({\E}_j(h_2) - {\E}_j(h_1)\Bigl)
\Bigl(2k\alpha_j - 2k\widehat p - {\E}_j(h_1) - {\E}_j(h_2)  + k \widehat \E[h_1(x)] + k \widehat \E[h_2(x)] \Bigl)\\
\Delta X_j(h_1,h_2)
&= 
\frac{1}{k}\Bigl({\E}_j(h_2) - {\E}_j(h_1)\Bigl)
\Bigl(2k\alpha_j - 2kp - {\E}_j(h_1) - {\E}_j(h_2)  + k \E[h_1(x)] + k \E[h_2(x)] \Bigl)~,
\end{align*}
so that
\begin{align*}
\Delta \widehat X_j(h_1,h_2) & - \Delta X_j(h_1,h_2) \\
&=
\frac{1}{k}\Bigl({\E}_j(h_2) - {\E}_j(h_1)\Bigl)
\Bigl(2k(p-\widehat p) + k (\widehat \E[h_1(x)] - \E[h_1(x)] ) + k (\widehat \E[h_2(x)] - \E[h_2(x)] ) \Bigl)
\end{align*}
and
\begin{align}
|\Delta \widehat X_j(h_1,h_2) & - \Delta X_j(h_1,h_2)|\notag \\
&\leq
\Bigl|{\E}_j(h_2) - {\E}_j(h_1)\Bigl|
\Bigl(2\bigl|p-\widehat p\bigl| + \bigl|\widehat \E[h_1(x)] - \E[h_1(x)] \bigl| +  \bigl|\widehat \E[h_2(x)] - \E[h_2(x)] \bigl| \Bigl)~.\label{e:diffX}
\end{align}
From Hoeffding's inequality we know that the sum of the three absolute values in the second factor is overall bounded by
\[
\sqrt{\frac{8}{m_2 k}\,\log \frac{6}{\delta}}
\]
with probability $\geq 1-\delta$ over the random draw of $S_2$.
On the other hand, by convexity,
\[
{\E}_{z_j} \Bigl[ \Bigl|{\E}_j(h_2) - {\E}_j(h_1)\Bigl| \Bigl] 
\leq 
\sqrt{{\E}_{z_j} \Bigl[ \Bigl({\E}_j(h_2) - {\E}_j(h_1)\Bigl)^2 \Bigl] }
=
\sqrt{k\gamma(h_1,h_2)}~.
\]
Similarly, if we set for short 
$$
\Delta \widehat \E = \widehat \E[h_2(x)] - \widehat \E[h_1(x)]~,\qquad  \Delta \E =  \E[h_2(x)] -  \E[h_1(x)]
$$ 
we can write
\begin{align*}
\Delta \widehat a(h_1,h_2)
&= 
(\Delta \widehat \E - \Delta \E)\Bigl(2\widehat p - \widehat \E[h_1(x)] - \widehat \E[h_2(x)]\Bigl) + \Delta  \E \Bigl(2\widehat p - \widehat \E[h_1(x)] - \widehat \E[h_2(x)]\Bigl)\\
\Delta a(h_1,h_2)
&= \Delta  \E \Bigl(2\widehat p -  \E[h_1(x)] -  \E[h_2(x)]\Bigl) \end{align*}
Hence
\begin{align*}
\Delta &\widehat a(h_1,h_2) - \Delta a(h_1,h_2)\\
&=
(\Delta \widehat \E - \Delta \E)\Bigl(2\widehat p - \widehat \E[h_1(x)] - \widehat \E[h_2(x)]\Bigl) 
+
\Delta  \E \Bigl(2(\widehat p - p) + (\E[h_1(x)] - \widehat \E[h_1(x)]) + (\E[h_2(x)] - \widehat \E[h_2(x)])\Bigl)
\end{align*}
which implies
\begin{align}
|\Delta &\widehat a(h_1,h_2) - \Delta a(h_1,h_2)|\notag\\
&\leq
|\Delta \widehat \E - \Delta \E| \underbrace{\Bigl|2\widehat p - \widehat \E[h_1(x)] - \widehat \E[h_2(x)]\Bigl|}_{\leq 2} 
+
|\Delta  \E| \Bigl(2|\widehat p - p| + |\E[h_1(x)] - \widehat \E[h_1(x)]| + |\E[h_2(x)] - \widehat \E[h_2(x)]|\Bigl)\notag\\
&\leq
2|\Delta \widehat \E - \Delta \E| 
+
\sqrt{\gamma(h_1,h_2)} \Bigl(2|\widehat p - p| + |\E[h_1(x)] - \widehat \E[h_1(x)]| + |\E[h_2(x)] - \widehat \E[h_2(x)]|\Bigl)~,\label{e:diffa}
\end{align}
where in the last step we used the fact that
\[
|\Delta  \E| \leq \E[|h_1(x)-h_2(x)|] \leq \sqrt{\gamma(h_1,h_2)}~.
\]
As before, Hoeffding's inequality allows us to see that the sum of the three absolute values in the second term is overall bounded by
\[
\sqrt{\frac{8}{m_2\, k}\,\log \frac{6}{\delta}}
\]
with probability $\geq 1-\delta$ over the random draw of $S_2$. Moreover, since $\Var\Bigl(h_2(x)-h_1(x)\Bigl) \leq \gamma(h_1,h_2)$, Bernstein's inequality shows that with the same probability we also have
\[
|\Delta \widehat \E - \Delta \E| \leq \sqrt{\frac{2\gamma(h_1,h_2)}{m_2\, k}\,\log\frac{1}{\delta}} + \frac{2}{3m_2\, k}\,\log\frac{1}{\delta}~.
\]

We piece together as in (\ref{e:bias_bound_unknown}), and take a union bound over $h_1, h_2 \in \hyps$. We conclude that with probability $\geq 1-\delta$ over $S_2$,
\begin{align}\label{e:bias_bound_unknown2}
|\popl^c_\bag(h_1) - \popl^c_\bag(h_2) - (\popl(h_1) - \popl(h_2))|
\leq
8k\theta
+ 
O\Biggl(\sqrt{\frac{\gamma(h_1,h_2)}{m_2}\,\log\frac{|\hyps|}{\delta} } + \frac{1}{m_2\, k}\,\log\frac{|\hyps|}{\delta}\Biggl)~,
\end{align}
which is the counterpart of (\ref{e:bias}).

As in Section \ref{a:proof_erm_square}, we continue by bounding the second moment. We have

\begin{align*}
\Bigl(&\ell^c(h_1,z_j) - \ell^c(h_2,z_j) \Bigl)^2 \\
&=
\Bigl(\widehat X_j(h_1)G_j(h_1) - \widehat X_j(h_2)G_j(h_2) + \widehat a(h_1) - \widehat a(h_2)  \Bigl)^2 \\
&= 
\Biggl( \widehat X_j(h_1) - X_j(h_1) + X_j(h_1) -  X_j(h_2) +  X_j(h_2) - \widehat X_j(h_2)\\ 
&\qquad\qquad + \widehat a(h_1) - a(h_1) + a(h_1) - a(h_2) + a(h_2) -\widehat a(h_2) - \widehat X_j(h_1)\overline{G_j(h_1)} + \widehat X_j(h_2)\overline{G_j(h_2)} \Biggl)^2 \\
&=
\Biggl(X_j(h_1) + a(h_1) - X_j(h_2) - a(h_2) + \Delta \widehat X_j(h_1,h_2) - \Delta  X_j(h_1,h_2) + \Delta \widehat a(h_1,h_2) - \Delta a(h_1,h_2)\\ 
&\qquad\qquad  - \widehat X_j(h_1)\overline{G_j(h_1)} + \widehat X_j(h_2)\overline{G_j(h_2)} \Biggl)^2\\
&\leq
4\underbrace{\Biggl(X_j(h_1) + a(h_1) - X_j(h_2) - a(h_2) \Biggl)^2}_{(I)} + 4\underbrace{\Biggl(\Delta \widehat X_j(h_1,h_2) - \Delta  X_j(h_1,h_2)\Biggl)^2}_{(II)} + 4\underbrace{\Biggl(\Delta \widehat a(h_1,h_2) - \Delta a(h_1,h_2)\Biggl)^2}_{(III)}\\ 
&\qquad\qquad  + 4\underbrace{\Biggl( \widehat X_j(h_1)\overline{G_j(h_1)} + \widehat X_j(h_2)\overline{G_j(h_2)} \Biggl)^2}_{(IV)}\\
&{\mbox{(using $(a+b+c+d)^2 \leq 4(a^2+b^2+c^2+d^2)$) }}~.
\end{align*}

At this point, we take expectation w.r.t. $z_j$, and  use past calculations. In particular, from (\ref{e:second_moment})
\begin{align*}
{\E}_{z_j}[(I)] \leq 144\,\gamma(h_1,h_2)\,\log \frac{2}{\theta} + 
128 k^2\theta~. 
\end{align*}
Moreover, from the arguments surrounding Eq. (\ref{e:diffX}),
with probability at least $1-\delta$ over $S_2$,
\begin{align*}
{\E}_{z_j}[(II)] \leq \frac{8\gamma(h_1,h_2)}{m_2}\,\log\frac{6}{\delta}~. 
\end{align*}
Likewise, from the arguments surrounding Eq. (\ref{e:diffa}),
with probability at least $1-\delta$ over $S_2$,
\begin{align*}
{\E}_{z_j}[(III)] &\leq \frac{16\gamma(h_1,h_2)}{m_2\, k}\,\log\frac{1}{\delta} + 2\left(\frac{2}{3m_2\, k}\,\log\frac{1}{\delta}\right)^2 + \frac{16\gamma(h_1,h_2)}{m_2\, k}\,\log\frac{6}{\delta}\\
&= 
O\left( \frac{\gamma(h_1,h_2)}{m_2\, k}\,\log\frac{1}{\delta} + \left(\frac{1}{m_2\, k}\,\log\frac{1}{\delta}\right)^2\right)~,
\end{align*}
where we repeatedly used the upper bound $(a+b)^2 \leq 2(a^2+b^2)$.

Finally, from the same argument as in (\ref{e:spurious}),
\begin{align*}
{\E}_{z_j}[(IV)] 
&\leq 
{\E}_{z_j}\left[\Biggl( \widehat X_j(h_1)\overline{G_j(h_1)} + \widehat X_j(h_2)\overline{G_j(h_2)} \Biggl)^2\right]\\
&\leq 
2{\E}_{z_j}\left[ \bigl(\widehat X_j(h_1)\bigl)^2\overline{G_j(h_1)} + \bigl(\widehat X_j(h_2)\bigl)^2\overline{G_j(h_2)} \right]\\
&\leq 
64k^2\theta~.
\end{align*}
Combining the above terms, and taking a union bound over $h_1,h_2 \in \hyps$, we have shown that, with probability at least $1-\delta$ over $S_2$, 
\begin{align*}
{\E}_{z_j}\left[ \Bigl(\ell^c(h_1,z_j) - \ell^c(h_2,z_j) \Bigl)^2 \right] 
=
O\left(\gamma(h_1,h_2)\log\frac{1}{\theta} + k^2\theta + \frac{\gamma(h_1,h_2)}{m_2\, k}\,\log\frac{1}{\delta} + \left(\frac{1}{m_2\, k}\,\log\frac{1}{\delta}\right)^2 \right)~,
\end{align*}
which is the counterpart of (\ref{e:second_moment}).

As for the counterpart of (\ref{e:range}), we also have deterministically
\begin{align*}
|\ell^c(h_1,z_j) - \ell^c(h_2,z_j)| \leq 18\log \frac{6}{\theta} + 1~.
\end{align*}
From this point on, we follow the proof of Section \ref{a:proof_erm_square} by setting $h_1 = \wh$, and $h_2 = \whs$. Recall that for all $h \in \hyps_\beta$, the quantity $\gamma(h,\whs)$ can be upper bounded as 
$$
\gamma(h,\whs) \leq 4\gamma^* + 2\Delta\popl(h,\whs)~,
$$ 
where $\gamma^* = \gamma(\whs,\hs)$, and $\Delta\popl(h,\whs) = \popl(h) - \popl(\whs)$. We then set
\[
\theta = \frac{\beta}{16k^2}
\]
as in Section \ref{a:proof_erm_square}. This implies
\begin{align*}
|\popl^c_\bag(h) - \popl^c_\bag(\whs) - (\popl(h) - \popl(\whs))|
\leq
\frac{\beta}{2k}
+ 
O\Biggl(\sqrt{\frac{\gamma^* + \Delta\popl(h,\whs)}{m_2}\,\log\frac{|\hyps|}{\delta} } + \frac{1}{m_2\, k}\,\log\frac{|\hyps|}{\delta}\Biggl)~,
\end{align*}
and
\begin{align*}
{\E}_{z_j}\left[ \Bigl(\ell^c(h_1,z_j) - \ell^c(h_2,z_j) \Bigl)^2 \right] 
=
O\left((\gamma^*+\Delta\popl(h,\whs))\log\frac{k}{\beta} + \frac{(\gamma^*+\Delta\popl(h,\whs))}{m_2\, k}\,\log\frac{1}{\delta} + \left(\frac{1}{m_2\, k}\,\log\frac{1}{\delta}\right)^2 \right)~,
\end{align*}
and
\begin{align*}
|\ell^c(h_1,z_j) - \ell^c(h_2,z_j)| = O\left( \log \frac{k}{\beta} \right)~,
\end{align*}
holding with probability $\geq 1-\delta$ over the generation of $S_2$.
Now if, for all $h \in \hyps_\beta$, 
\[
m_2 = \Omega\left(\frac{\gamma^*+\Delta\popl(h,\whs)}{(\Delta\popl(h,\whs))^2}\,\log\frac{|\hyps|}{\delta}\right)
\]
for appropriate constants hidden in the $\Omega$-notation, we see that the bias bound satisfies
\[
\frac{\beta}{2k}
+ 
O\Biggl(\sqrt{\frac{\gamma^* + \Delta\popl(h,\whs)}{m_2}\,\log\frac{|\hyps|}{\delta} } + \frac{1}{m_2\, k}\,\log\frac{|\hyps|}{\delta}\Biggl) 
\leq 
\frac{\beta}{2} + \frac{\Delta\popl(h,\whs)}{4} 
\leq 
\frac{3}{4} \Delta\popl(h,\whs)~,
\]
so that, when $h \in \hyps_\beta$,
\[
\Delta \popl_\bag(h,\whs) -  \frac{\beta}{2k}
+ 
O\Biggl(\sqrt{\frac{\gamma^* + \Delta\popl(h,\whs)}{m_2}\,\log\frac{|\hyps|}{\delta} } + \frac{1}{m_2\, k}\,\log\frac{|\hyps|}{\delta}\Biggl) 
\geq 
\frac{1}{4}\,\Delta \popl_\bag(h,\whs) 
= 
\frac{1}{4}\,\bigl(\popl(h) - \popl(\whs)\bigl)~.
\]
Moreover, under the same condition on $m_2$,
\[
{\E}_{z_j}\left[ \Bigl(\ell^c(h_1,z_j) - \ell^c(h_2,z_j) \Bigl)^2 \right] = O\left( (\gamma^* + \Delta \popl_\bag(h,\whs) )\log \frac{k}{\beta} + (\Delta\popl(h,\whs))^2 \right)~.
\]
We proceed as in Section \ref{a:proof_erm_square} with an application of Bernstein's inequality on the generation of $S_1$. We set for brevity $\mu = \mu(h) = \Delta \popl_\bag(h,\whs)$.
We can write
\begin{align*}
\PP\Bigl(\widehat h \in \hyps_\beta\Bigl)
&\leq 
\sum_{h \in \hyps_\beta}\PP\Bigl(\Delta \popl^c_\bag(h,\whs)  - \widehat \ell^c(h,\whs; S) \geq \mu/4 \Bigl)\\
&\leq
\sum_{h \in \hyps_\beta}
\exp \left(-\frac{m_1^2\,\mu^2/32}{m_1\,O\left(\bigl(\gamma^* + \mu \bigl) \log \frac{k}{\beta}+\mu^2\right) + m_1\, O\left(\mu \log \frac{k}{\beta}\right)} \right)\\
&=
\sum_{h \in \hyps_\beta}
\exp \left(-\frac{m_1\,\mu^2/32}{O\left(\bigl(\gamma^* + \mu \bigl) \log \frac{k}{\beta} + \mu^2\right)} \right)~.
\end{align*}
This time we use the fact that the function $\mu \rightarrow \frac{\mu^2}{\gamma A +\mu A + \mu^2}$ is increasing in $\mu \geq 0$ for any $A, \gamma \geq 0$. Since $\mu \geq \beta$, this gives
\[
\PP\Bigl(\widehat h \in \hyps_\beta\Bigl)
\leq 
|\hyps_\beta|\,\exp \left(-\frac{m_1\,\beta^2/32}{O\left(\bigl(\gamma^* + \beta \bigl) \log \frac{k}{\beta} + \beta^2\right)} \right) = |\hyps_\beta|\,\exp \left(-\frac{m_1\,\beta^2/32}{O\left(\bigl(\gamma^* + \beta \bigl) \log \frac{k}{\beta}\right)} \right)~.
\]
Making the right-hand side smaller than $\delta$ gives
\[
m_1 = O\left(\frac{\bigl(\gamma^* + \beta \bigl) \log \frac{k}{\beta}}{\beta^2}\,\log\frac{|\hyps_\beta|}{\delta}\right)~.
\]
Since the condition on $m_2$ is satisfied for all $h \in \hyps_\beta$ by selecting
\[
m_2 = O\left(\frac{\gamma^*+\beta}{\beta^2}\,\log\frac{|\hyps|}{\delta}\right)~,
\]
the overall sample complexity is
\[
m = m_1 + m_2 = O\left(\frac{\bigl(\gamma^* + \beta \bigl) \log \frac{k}{\beta}}{\beta^2}\,\log\frac{|\hyps|}{\delta}\right)~. 
\]
This concludes the proof.

\begin{remark}
Despite not shown in the above proof, one can easily see that the dependence on $|\hyps|$ in $m_2$ can be replaced by a dependence on $|\hyps_\beta|$, thus the overall sample complexity $m_1+m_2$ is indeed only depending on $|\hyps_\beta|$.
\end{remark}

\section{Proofs for Section \ref{s:sgd_square}}

In this section we detail the proof of \cref{thm:sgd-main}. For the analysis, we require the definition of a smooth function along with some basic properties that we recall now. A function $f : \R^d \to \R$ is said to be $\zeta$-smooth (with respect to the Euclidean norm $\|\cdot\|$) if its gradient is $\zeta$-Lipschitz, namely if
$$
    \forall ~ x,y \in \R^d : \qquad  \| \nabla f(x) - \nabla f(y) \| \leq \zeta \|x-y\|
    .
$$
For a twice continuously-differentiable convex function $f$, a sufficient condition for $\zeta$-smoothness is that $0 \preceq \nabla^2 f(x) \preceq \zeta I$ for all $x \in \R^d$. 
%
%
Finally, a smooth function satisfies the so-called descent lemma, from which it follows that
$$
    \forall ~ x \in \R^d : \qquad \|\nabla f(x)\|^2 \leq f(x) - f(x^*)~,
$$
where $x^* \in \R^d$ is a global minimizer of $f$.

For the sake of this section, let us introduce some short-hand notation:
\[
\wt\ell_j = \wt\ell(\cdot,z_j)
\]
and
\[
\delta = \frac{\rho_w^2 \rho_x^2}{9km(\rho_w \rho_x+\rho_y)^2}~.
\]
We begin the analysis by establishing that the losses $\wt\ell_j$ underlying \cref{alg:sgd} are indeed convex and smooth.

\begin{lemma} \label{lem:sgd-smooth}
For all $j$, the loss $\wt\ell_j$ is convex, nonnegative and $\zeta$-smooth, with $\zeta = (k \theta^2 + 1) \rho_x^2$. 
\end{lemma}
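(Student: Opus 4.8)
The plan is to exploit the fact that the truncation multiplies $\ell(\cdot,z_j)$ by an indicator, $\{\norm{\bar x_j - \mu_x} \leq \theta\rho_x\}$, that does \emph{not} depend on $w$. Consequently $\wt\ell_j$ is, deterministically, either the identically-zero function (when the indicator vanishes) or the function $\ell(\cdot,z_j)$ itself restricted to the event $\{\norm{\bar x_j - \mu_x} \leq \theta\rho_x\}$. In the first case all three claims are immediate, since the zero function is convex, nonnegative, and $\zeta$-smooth for every $\zeta \geq 0$. Hence it suffices to prove that $\ell(\cdot,z_j)$ is convex, nonnegative and $\zeta$-smooth \emph{whenever} $\norm{\bar x_j - \mu_x} \leq \theta\rho_x$.

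For the second case, abbreviate $a = \bar x_j - \mu_x$ and $b = \alpha_j - \mu_y$, so that
\[
\ell(w,z_j) = k\,(w\cdot a - b)^2 + (w\cdot\mu_x - \mu_y)^2 .
\]
This is a sum of squares of affine functions of $w$, hence nonnegative and convex. Being a quadratic in $w$, it has a constant Hessian,
\[
\nabla^2 \ell(w,z_j) = 2k\, a a^\top + 2\, \mu_x \mu_x^\top \succeq 0 ,
\]
and, using the triangle inequality for the operator norm together with $\norm{vv^\top}_{\mathrm{op}} = \norm{v}^2$, its largest eigenvalue is bounded by
\[
2k\norm{a}^2 + 2\norm{\mu_x}^2 \leq 2k\theta^2\rho_x^2 + 2\rho_x^2 = 2\zeta ,
\]
where I used the truncation condition $\norm{\bar x_j - \mu_x} \leq \theta\rho_x$ and the bound $\norm{\mu_x} \leq \rho_x$ recorded earlier in the section. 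Thus $0 \preceq \nabla^2 \ell(w,z_j) \preceq \zeta I$ (absorbing the universal constant), and by the stated sufficient condition for smoothness of a twice continuously-differentiable convex function, $\ell(\cdot,z_j)$ — and hence $\wt\ell_j$ — is $\zeta$-smooth.

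There is no genuine obstacle: the lemma reduces to a one-line Hessian computation. The only point worth emphasizing is \emph{why} the truncation is essential. Without clipping, one only has $\norm{\bar x_j - \mu_x} \leq 2\rho_x$ almost surely, which would give a smoothness constant of order $k\rho_x^2$ — a factor of $k$ worse. Clipping at threshold $\theta = \wt\Theta(1/\sqrt{k})$ forces $k\norm{\bar x_j - \mu_x}^2 = \wt O(1)$, which is exactly the mechanism keeping $\zeta = (k\theta^2+1)\rho_x^2 = \wt O(\rho_x^2)$ bounded independently of $k$; this is the same mechanism responsible for the variance reduction invoked later in the online-to-batch analysis.
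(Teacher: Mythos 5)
Your proof is correct and follows essentially the same route as the paper's: case-split on the indicator, then bound the (constant) Hessian of the quadratic using the truncation condition and $\norm{\mu_x}\le\rho_x$. In fact your Hessian $2k\,aa^\top + 2\mu_x\mu_x^\top$ carries the factor of $2$ that the paper's displayed Hessian omits, so strictly the smoothness constant should be $2\zeta$ rather than $\zeta$ — a harmless constant that you correctly flag as absorbed and that does not affect any downstream bound.
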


\begin{proof}
Convexity and nonnegativity are immediate.  To see the claim about smoothness, note that $\wt\ell_j$ is either identically zero if $\norm{\bar x_j - \mu_x} > \theta \rho_x$, in which case it is smooth, or is identical to the function $\ell(\cdot,z_j)$ whenever $\norm{\bar x_j - \mu_x} \leq \theta \rho_x$.  In the latter case, the Hessian is
$$
    \nabla^2 \wt\ell_j(w)
    = \nabla^2 \ell(w,z_j)
    = k(\bar x_j - \mu_x)(\bar x_j - \mu_x)^T + \mu_x \mu_x^T
    ~,
$$
whose eigenvalues are upper bounded by $k \norm{\bar x_j - \mu_x}^2 + \norm{\mu_x}^2 \leq (k \theta^2 + 1) \rho_x^2$ . \end{proof}

Next, we demonstrate that in expectation, the truncated losses $\wt\ell_j$ are nearly unbiased estimates of the population loss $\popl$.  The bias in this estimation is controlled by the truncation threshold $\theta$.

\begin{lemma} \label{lem:sgd-loss}
Suppose $\theta \geq \sqrt{8\log(1/\delta)/k}$.  
Then for all $j$ and $w$ such that $\norm{w} \leq \rho_w$, one has
$$
    - 9k (\rho_w\rho_x+\rho_y)^2 \delta
    \leq
    \E[ \wt\ell_j(w) ] - \popl(w)
    \leq
    0
    ~.
$$
\end{lemma}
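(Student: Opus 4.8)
The plan is to decompose $\E[\wt\ell_j(w)]$ using the unbiasedness of the non-truncated bag-level loss, obtain the upper bound for free from nonnegativity, and then bound the truncation error by combining a crude \emph{deterministic} bound on $\ell(w,z_j)$ with a concentration bound on the centroid deviation $\norm{\bar x_j - \mu_x}$. Concretely, since $\wt\ell_j(w) = \ell(w,z_j)\,\{\norm{\bar x_j-\mu_x}\le\theta\rho_x\} = \ell(w,z_j) - \ell(w,z_j)\,\{\norm{\bar x_j-\mu_x}>\theta\rho_x\}$, and $\ell(w,z_j)$ is an unbiased estimator of $\popl(w)$ (recall Eq.~(\ref{e:unbiasedness}), applied with $h(x)=w\cdot x$ and $p=\mu_y$), we get
$$
\E[\wt\ell_j(w)] - \popl(w) = -\,\E\!\left[\ell(w,z_j)\,\{\norm{\bar x_j-\mu_x}>\theta\rho_x\}\right].
$$
As $\ell(w,z_j)\ge 0$, the right-hand side is $\le 0$, giving the upper bound; it remains to show $\E[\ell(w,z_j)\,\{\norm{\bar x_j-\mu_x}>\theta\rho_x\}]\le 9k(\rho_w\rho_x+\rho_y)^2\delta$.

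For this, first note the deterministic bound: using $\norm{\bar x_j}\le\rho_x$, $\norm{\mu_x}\le\rho_x$, $\abs{\alpha_j}\le\rho_y$, $\abs{\mu_y}\le\rho_y$ and $\norm w\le\rho_w$, we have $\abs{w\cdot(\bar x_j-\mu_x)-(\alpha_j-\mu_y)}\le 2(\rho_w\rho_x+\rho_y)$ and $\abs{w\cdot\mu_x-\mu_y}\le\rho_w\rho_x+\rho_y$, hence $\ell(w,z_j)\le(4k+1)(\rho_w\rho_x+\rho_y)^2\le 9k(\rho_w\rho_x+\rho_y)^2$ since $k\ge1$. Therefore it suffices to prove $\PP(\norm{\bar x_j-\mu_x}>\theta\rho_x)\le\delta$. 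I would get this from the bounded-differences (McDiarmid) inequality applied to $f(x_1,\dots,x_k)=\norm{\frac1k\sum_{i=1}^k x_i - \mu_x}$: changing one coordinate changes $f$ by at most $2\rho_x/k$, so $\PP(f\ge\E f + t)\le\exp(-t^2 k/(2\rho_x^2))$. Also $\E f\le\sqrt{\E f^2}=\sqrt{\tfrac1k\E\norm{x-\mu_x}^2}\le\rho_x/\sqrt k$. Since $\delta=\rho_w^2\rho_x^2/(9km(\rho_w\rho_x+\rho_y)^2)\le 1/(9km)\le 1/9<e^{-1/2}$, the hypothesis $\theta\ge\sqrt{8\log(1/\delta)/k}$ gives $\theta\rho_x/2=\rho_x\sqrt{2\log(1/\delta)/k}\ge\rho_x/\sqrt k\ge\E f$, so choosing $t=\theta\rho_x-\E f\ge\theta\rho_x/2$ yields $\PP(f>\theta\rho_x)\le\exp(-\theta^2 k/8)\le\delta$. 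Combining, $\E[\ell(w,z_j)\,\{\norm{\bar x_j-\mu_x}>\theta\rho_x\}]\le 9k(\rho_w\rho_x+\rho_y)^2\delta$, which is the lower bound.

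The main obstacle is the centroid concentration step: we need a \emph{dimension-free} tail bound for $\norm{\bar x_j-\mu_x}$ (a union bound over directions would introduce a spurious $\sqrt d$ and break the bound in \cref{thm:sgd-main}), so a norm-concentration inequality for averages of bounded vectors — McDiarmid, or equivalently a vector Azuma/Hoeffding argument — is essential, and one must carefully track the constant $8$ in the hypothesis on $\theta$, which in turn relies on the easily-verified numerical fact $\delta<e^{-1/2}$ so that the $\E f$ term is absorbed into $\theta\rho_x$ with room to spare.
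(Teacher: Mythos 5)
Your proposal is correct and follows essentially the same route as the paper's proof: the same decomposition via unbiasedness of the non-truncated loss, the same deterministic bound $\ell(w,z_j)\le 9k(\rho_w\rho_x+\rho_y)^2$, and the same dimension-free tail bound $\PP(\norm{\bar x_j-\mu_x}>\theta\rho_x)\le\exp(-k\theta^2/8)\le\delta$. The only difference is that you derive the vector concentration explicitly (McDiarmid plus $\E\norm{\bar x_j-\mu_x}\le\rho_x/\sqrt k$, with the check $\delta\le 1/9<e^{-1/2}$ so the mean term is absorbed), whereas the paper simply cites a vector Hoeffding bound valid for $\epsilon\ge 2\rho_x/\sqrt k$ --- your derivation is precisely what that citation alludes to, and you make explicit the condition the paper leaves implicit.
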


\begin{proof}
First, we note that $\E[ \ell_j(w) ] = \popl(w)$; this follows directly from \cref{e:unbiasedness}. 
Further, for all $w$ such that $\norm{w} \leq \rho_w$ we have $\ell_j(w) \leq 9k (\rho_w\rho_x+\rho_y)^2$, 
hence
$$
\begin{aligned}
    \E[ \ell_j(w) ] - \E[ \wt\ell_j(w) ]
    &= 
    \E\Bigl[ \ell_j(w) \cdot \left\{   \norm{\bar x_j - \mu_x} > \theta \rho_x   \right\} \Bigl]
    \\
    &\leq
    9k (\rho_w\rho_x+\rho_y)^2\, \PP\bigl(\norm{\bar x_j - \mu_x} > \theta \rho_x\bigl)
    ~.
\end{aligned}
$$
To bound the probability on the right-hand side, note that $\bar x_j - \mu_x$ is an average of $k$ i.i.d.\ zero-mean random vectors bounded by $\rho_x$; from a vector Hoeffding bound~\citep[e.g.,][Example 6.3]{boucheron13concentration} it is straightforward to derive that for any $\epsilon \geq 2\rho_x/\sqrt{k}$ ,
$$
    \PP\Bigl(\norm{\bar x_j - \mu_x} > \epsilon\Bigl)
    \leq
    \exp\br*{-\frac{k\eps^2}{8\rho_x^2}}
    .
$$
Thus, for any $\theta \geq \sqrt{8\log(1/\delta)/k}$,
$$
    \PP\Bigl(\norm{\bar x_j - \mu_x}  > \theta \rho_x \Bigl)
    \leq
    \exp\br*{-\tfrac18 k\theta^2 }
    \leq
    \delta
    ,
$$
and therefore
$$
\begin{aligned}
    \E[ \ell_j(w) ] - \E[ \wt\ell_j(w) ]
    \leq
    9k (\rho_w\rho_x+\rho_y)^2 \delta
    ,
\end{aligned}
$$
which concludes the proof.
\end{proof}

Henceforth, we let $\zeta = (k \theta^2 + 1) \rho_x^2$ and fix $\theta = \sqrt{8\log(1/\delta)/k}$. 
Then by \Cref{lem:sgd-smooth}, the losses $\widetilde\ell_j$ are convex, nonnegative and $\zeta$-smooth.

\begin{lemma} \label{lem:sgd-regret}
For $0 < \eta \leq 1/(2\zeta)$ and for any $w^* \in \R^d$,
$$
    \sum_{j=1}^m \wt\ell_j(w_j) - \wt\ell_j(w^*)
    \leq
    \frac{\norm{w^*}^2}{\eta} + 2\zeta\eta \sum_{j=1}^m \wt\ell_j(w^*)
    ~.
$$
\end{lemma}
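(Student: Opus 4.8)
The plan is to recognize \cref{alg:sgd} as online projected gradient descent run on the loss sequence $\wt\ell_1,\ldots,\wt\ell_m$, and then to invoke the standard regret analysis for online gradient descent, specialized to convex, nonnegative, $\zeta$-smooth losses. The one point needing care is the treatment of the skipped bags: whenever $\norm{\bar x_j - \mu_x} > \theta\rho_x$ the truncated loss $\wt\ell_j$ is identically zero, so $\nabla\wt\ell_j(w_j) = 0$ and the ``skip'' step coincides with the gradient step $w_{j+1} = \Pi[w_j - \eta\nabla\wt\ell_j(w_j)] = \Pi[w_j] = w_j$ (the last equality because $\norm{w_j}\le\rho_w$). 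Hence all $m$ iterations may be analyzed uniformly as projected gradient updates on $\wt\ell_j$.

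Next I would carry out the usual one-step analysis. Fixing a comparator $w^*$ (which, in the intended application of this lemma, lies in the feasible set $\{w:\norm{w}\le\rho_w\}$), nonexpansiveness of the Euclidean projection gives
\[
\norm{w_{j+1}-w^*}^2 \leq \norm{w_j - w^*}^2 - 2\eta\,\nabla\wt\ell_j(w_j)^\top(w_j-w^*) + \eta^2\norm{\nabla\wt\ell_j(w_j)}^2.
\]
Convexity of $\wt\ell_j$ lower bounds $\nabla\wt\ell_j(w_j)^\top(w_j-w^*)$ by $\wt\ell_j(w_j)-\wt\ell_j(w^*)$, while \cref{lem:sgd-smooth} together with the self-bounding property of nonnegative smooth functions (the consequence of the descent lemma recalled above) gives $\norm{\nabla\wt\ell_j(w_j)}^2 \leq 2\zeta\,\wt\ell_j(w_j)$. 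Substituting both and rearranging yields
\[
(1-\eta\zeta)\bigl(\wt\ell_j(w_j)-\wt\ell_j(w^*)\bigr) \leq \tfrac{1}{2\eta}\bigl(\norm{w_j-w^*}^2 - \norm{w_{j+1}-w^*}^2\bigr) + \eta\zeta\,\wt\ell_j(w^*).
\]

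Finally, using $\eta\leq 1/(2\zeta)$ so that $1-\eta\zeta\geq \tfrac12$, I would divide through, sum over $j=1,\ldots,m$, telescope the $\norm{w_j-w^*}^2$ terms, and use the initialization $w_1=0$ to write $\norm{w_1-w^*}^2=\norm{w^*}^2$. This gives exactly
\[
\sum_{j=1}^m \bigl(\wt\ell_j(w_j)-\wt\ell_j(w^*)\bigr) \leq \frac{\norm{w^*}^2}{\eta} + 2\zeta\eta\sum_{j=1}^m \wt\ell_j(w^*),
\]
as claimed. The only genuinely nontrivial ingredient is the self-bounding inequality $\norm{\nabla\wt\ell_j(w_j)}^2\leq 2\zeta\wt\ell_j(w_j)$: it is what upgrades the vanilla $O\bigl(\eta\sum_j\norm{\nabla\wt\ell_j(w_j)}^2\bigr)$ error term into one proportional to $\sum_j\wt\ell_j(w^*)$, the form required downstream to extract fast rates when $L^\star$ is small. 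Beyond making sure the skipped steps fit the same template, I anticipate no real obstacle here — the rest is bookkeeping.
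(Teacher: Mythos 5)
Your proof is correct and follows essentially the same route as the paper: the standard projected online gradient descent regret bound (which you derive from the one-step nonexpansiveness argument rather than citing it) combined with the self-bounding inequality $\norm{\nabla\wt\ell_j(w_j)}^2\le 2\zeta\,\wt\ell_j(w_j)$ and the condition $\eta\zeta\le\tfrac12$ to absorb the $\wt\ell_j(w_j)$ terms. Your explicit observations that skipped bags correspond to $\wt\ell_j\equiv 0$ (so they fit the same update template) and that the comparator must lie in the feasible set are points the paper leaves implicit, but they do not change the argument.
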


\begin{proof}
Fix any $w^* \in \R^d$. By a standard regret bound for online gradient descent with convex losses $\tilde\ell_j$ \citep[e.g.,][]{shalev2014understanding,hazan2016introduction}, we have
$$
    \sum_{j=1}^m \br[\big]{\wt\ell_j(w_j) - \wt\ell_j(w^*)}
    \leq
    \frac{\norm{w^*}^2}{2\eta} + \frac{\eta}{2} \sum_{j=1}^m \norm{\nabla \wt\ell_j(w_j)}^2
    ~.
$$
Since $\ell_j$ is nonnegative and $\zeta$-smooth, it follows that $\norm{\nabla \wt\ell_j(w_t)}^2 \leq 2\zeta \wt\ell_j(w_j)$.  Thus,
$$
\begin{aligned}
    \sum_{j=1}^m \br[\big]{\wt\ell_j(w_j) - \wt\ell_j(w^*)}
    \leq
    \frac{\norm{w^*}^2}{2\eta} + \zeta\eta \sum_{j=1}^m \br[\big]{\wt\ell_j(w_j) - \wt\ell_j(w^*)} + \zeta\eta \sum_{j=1}^m \wt\ell_j(w^*)
    .
\end{aligned}
$$
Rearranging terms, we obtain
$$
    \sum_{j=1}^m \br[\big]{\wt\ell_j(w_j) - \wt\ell_j(w^*)}
    \leq
    \frac{\norm{w^*}^2}{2\eta(1-\zeta\eta)} + \frac{\zeta\eta}{1-\zeta\eta} \sum_{j=1}^m \wt\ell_j(w^*)
    ~.
$$
We conclude the proof by using $\eta\zeta \leq \tfrac12$ to upper bound the right-hand side.
\end{proof}

We are now ready to prove \cref{thm:sgd-main}.

\begin{proof}[Proof of \cref{thm:sgd-main}]
From \Cref{lem:sgd-regret} we have
$$
    \sum_{j=1}^m  \E\sbr{ \wt\ell_j(w_j) - \wt\ell_j(w^*) }
    \leq
    \frac{\rho_w^2}{\eta} + 2\zeta\eta \sum_{j=1}^m \E[\wt\ell_j(w^*)]
    .
$$
On the other hand, from \Cref{lem:sgd-loss}, we know that 
$$
    \E\sbr{\wt\ell_j(w_j)}
    \geq 
    \E\sbr{\popl(w_j)} - 9k (\rho_w\rho_x+\rho_y)^2 \delta
$$
(since $w_j$ is independent of the $j$'th bag) and
$$
    \E\sbr{\wt\ell_j(w^*)} 
    \leq \popl(w^*) 
    \leq L^\star
    .
$$
Putting together, dividing through by $m$, and using the convexity of $\popl(\cdot)$ we obtain
$$
\begin{aligned}
    \E\sbr*{ \popl(\bar w_m) } - \popl(w^*)
    &\leq
    \E\sbr*{ \frac1m \sum_{j=1}^m \popl(w_j) - \popl(w^*)}
    \\
    &\leq
    9k (\rho_w\rho_x+\rho_y)^2 \delta + \frac{\rho_w^2}{\eta m} + 2\zeta\eta L^\star
    .
\end{aligned}
$$
Optimizing the bound with respect to $\eta \in (0,\tfrac{1}{2\zeta}]$, we can obtain
$$
    \E\sbr*{ \popl(\bar w_m) } - \popl(w^*)
    \leq
    9k (\rho_w\rho_x+\rho_y)^2 \delta + \frac{4\zeta \rho_w^2}{m} + \sqrt{\frac{8\zeta \rho_w^2 L^\star}{m}}
    ,
$$
for a choice of stepsize $\eta = \min\cbr{\rho_w / \sqrt{2\zeta m L^\star}, 1/(2\zeta)}$.  (This is shown by considering two cases, according to whether $L^\star \leq 2\zeta \rho_w^2 / m$ or not.) 
To conclude, recalling that 
$\theta = \sqrt{8\log(1/\delta)/k}$ and 
$$
\zeta = (k \theta^2 + 1) \rho_x^2 = (8\log(1/\delta) +1) \rho_x^2 \leq 10\rho_x^2 \log(1/\delta)~,
$$ 
we have obtained the bound: 
$$
\begin{aligned}
    &\E\sbr*{ \popl(\bar w_m) } - \popl(w^*)
    \leq
    9k (\rho_w\rho_x+\rho_y)^2 \delta 
    +
    \frac{40\rho_x^2 \rho_w^2 \log(1/\delta)}{m} + \sqrt{\frac{80L^\star \rho_x^2 \rho_w^2  \log(1/\delta)}{m}}
    ~.
\end{aligned}
$$
Setting $\delta = \frac{\rho_w^2 \rho_x^2}{9km (\rho_w \rho_x + \rho_y)^2}$, we obtain 
\[
\E\sbr*{ \popl(\bar w_m) } - \popl(w^*)
    = \wt O\left( \frac{\rho_x^2 \rho_w^2}{m} + \sqrt{\frac{L^\star \rho_x^2 \rho_w^2}{m}}\right)~.
\]
Equating the right-hand side to $\beta$, and solving for $m$ gives the claimed result.
\end{proof}

\section{Experiment details and additional results}\label{a:experiments}

We include here further details about our experiments that have been omitted from the main body of the paper.

\paragraph{Datasets.} We use the following datasets:

\noindent\textit{MNIST Even vs Odd:} MNIST is a digit classification dataset consisting of 60,000 training examples and 10,000 test examples.
Each image in the data is a $28\times 28$ grayscale image.
We replace the original labels by the parity of the digit.
That is, even digits have label $0$, while odd digits have label $1$.
No other processing is performed on MNIST.

\noindent\textit{CIFAR-10 Animal vs Machine:}
CIFAR-10 is a dataset where the goal is to classify images into one of 10 categories consisting of 50,000 training examples and 10,000 test examples.
Each image in the data is a $32\times 32$ three channel image.
We replace the original labels by whether they are an animal or machine.
That is, the classes `bird', `cat', `deer', `dog', `frog', and `horse` are all mapped to label $0$, while classes `airplane`, `automobile`, `ship`, and `truck` are mapped to label $1$.
No other processing is performed on CIFAR-10.

\noindent\textit{Higgs:} The Higgs dataset consists of Monte-Carlo simulated particle accelerator data, where the goal is to distinguish between processes that create Higgs bosons and that do not.
Each example has 28 features consisting of raw simulated measurements and several human-designed higher level features.
The Higgs dataset has 11,000,000 examples.
We use the first 10,000 examples as test data, and the remaining examples as training data.

\noindent\textit{Adult:} The Adult dataset is derived from the US Census and the goal is to predict whether an individual's annual income exceeds \$50k per year.
The data contains 32,561 training examples and  16,281 test examples.
Each example consists of 14 features: numerical features `age', `fnlwgt', `education-num', `capital-gain', `captial-loss', `hours-per-week` and categorical features `workclass', `education', `marital-status', `occupation', `relationship', `race', `sex', and `native-country'.
We one-hot encode each categorical feature and shift and rescale each numerical feature so that its values fall in the interval $[0,1]$.

\paragraph{Models.} We use the following models:

\noindent\textit{CNN for MNIST and CIFAR-10:} For both MNIST and CIFAR-10, we use the following CNN architecture: 
\begin{itemize}
    \item A convolution layer with 32 filters, $3\times 3$ kernel, and ReLU activation.
    \item A max pooling layer with $2\times 2$ window and stride.
    \item A convolution layer with 64 filters, $3 \times 3$ kernel, and ReLU activation.
    \item A max pooling layer with $2 \times 2$ window and stride.
    \item A dropout layer with 50\% drop rate.
    \item A fully connected layer with a single output value.
\end{itemize}

\noindent\textit{Higgs Model:} We implement the same deep network model proposed by \citet{Baldi2014Higgs} (and also used by \citet{10.5555/3666122.3666778}), which consists of four fully connected hidden layers each with 300 units and relu activations, followed by a fully connected output layer with a single neuron.

\noindent\textit{Adult Model:}
We implement a simple neural network consisting of a single hidden layer with 32 neurons and a single output neuron.

\paragraph{Aggregate Losses.} Next we describe the aggregate losses used and several important implementation details.

\noindent\textit{Ours:} We simply use the unclipped version of our proposed loss given in \eqref{e:nonclipped_loss} together with the implementation details described below for estimating $\E[h(x)]$ and $p$.

\noindent\textit{Li et al.:} We use the loss
\[
\ell_\textsc{Li}(h, \bag, \alpha) = \frac{1}{k} \left(k\alpha - \sum_{x \in \bag}h(x)\right)^2 - (k-1)\cdot(\E[h(x)] - p)^2,
\]
together with the same implementation strategy for estimating $\E[h(x)]$ and $p$.

\noindent\textit{VanillaSQ and VanillaCE:} For a bag $\bag$ with label proportion $\alpha$, let $\hat \alpha = \frac{1}{k} \sum_{x \in \bag}h(x)$ denote the model's average prediction on $\bag$.
Then \textsc{VanillaSQ} is $(\hat \alpha - \alpha)^2$, while \textsc{VanillaCE} is $\ell_\text{ce}(\hat \alpha, \alpha)$, where $\ell_\text{ce}(\hat y, y) = -y\log(\hat y) - (1-y)\log(1-\hat y)$ is the binary cross-entropy loss.
    
\noindent\textit{EasyLLP:} Easy LLP corresponds to the loss
\[
\ell_\textsc{EasyLLP}(h, \bag, \alpha)
=
\frac{1}{k} \sum_{x \in \bag} \left(
w_1 \cdot \ell_{ce}(h(x), 1)
+
w_0 \cdot \ell_{ce}(h(x), 0)
\right),
\]
where $w_1 = k\alpha - (k-1)p$ and $w_0 = k(1-\alpha) - (k-1)(1-p)$.
We estimate $p$ in the same way as described below.

\noindent\textit{Implementation Details:}
The three aggregate losses \textsc{Ours}, \textsc{Li et al.}, and \textsc{EasyLLP} all require access to $p = \E_{(x,y) \sim \cD}[y]$.
In all cases, we estimate $p$ by taking the average label on the complete training data (which is equivalent to averaging the bag label proportions).
The losses \textsc{Ours} and \textsc{Li et al.} both involve the average prediction of the current model $h$, $\E[h(x)]$.
Given a minibatch that contains several bags, when computing the loss estimates on bag $i$, we estimate $\E[h(x)]$ to be the model's average prediction on the other bags.
In particular, this provides an unbiased estimate of $\E[h(x)]$ that is uncorrelated with the other terms in the loss definition, which is important since correlations could eliminate the variance reduction achieved by the loss.

\paragraph{Additional Results.}
\Cref{fig:allResults} shows our results on all datasets, including Adult.
\begin{figure*}
    \centering
    \includegraphics[width=0.35\textwidth]{cifar10_20_fixed.pdf}
    \includegraphics[width=0.35\textwidth]{cifar10_200_fixed.pdf}
    \\
    \includegraphics[width=0.35\textwidth]{higgs_5_fixed.pdf}
    \includegraphics[width=0.35\textwidth]{higgs_50_fixed.pdf}
    \\
    \includegraphics[width=0.35\textwidth]{mnist_20_fixed.pdf}
    \includegraphics[width=0.35\textwidth]{mnist_200_fixed.pdf}
    \\
    \includegraphics[width=0.35\textwidth]{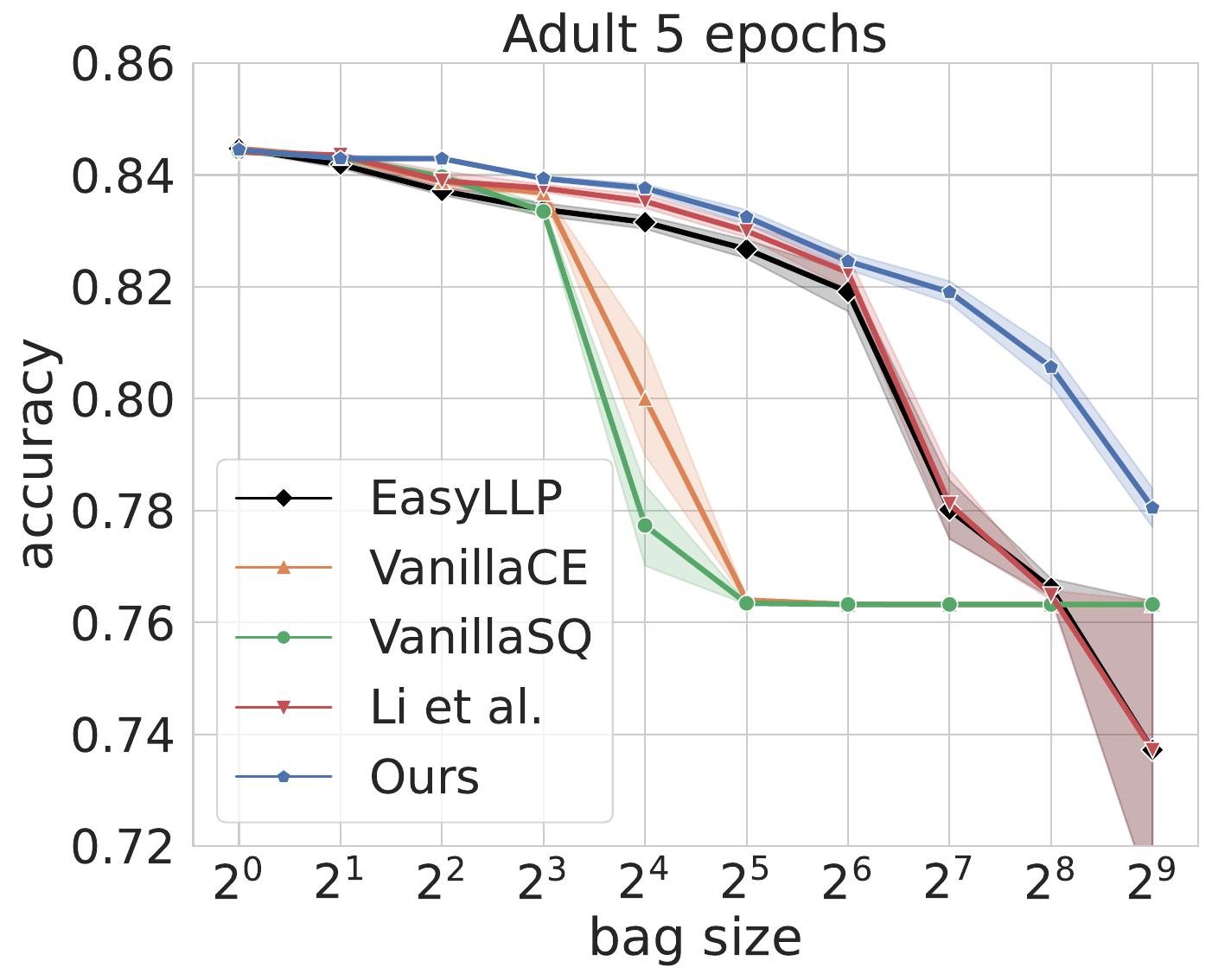}
    \includegraphics[width=0.35\textwidth]{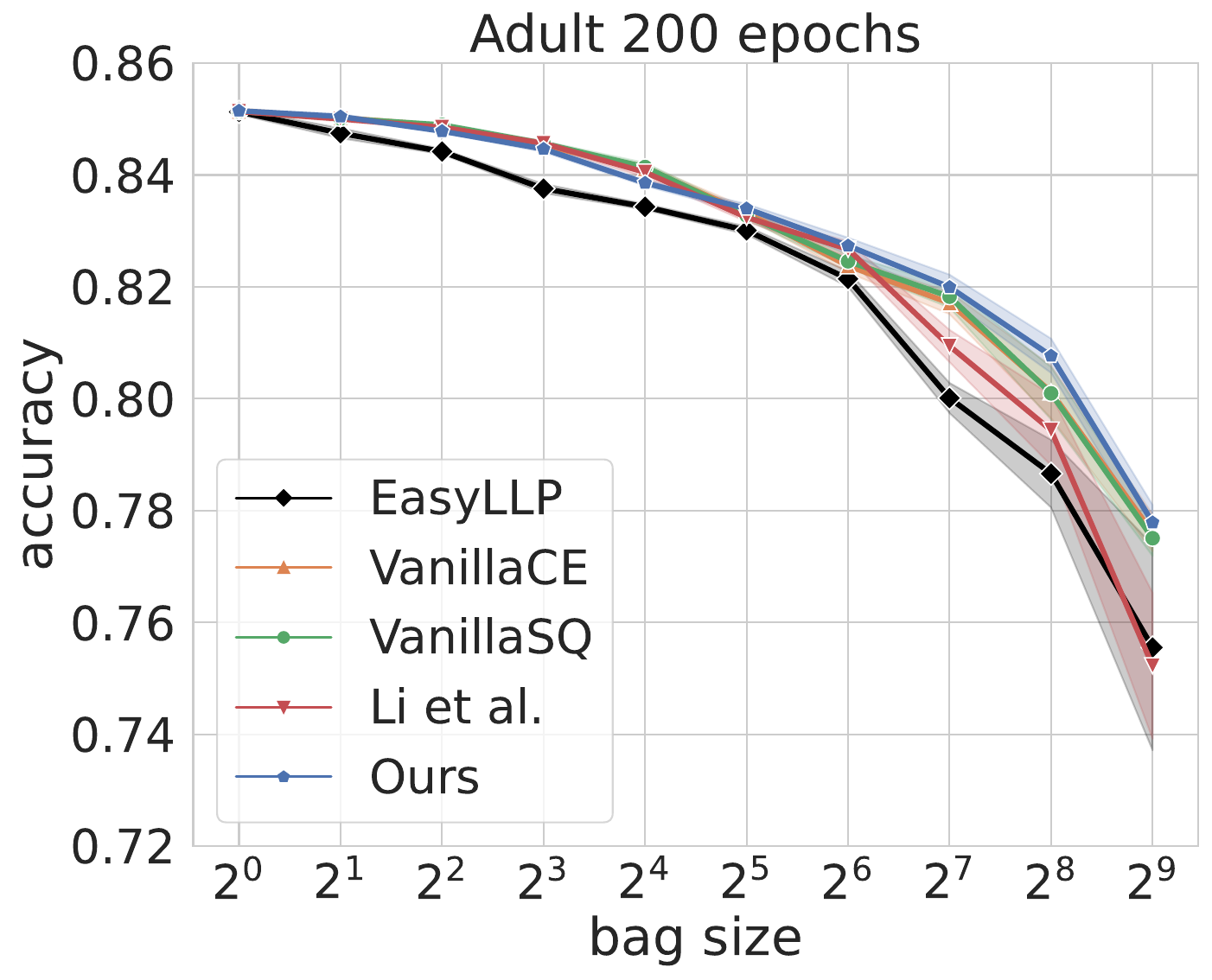}    
    \caption{Plots showing the final test accuracy of models trained on each dataset as a function of the LLP bag size and number of training epochs. Error bars show one standard error in the mean over 10 repetitions.
    Each data point represents the accuracy achieved by a learning rate tuned for that bag size and loss.}
    \label{fig:allResults}
\end{figure*}


\end{document}